\documentclass[]{onethree}

\usepackage[toc,page,header]{appendix}
\usepackage{minitoc}
\usepackage{amsmath,amssymb}
\usepackage{bm}
\usepackage{wrapfig}
\usepackage{enumitem}
\usepackage[utf8]{inputenc} %
\usepackage[T1]{fontenc}    %
\usepackage{hyperref}       %
\usepackage{url}            %
\usepackage{booktabs}       %
\usepackage{amsfonts}       %
\usepackage{nicefrac}       %
\usepackage{microtype}      %
\usepackage{lipsum}		%
\usepackage{graphicx}
\usepackage{natbib}
\usepackage{doi}
\usepackage{booktabs,tabularx,subcaption,multirow,xcolor,colortbl,adjustbox,longtable}

\usepackage{amsthm}
\usepackage{float}
\newtheorem{lemma}{Lemma}
\newtheorem{theorem}{Theorem}

\usepackage{pdfpages}

\title{Learning from Fleet Experience: \\ Collective Experience Learning across 96 Robots}

\author{Tsinghua University}
\author{AgiBot}
\author{Infinigence AI}

\affiliation{For the complete list of authors, please refer to the \protect\nameref{sec:contibution} section}

\checkdata[Resources]{
  \href{https://cel-robot-learning.github.io}{\raisebox{-1ex}{\includegraphics[height=4.2ex]{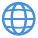}}~Project Page}\hspace{2em}

}

\begin{document}
\includepdf[
    pages=1,
    fitpaper=false,
    pagecommand={\thispagestyle{empty}}
]{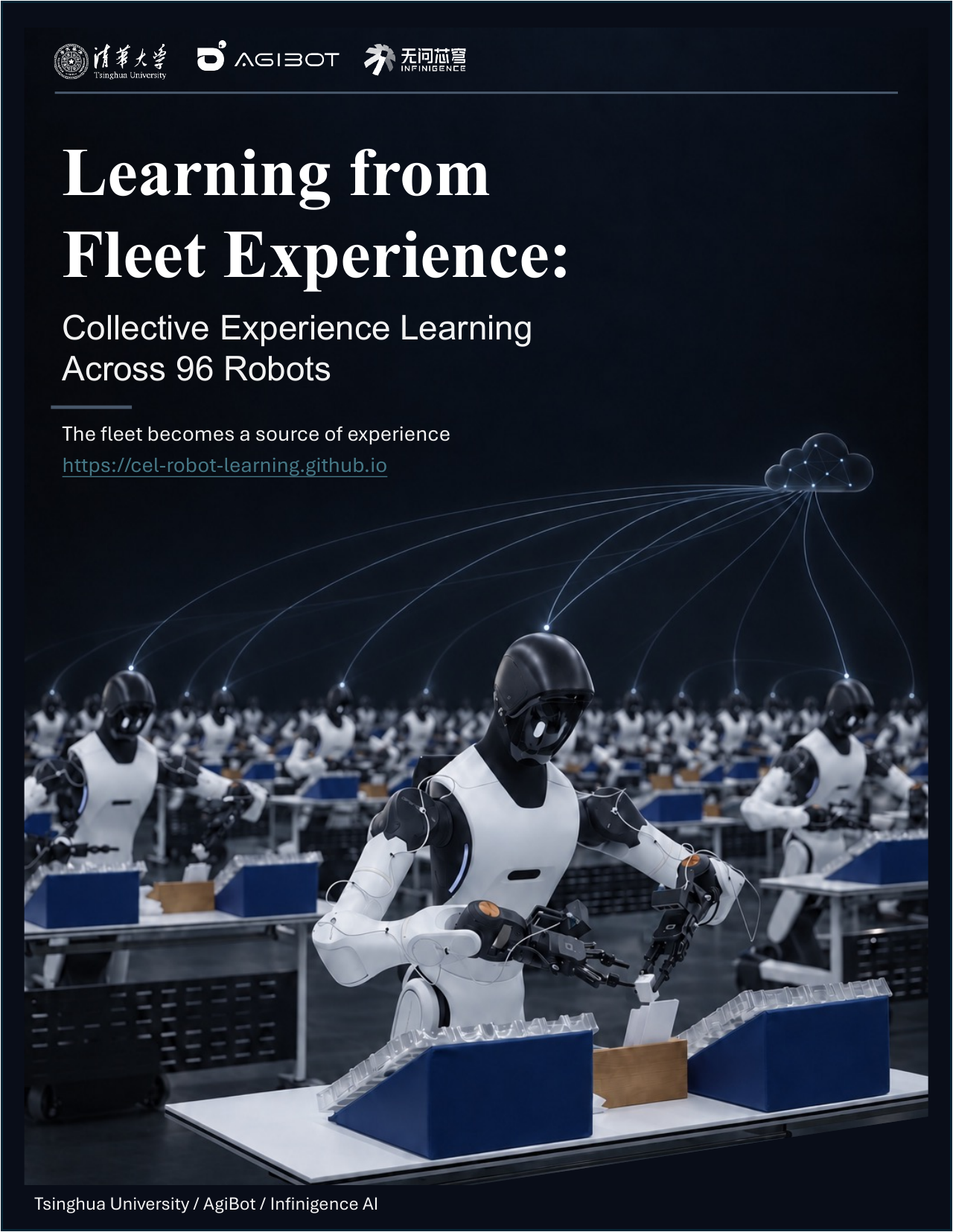}

\includepdf[
    pages=1,
    fitpaper=false,
    pagecommand={\thispagestyle{empty}}
]{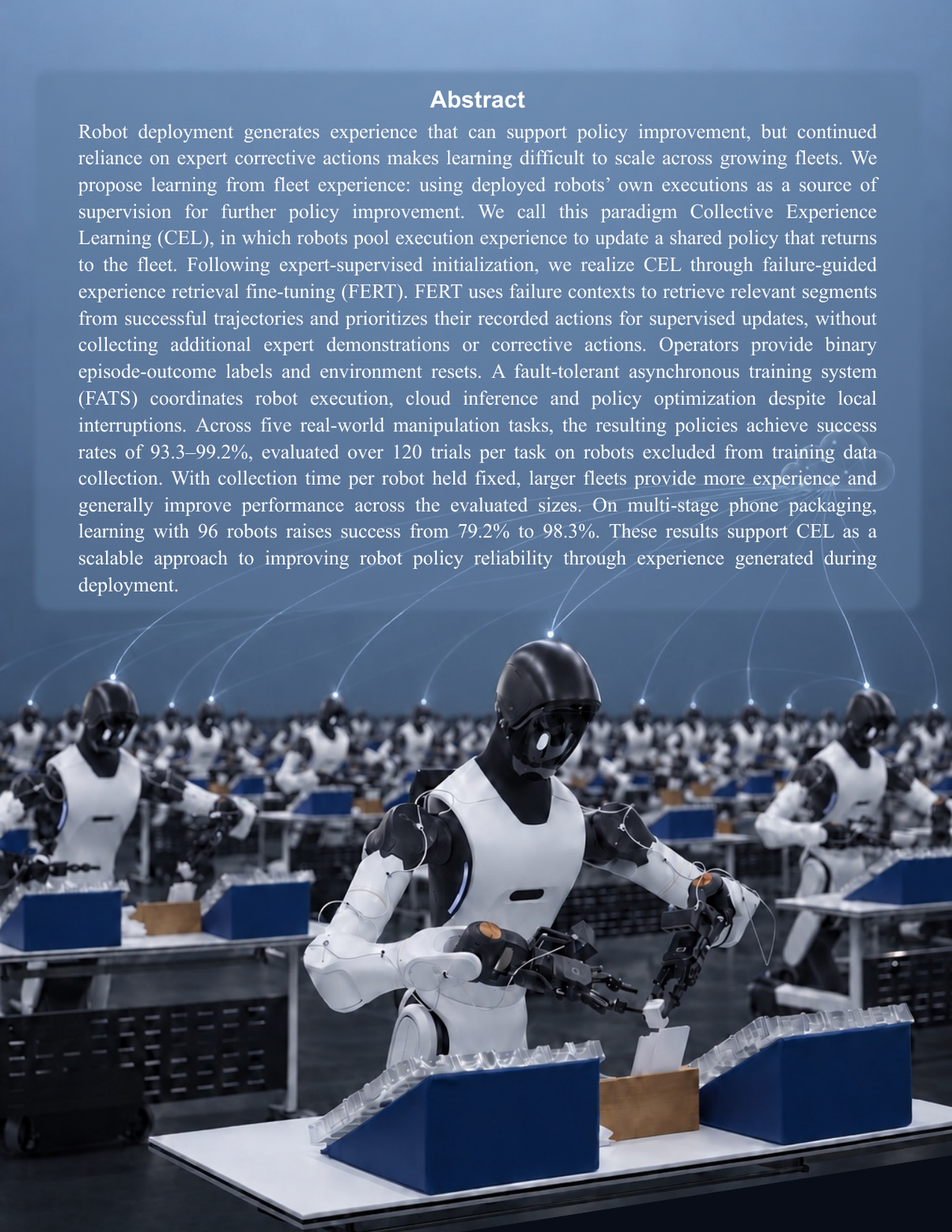}

\setcounter{page}{2}


\newpage

\tableofcontents
\newpage

\section{Introduction}
\begin{figure}[htbp]
  \centering
  \includegraphics[width=\linewidth]{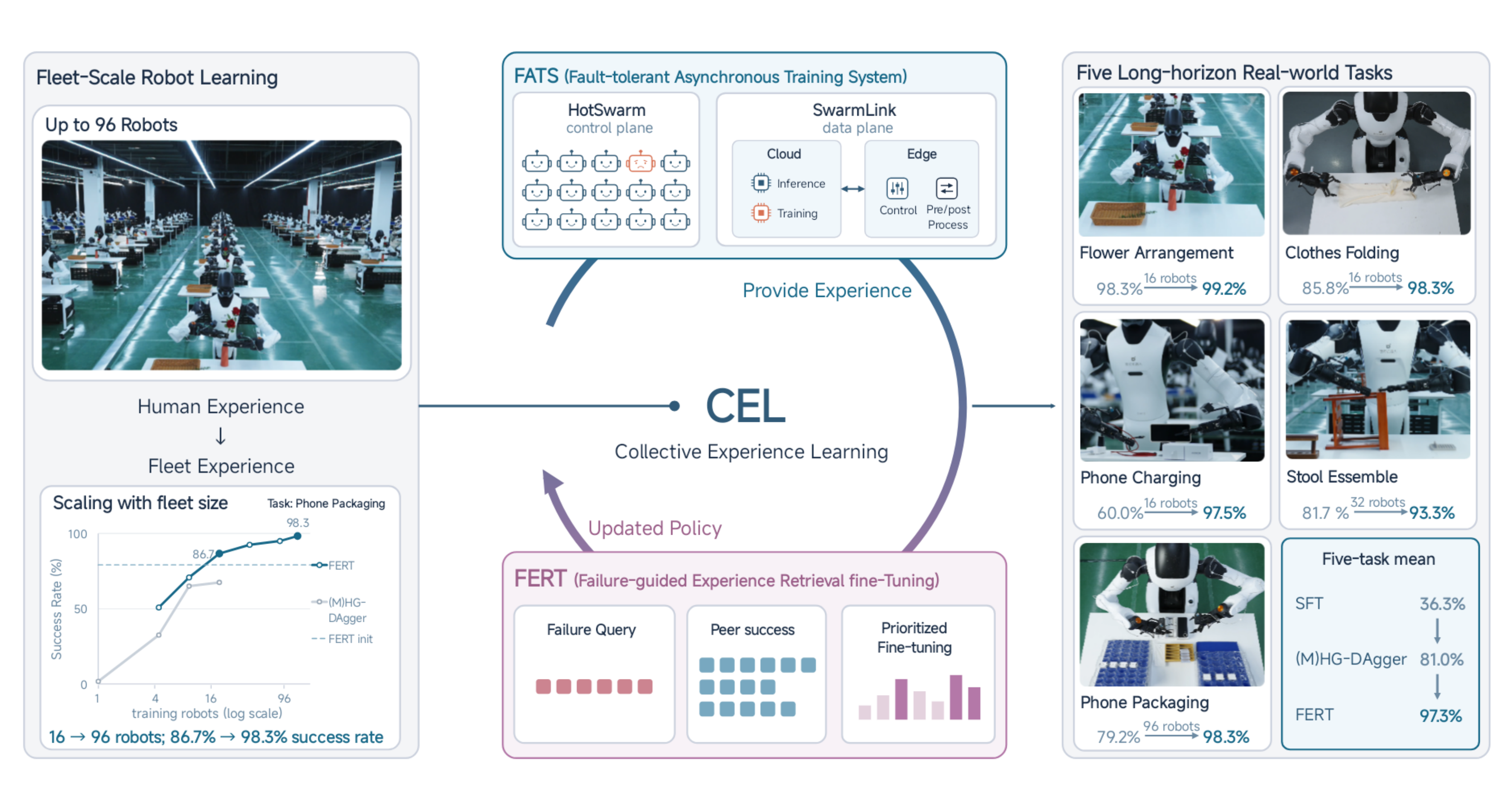}
  \caption{Overview of Collective Experience Learning (CEL). We propose a new training paradigm CEL, so that robots improve from fleet experience rather than continued human experience, enabling large-scale online training with up to 96 robots. We develop a training algorithm FERT under CEL paradigm, which uses failure contexts to retrieve relevant successful segments and prioritize their recorded actions for supervised updates. To support scalable online training, FATS coordinates fault-tolerant asynchronous execution, cloud inference and policy optimization. Across five real-world manipulation tasks, policies improve after FERT, and success rates generally scale with fleet size.}
  \label{fig:overview}
\end{figure}

Robot foundation models provide initial capabilities by translating visual and language understanding into physical action~\citep{brohan2022rt, brohan2023rt, kim2024openvla, team2024octo, black2024pi0}, but these capabilities do not automatically yield reliable execution~\citep{o2024open, khazatsky2024droid, bu2025agibot}. Variations in object properties, contact conditions and individual robots expose policy weaknesses, particularly when errors accumulate over multi-stage tasks~\citep{laskey2017dart, zhao2023learning}. Real-world execution also generates experience that can support further policy improvement~\citep{chen2025conrft, intelligence2025pi06vlalearnsexperience}. As deployment expands, parallel robot operation provides a growing source of such experience~\citep{levine2018learning, kalashnikov2018scalable, kalashnikov2021mt, pan2026sop, wang2026lwd}. The challenge is to turn this experience into more reliable task execution.

A central constraint is the human effort required to supply action supervision. Expert demonstrations and corrective interventions have proved useful for improving policies during real-world execution~\citep{ross2011reduction,kelly2019hg,luo2024hilserl,liu2025robot,intelligence2025pi06vlalearnsexperience,hoque2021thriftydagger}, and recent fleet-learning systems combine parallel collection with human supervision~\citep{hoque2023fleet,pan2026sop,wang2026lwd}. When policy improvement depends on new expert actions, teleoperators usually remain available to monitor execution and take control when corrections are needed. This capacity is difficult to expand alongside robot interaction~\citep{hoque2023fleet,hoque2021thriftydagger}, motivating a learning process that draws subsequent action supervision from the fleet itself.

We propose \textbf{learning from fleet experience}: shifting the source of action supervision for continued policy improvement from newly collected expert actions to deployed robots' own executions. We call this paradigm \onethreeblue{Collective Experience Learning} (CEL), in which robots pool execution experience to improve a shared policy that returns to the fleet, as shown in ~\Cref{fig:overview}. This closes the loop between distributed execution and shared policy improvement, allowing experience generated by individual robots to benefit the fleet. Following expert-supervised initialization, our realization of CEL requires no additional expert demonstrations or corrective actions.

The algorithmic challenge is to identify which experience addresses the policy's current weaknesses. Collecting more successful trajectories alone does not determine which behaviours should receive greater emphasis. We develop \textbf{Failure-guided Experience Retrieval fine-Tuning} (FERT), which uses contexts from failed executions to retrieve relevant segments from a shared buffer of successful trajectories. The retrieved segments receive increased sampling priority, and their recorded actions serve as targets for supervised policy updates. Successful executions thus supply actions to imitate, while failures guide the allocation of learning effort. In our experiments, operators provide binary episode-outcome labels and reset environments; the actions used for FERT updates come from autonomous robot execution.

The system challenge is to sustain this learning process when robots progress at different rates and pause for resets, maintenance or recovery~\citep{pan2026sop,wang2026lwd}. We develop a \textbf{Fault-tolerant Asynchronous Training System} (FATS) that coordinates robot execution, cloud inference and policy optimization without requiring a common collection schedule. Its data plane, SwarmLink, pools inference requests and sends each episode to training as one record once it is labeled. Its control plane, HotSwarm, restarts failed robot workers in place while available robots continue collecting experience.

We evaluate CEL through FERT and FATS across five real-world manipulation tasks involving deformable objects, precise alignment, tool use and multi-stage execution, with training fleets of up to \onethreeblue{96 robots}. Each evaluated policy is tested over 120 trials, split equally between two robots excluded from training data collection. At the largest evaluated fleet size for each task, the resulting policies achieve \textbf{success rates of 93.3--99.2\%}. With collection time per robot held fixed, larger fleets provide more experience and generally improve performance across the evaluated sizes. On multi-stage phone packaging, learning with 96 robots raises success from 79.2\% to 98.3\%. During the 96-robot run, local fault events occurred on average every 22 minutes, yet fleet-level experience collection continued, with fault-related downtime accounting for only 1.3\% of total robot time. Together, these results support CEL as a scalable learning paradigm in which expanding robot deployment provides a growing source of experience for improving shared policies, without requiring a corresponding expansion of expert action supervision.

\begin{figure}[htbp]
    \centering
    \includegraphics[width=0.85\linewidth]{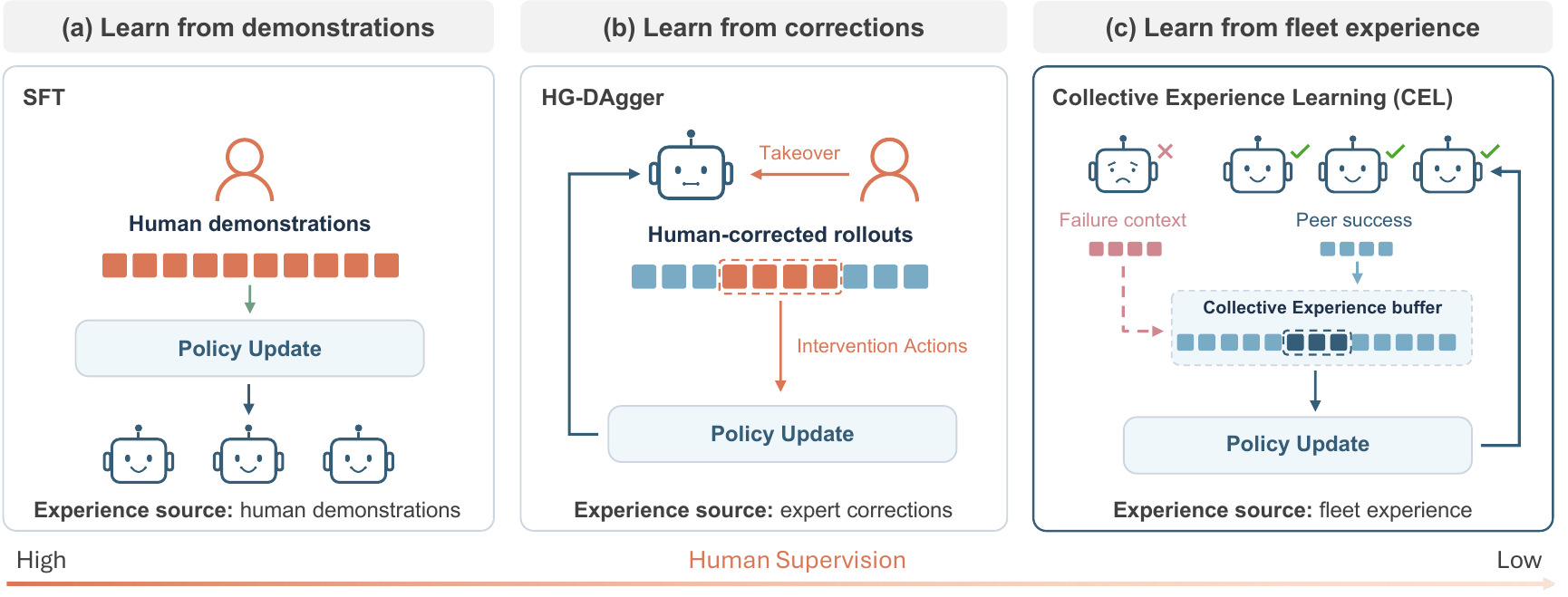}
    \caption{Sources of action supervision for robot policy learning: human demonstrations, expert corrections and fleet experience. CEL uses robot-generated actions for further policy improvement after expert-supervised initialization. Our implementation retains human-provided episode-outcome labels and environment resets.}
    \label{fig:cel}
\end{figure}

\section{Preliminaries}
\label{sec:preliminaries}

\subsection{Robot policy}
\label{sec:vla_training}

Vision-language-action models combine visual observations, language instructions and action generation~\cite{kim2024openvla,black2024pi0}. We consider a robot policy conditioned on visual observations, proprioceptive measurements and a language instruction. Let $o_t$ denote the robot observation at timestep $t$ and $g$ the task instruction. We write the policy as $\pi_\theta(A_t\mid o_t,g)$, where $A_t=(a_t,\ldots,a_{t+H-1})$ is an action chunk with prediction horizon $H$ and $\theta$ denotes the policy parameters.

Our implementation uses $\pi_0$, a flow-based vision-language-action model that generates continuous action chunks from these inputs~\cite{black2024pi0}.

\subsection{Problem setting}
\label{sec:problem_setting}

We consider policy learning during deployment for a task specified by a
language instruction $g$, using experience collected during
robot fleet operation. Robots assigned to the same task
use a shared policy, although individual robots may execute
different policy versions as learning proceeds.

Each execution produces a trajectory
$\tau=\{(o_t,a_t)\}_{t=0}^{T-1}$, where $o_t$ contains multi-view
images and proprioceptive measurements, $a_t$ is the executed
action, and $T$ is the episode length. A binary outcome
$y(\tau)\in\{0,1\}$ indicates task success. We suppress the
dependence of $y$ on $g$ for brevity. This outcome evaluates the execution without providing expert action targets. In our FERT experiments, operators supply outcome labels and reset environments; the actions used for subsequent learning come from robot execution.

For a fixed task $g$, our objective is to maximize the
probability of successful execution, which is the measure of policy reliability used in our task evaluations:
\begin{equation}
    J(\theta)
    =
    \mathbb{E}_{\tau\sim\pi_\theta}
    \left[y(\tau)\right].
    \label{eq:task_objective}
\end{equation}
Here, $\tau\sim\pi_\theta$ denotes a trajectory generated by executing the policy in the environment. We suppress task conditioning for brevity.

Experience collected across the fleet comprises execution
trajectories and their associated outcomes, potentially
generated by different policy versions. The learning algorithm
determines which trajectories are retained and how they
contribute to policy updates.

Starting from a task-initialized policy, CEL uses fleet-generated experience to improve
the policy without collecting additional expert demonstrations
or corrective actions.
Expert data may be used to establish the initial policy.
CEL specifies how fleet-generated experience is shared for continued policy improvement. FERT is the concrete learning algorithm studied here, and FATS supplies the execution and communication infrastructure. The task-success objective states what the policy should improve; it does not prescribe a particular reinforcement-learning or supervised-learning update.

\section{Collective Experience Learning}
\label{sec:cel}

Collective Experience Learning (CEL) is a learning paradigm in which robots share experience generated during deployment to improve a common policy. Updated policies return to the fleet for subsequent execution, connecting experience collection and shared policy improvement in a closed loop. \Cref{fig:cel} illustrates the different sources of experience for policy learning: learning from human demonstrations, learning from expert corrections, and learning from fleet experience generated by deployed robots. The requirement for expert action supervision decreases from left to right.

We realize CEL through an algorithm and a training system. Following expert-supervised initialization, failure-guided experience retrieval fine-tuning (FERT) draws action targets from successful robot trajectories and uses failure contexts to prioritize relevant successful experience for supervised policy updates, without collecting additional expert demonstrations or corrective actions. In this realization, expanding experience collection does not require a corresponding expansion of expert teleoperation. Binary episode-outcome labels distinguish successful and failed executions; humans provide these labels and reset environments, while subsequent action supervision comes from robot executions. The fault-tolerant asynchronous training system (FATS) coordinates robot execution, cloud inference and policy optimization across robot and computing resources. Its asynchronous data paths and local recovery support the learning loop despite differences in execution speed and interruptions to individual workers.

\begin{figure}[htbp]
    \centering
    \includegraphics[width=0.9\linewidth]{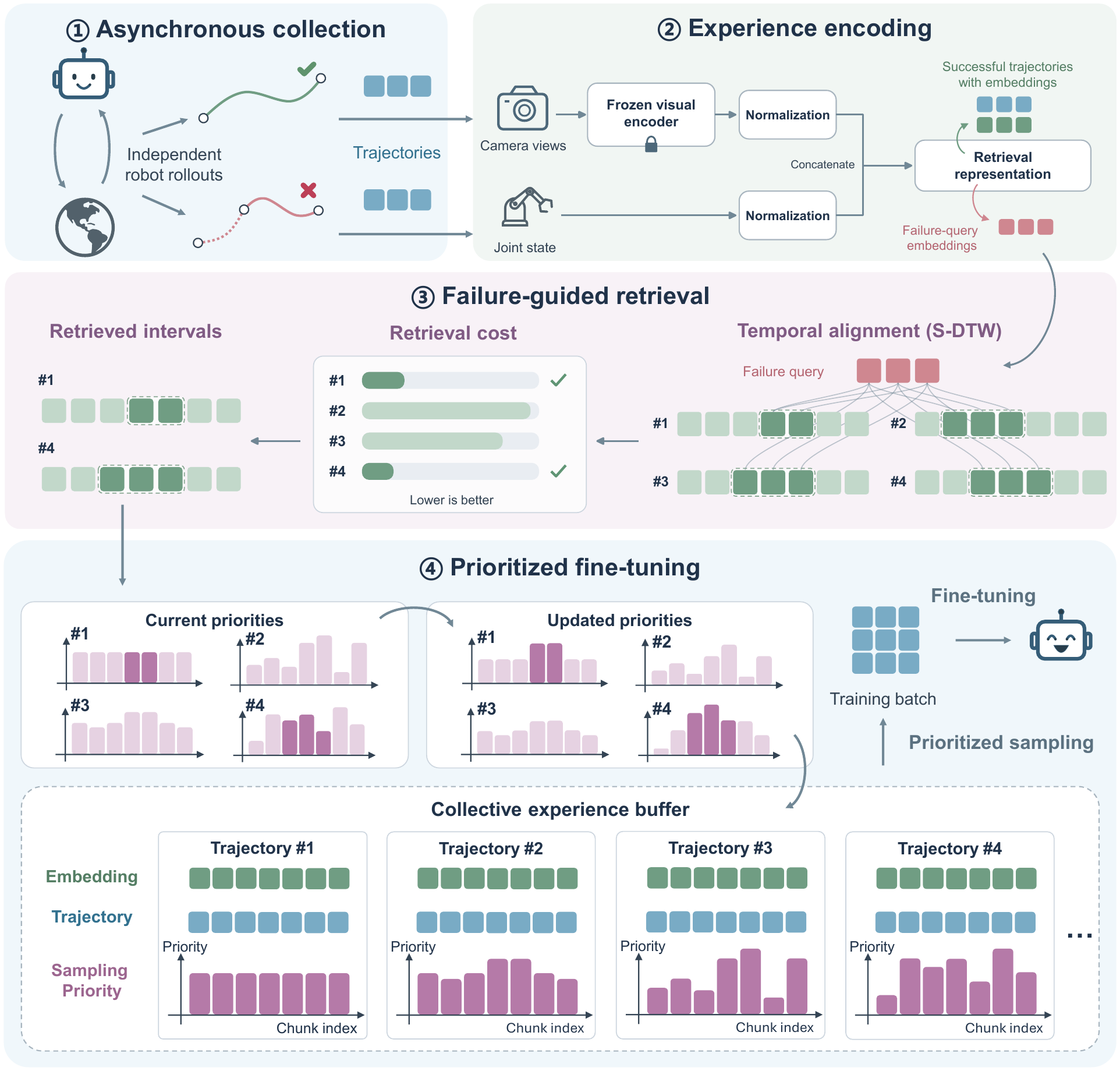}
    \caption{\textbf{Failure-guided experience retrieval fine-tuning.} Robots asynchronously collect autonomous experience with binary task-outcome labels. Observations are encoded using visual and proprioceptive features. Failure contexts retrieve related segments from successful trajectories, and these segments receive increased sampling priority during policy fine-tuning. Updated policies return to the fleet to generate further experience.}
    \label{fig:FERT}
\end{figure}
\subsection{Algorithm: Failure-guided Experience Retrieval Fine-Tuning (FERT)}
\label{sec:algorithm}

FERT realizes CEL by updating the policy from experience generated during deployment, without collecting further expert demonstrations or corrective actions. It organizes autonomous execution data for retrieval and supervised policy fine-tuning. Binary task outcomes distinguish successful executions from failures, but do not directly specify which actions would resolve the policy's difficulties. FERT addresses this problem by using successful experience as a source of action targets and failure contexts as queries for selecting relevant successful segments. Increasing the sampling priority of these segments directs learning towards behaviour related to the policy's current difficulties. Successes therefore supply the actions to imitate, while failures determine where learning concentrates.

In our experiments, FERT starts from the selected policy obtained through SFT and multi-robot HG-DAgger (\Cref{app:policy_preparation}). This separates expert-supervised initialization from subsequent learning using fleet-generated actions. 
FERT comprises four logical steps: asynchronous experience collection, experience encoding, failure-guided retrieval and prioritized fine-tuning (\Cref{fig:FERT}). Collection proceeds concurrently with optimization, while incoming experience is incorporated through buffer insertion, retrieval and priority updates between training updates. Updated policies return to the fleet and generate further experience.

\subsubsection{Step 1: Asynchronous experience collection}

FERT continuously collects experience through autonomous
fleet execution. Each completed trajectory $\tau$ receives
a binary task outcome $y(\tau)$, following the setting in
\Cref{sec:problem_setting}. 

Successful trajectories collected across the fleet are retained
in a collective experience buffer $\mathcal{D}^{+}$, which supplies
action targets for policy fine-tuning. Failed trajectories are used as temporary retrieval queries; their actions are not used as supervision. This separation allows FERT to use failures to guide learning without imitating unsuccessful actions. The active buffer retains a rolling window of successful experience, bounding the history available for training and retrieval.

Completed trajectories are received asynchronously, allowing robots to contribute experience at different rates while policy optimization continues. Buffer insertion, retrieval and priority updates are performed between training updates. Data balancing regulates uneven accumulation across training workers.

\subsubsection{Step 2: Experience encoding}

To retrieve useful behaviour across heterogeneous executions, FERT requires a representation that captures both the scene and the robot's configuration. We therefore combine visual features with proprioceptive state. The visual encoder $f_{\bar{\theta}}$ is a frozen copy of vision tower in the vision-language-action model, initialized from the checkpoint at the start of this stage. Freezing the encoder keeps incoming failure queries and cached successful trajectories in a consistent representation space as the policy is updated.

For each camera view, the encoder produces a $16\times16$ grid of visual tokens. We apply spatial average pooling with a $4\times4$ kernel and stride four, retaining a $4\times4$ grid of pooled features per view. These features are flattened and concatenated in a fixed camera order:
\begin{equation}
    h_t=\text{Concat}_{c=0}^{C-1}\left[\text{vec}\left(\text{Pool}\left(f_{\bar{\theta}}(I_t^{(c)})\right)\right)\right].
\end{equation}
Here, $C$ is the number of camera views and $I_t^{(c)}$ is the preprocessed image from camera $c$. Retaining a coarse spatial grid preserves spatial information while reducing the embedding length.

Let $s_t$ denote the normalized joint state. The retrieval representation is
\begin{equation}
    z_t=\left[\frac{h_t}{\max(\|h_t\|_2,\epsilon)};\frac{s_t}{\max(\|s_t\|_2,\epsilon)}\right],
    \label{eq:fert_representation}
\end{equation}
where $\epsilon$ prevents division by zero. Separate normalization places the visual and state components on comparable scales. This combination provides both scene and posture context when comparing experience collected across physical robots.

Embeddings of complete successful trajectories are computed upon ingestion and cached with their trajectories. For a failed trajectory, only the failure-context query clip defined in Step 3 is encoded.

\subsubsection{Step 3: Failure-guided retrieval}
\label{sec:ret}

A binary failure label does not precisely locate the onset of failure. FERT therefore uses a short observation sequence preceding the recorded endpoint as a retrieval query, providing temporal context for finding related successful behaviour without requiring frame-level failure annotations. A failure context from one robot can retrieve successful
experience contributed by other robots through the collective experience buffer.

We construct a query $Q_\tau$ from observations preceding the endpoint of a failed trajectory $\tau$:
\begin{equation}
    e=T-\delta,\qquad b=\max(0,e-L),\qquad Q_\tau=(o_b,\ldots,o_{e-1}),
    \label{eq:fert_query}
\end{equation}
where $L$ is the maximum query length and $\delta$ is an endpoint offset. The offset allows the query to exclude terminal frames that may reflect a delayed failure indication or behaviour after the task has already failed. Empty queries are skipped. We use $L=90$ and $\delta=10$ frames, corresponding to up to three seconds of context ending approximately one-third of a second before the recorded endpoint at $30\,\mathrm{Hz}$.

Given a failure query $Q_{\tau_f}=(o_b,\ldots,o_{e-1})$, we retrieve temporally aligned segments from the experience buffer $\mathcal{D}^{+}$ using slope-constrained subsequence dynamic time warping (S-DTW)~\citep{memmel2025strap}. Let $z_i^{\tau_f}$ be the embeddings of the failure query, where $i=0,\ldots,M-1$ and $M=e-b$. For each available successful trajectory $\tau_s\in\mathcal{D}^{+}$, let $(z_0^{\tau_s},\ldots,z_{N_{\tau_s}-1}^{\tau_s})$ denote its cached embeddings. We compute the framewise distance matrix
\begin{equation}
    d_{i,j}^{\tau_s}=\left\|z_i^{\tau_f}-z_j^{\tau_s}\right\|_2,\qquad 0\leq i<M,\quad 0\leq j<N_{\tau_s}.
    \label{eq:fert_frame_distance}
\end{equation}
The distance is evaluated on the concatenated representation in \Cref{eq:fert_representation}, jointly incorporating visual and proprioceptive similarity.

An alignment is represented by a sequence of candidate indices $p=(p_0,\ldots,p_{M-1})$, where query frame $i$ is matched to candidate frame $p_i$. We constrain the alignment to satisfy
\begin{equation}
    0\leq p_0,\qquad p_{M-1}<N_{\tau_s},\qquad p_i-p_{i-1}\in\{1,2\},\quad i=1,\ldots,M-1.
    \label{eq:fert_alignment_constraint}
\end{equation}
Thus, each query frame matches exactly one candidate frame, and successive matches advance by one or two frames in the candidate. The starting and ending positions are unrestricted. This permits bounded temporal stretching of the query within a longer successful sequence while preserving temporal order.

For each candidate, S-DTW finds the minimum-cost alignment through dynamic programming. Let $D_{i,j}^{\tau_s}$ be the minimum cumulative distance for matching query frames $0,\ldots,i$, with frame $i$ aligned to candidate frame $j$. We recursively define
\begin{equation}
    D_{i,j}^{\tau_s}=
    \begin{cases}
        d_{0,j}^{\tau_s}, & i=0,\\[2pt]
        d_{i,j}^{\tau_s}+\min\left\{D_{i-1,j-1}^{\tau_s},D_{i-1,j-2}^{\tau_s}\right\}, & i>0,
    \end{cases}
    \label{eq:fert_sdtw}
\end{equation}
where out-of-range or unreachable entries have infinite cost. Initializing the first row with the framewise distances allows retrieval to start at any candidate frame without penalizing the preceding portion of the trajectory.

The best endpoint and normalized retrieval cost are
\begin{equation}
    j_{\tau_s}^{*}=\arg\min_j D_{M-1,j}^{\tau_s},\qquad c_{\tau_s}=\frac{D_{M-1,j_{\tau_s}^{*}}^{\tau_s}}{M}.
    \label{eq:fert_matching_cost}
\end{equation}
The cost represents the average distance over the $M$ aligned frame pairs. Backtracking through the selected predecessors yields the optimal path $p^{*}$ and its enclosing half-open interval $[l_{\tau_s},r_{\tau_s})$, where $l_{\tau_s}=p_0^{*}$ and $r_{\tau_s}=p_{M-1}^{*}+1$ in $\tau_s$. The interval includes intermediate candidate frames skipped by two-frame transitions, although only aligned frame pairs contribute to the retrieval cost.

For retrieval event $n$, we retain one best interval for each $\tau_s\in\mathcal{D}^{+}$ and select the $K$ candidates with the lowest costs. Let $\mathcal{K}_n$ denote this selected set. If fewer than $K$ eligible candidates are available, all are retained. Each selected interval is then expanded to include adjacent actions and form the final retrieved set:
\begin{equation}
    R_n=\left\{\left[\max(0,l_{\tau_s}-h_{\mathrm{pre}}),\min(N_{\tau_s},r_{\tau_s}+h_{\mathrm{post}})\right),\ \tau_s\in\mathcal{K}_n\right\}.
    \label{eq:fert_interval_extension}
\end{equation}
Expansion does not change the retrieval costs or ranking. We typically use $K=10$ and $h_{\mathrm{pre}}=h_{\mathrm{post}}=30$ frames. The retrieved set is passed to the sampling-priority update in Step 4. If no valid retrieval is available, existing priorities remain unchanged.

To prevent retrieval from slowing policy updates, we downsample both the query sequence $z_i^{\tau_f}$ and the successful sequence $z_j^{\tau_s}$ before retrieval. We typically use a downsampling factor of three, yielding a theoretical $9\times$ speedup. In practice, this achieves approximately a $6\times$ speedup.

\subsubsection{Step 4: Prioritized fine-tuning}
\label{sec:weighting}

Successful trajectories supply candidate action targets, but their relevance to the policy's current difficulties varies. FERT converts retrieved intervals into sampling priorities, allowing recurring failure contexts to direct more updates towards related successful behaviour.

Let $A_i$ denote the successful action chunk beginning at dataset index $i$, with nominal horizon $H$, and let $\textrm{idx}\left(A_i\right)$ denote the frame interval it covers. We update the chunk’s priority as
\begin{equation}
    w_i^{(n)}=\rho w_i^{(n-1)}+\sum_{R\in R_n}\frac{\left|\textrm{idx}\left(A_i\right)\cap R\right|}{H},
    \label{eq:fert_weight_update}
\end{equation}
where $\rho\in[0,1]$ controls the retention of previous priorities. Overlap is computed within the same trajectory, so that $\left|\textrm{idx}\left(A_i\right)\cap R\right|$ is zero when $A_i$ and $R$ belong to different $\tau_s$.

The overlap fraction assigns full increments to chunks contained in a retrieved interval and partial increments to chunks crossing its boundaries. Repeated retrieval reinforces successful actions associated with recurring failure contexts, while exponential decay reduces the influence of older retrieval events. Retrieval costs determine which intervals are selected but do not scale the priority increments. The update is applied only when retrieval produces at least one valid match. Newly arriving successful samples receive a constant initial priority $w_{\mathrm{init}}$.

We sample action chunks from $\mathcal{D}^{+}$ in proportion to their priorities:
\begin{equation}
    P(i)=\frac{w_i}{\sum_{j\in\mathcal{D}^{+}}w_j},\qquad i\in\mathcal{D}^{+}.
    \label{eq:fert_sampling_distribution}
\end{equation}
We express prioritized fitting of successful action chunks through a negative log-likelihood objective. For a sampled batch $\mathcal{B}$, this objective is:
\begin{equation}
    \widehat{\mathcal{L}}_{\mathrm{FERT}}(\theta;\mathcal{B})=-\frac{1}{|\mathcal{B}|}\sum_{i\in\mathcal{B}}\log\pi_\theta(A_i\mid o_i,g_i),
    \label{eq:fert_policy_update}
\end{equation}
where $A_i$ is the recorded successful action chunk conditioned on observation $o_i$ and task $g_i$. Binary outcomes therefore guide learning through successful data selection and failure-conditioned prioritization, without requiring a learned critic, value targets or advantage estimates.

Since chunks are sampled in proportion to their priorities, the expected training objective is
\begin{equation}
    \mathcal{L}_{\mathrm{FERT}}(\theta)=-\sum_{i\in\mathcal{D}^{+}}\frac{w_i}{\sum_{j\in\mathcal{D}^{+}}w_j}\log\pi_\theta(A_i\mid o_i,g_i).
    \label{eq:fert_population_loss}
\end{equation}
The priorities are fixed during each update and enter through sampling. Our flow-based implementation uses the base policy's conditional flow-matching loss as a tractable surrogate for likelihood-based fitting, as detailed in \Cref{app:fert_details}. Training updates the policy used for inference, and subsequent robot execution generates new successful trajectories and failure contexts.

\subsubsection{Understanding FERT}
\label{sec:understanding_fert}

We study the local effect of successful replay and retrieval weighting using the likelihood-based update above. Under a current-policy replay approximation and the retrieval-alignment conditions below, successful replay provides an improving update direction and retrieval adds a positive first-order contribution. The analysis compares a single sufficiently small mean-gradient update with uniform successful replay under these conditions.

\paragraph{Notation and assumptions.}

Let $J(\theta)$ be the task-success objective defined in \Cref{eq:task_objective}, and let $\theta_0$ denote the policy parameters immediately before an update. Let $\mathbb{E}_{\mathcal{D}^{+}}$ denote expectation under uniform successful replay, and normalize the retrieval priorities as $\omega=w/\mathbb{E}_{\mathcal{D}^{+}}[w]$. The mean log-likelihood gradients are
\begin{equation}
    \begin{aligned}
        \mathbf{G}_{\mathrm{unif}}&=\mathbb{E}_{\mathcal{D}^{+}}\left[\nabla_\theta\log\pi_{\theta_0}(A\mid o)\right],\\
        \mathbf{G}_{\mathrm{FERT}}&=\mathbb{E}_{\mathcal{D}^{+}}\left[\omega\nabla_\theta\log\pi_{\theta_0}(A\mid o)\right].
    \end{aligned}
    \label{eq:fert_mean_gradients}
\end{equation}
Here, gradients are evaluated at $\theta_0$. Minimizing the negative log-likelihood in \Cref{eq:fert_population_loss} gives the mean update $\theta_{\mathrm{FERT}}=\theta_0+\eta\mathbf{G}_{\mathrm{FERT}}$; uniform replay gives $\theta_{\mathrm{unif}}=\theta_0+\eta\mathbf{G}_{\mathrm{unif}}$, using the same step size $\eta$.

We assume that the policy and expected return are differentiable, the relevant gradient expectations are finite, and differentiation can be interchanged with expectation. The environment dynamics are independent of the policy parameters. During an update, the replay distribution and priorities remain fixed, and $\nabla J$ is $\lambda_J$-Lipschitz in a neighbourhood containing both updates.

Write $\mathbf{v}=\nabla J(\theta_0)$. The first-order contribution of a replay sample is $u=\mathbf{v}^{\top}\nabla_\theta\log\pi_{\theta_0}(A\mid o)$. Here, $u$ denotes sample contribution within the theoretical analysis. We use two quantities to describe the update:
\begin{equation}
    \mu=\mathbb{E}_{\mathcal{D}^{+}}[u],\qquad \kappa=\operatorname{Cov}_{\mathcal{D}^{+}}(\omega,u).
    \label{eq:fert_alignment}
\end{equation}
Here, $\mu$ measures the alignment of successful replay with the expected-return objective, and $\kappa$ measures the additional alignment introduced by retrieval. Neither quantity needs to be estimated by FERT.

\begin{lemma}[Successful replay alignment]
\label{lemma:align}
For current-policy successful replay, with $J(\theta_0)>0$ and $\mathbf{v}\neq0$,
\begin{equation}
    \mathbf{G}_{\mathrm{unif}}=\frac{\mathbf{v}}{T_cJ(\theta_0)},\qquad \mu=\frac{\|\mathbf{v}\|_2^2}{T_cJ(\theta_0)}>0,
    \label{eq:fert_success_alignment}
\end{equation}
where $T_c$ is the episode length in action chunks. Thus, fitting successful experience produces an update aligned with improving expected return. The replay model and proof are given in \Cref{app:lemma1}.
\end{lemma}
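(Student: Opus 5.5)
The plan is to make the ``current-policy successful replay'' model explicit and then apply the likelihood-ratio (policy-gradient) identity. I model the replay buffer $\mathcal{D}^{+}$ as follows. A trajectory $\tau$ is drawn from $\pi_{\theta_0}$ and kept only if $y(\tau)=1$, which gives trajectory density $p_{\theta_0}(\tau)y(\tau)/J(\theta_0)$. A chunk index $k\in\{0,\ldots,T_c-1\}$ is then drawn uniformly within the trajectory, and the replay pair $(o,A)=(o_k,A_k)$ is read off. For simplicity I take $T_c$ fixed; a variable length would only change the normalization constant. Under this model,
\begin{equation*}
\mathbf{G}_{\mathrm{unif}}=\frac{1}{T_cJ(\theta_0)}\,\mathbb{E}_{\tau\sim\pi_{\theta_0}}\Big[y(\tau)\sum_{k=0}^{T_c-1}\nabla_\theta\log\pi_{\theta_0}(A_k\mid o_k)\Big].
\end{equation*}

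Next I invoke the standard score-function identity. Because the dynamics do not depend on $\theta$, $\nabla_\theta\log p_\theta(\tau)=\sum_k\nabla_\theta\log\pi_\theta(A_k\mid o_k)$: the initial-state and transition factors have zero gradient. The assumptions allow differentiation to be interchanged with expectation, so
\begin{equation*}
\nabla J(\theta_0)=\nabla_\theta\!\int p_\theta(\tau)y(\tau)\,d\tau=\mathbb{E}_{\tau\sim\pi_{\theta_0}}\Big[y(\tau)\sum_k\nabla_\theta\log\pi_{\theta_0}(A_k\mid o_k)\Big]=\mathbf{v}.
\end{equation*}
Substituting this into the display above gives $\mathbf{G}_{\mathrm{unif}}=\mathbf{v}/(T_cJ(\theta_0))$. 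Here $J(\theta_0)>0$ is needed so that the conditioning on success is well defined.

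The second claim follows by linearity. Since $u=\mathbf{v}^\top\nabla_\theta\log\pi_{\theta_0}(A\mid o)$, we get $\mu=\mathbb{E}_{\mathcal{D}^{+}}[u]=\mathbf{v}^\top\mathbf{G}_{\mathrm{unif}}=\|\mathbf{v}\|_2^2/(T_cJ(\theta_0))$. This is strictly positive because $\mathbf{v}\neq0$ and $J(\theta_0)>0$. The interpretive sentence also follows: $\mathbf{G}_{\mathrm{unif}}$ is a positive multiple of $\nabla J$, so it is an ascent direction.

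The main obstacle is modelling rather than calculation. I have to justify that ``current-policy replay'' means successes are distributed exactly as on-policy trajectories conditioned on $y=1$, with chunks sampled uniformly. I also have to ensure that the action-chunk factorization of the trajectory likelihood is legitimate. This requires each chunk to be executed as generated, with $o_k$ the observation at the start of chunk $k$, so that the policy factor in $p_\theta(\tau)$ is exactly $\prod_k\pi_\theta(A_k\mid o_k)$. I would state these conventions explicitly as the replay model in \Cref{app:lemma1}, and note that buffer staleness and off-policy versions are excluded by this approximation.
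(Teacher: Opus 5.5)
Your proposal is correct and matches the paper's proof essentially step for step: the same success-conditioned current-policy replay model with uniform chunk selection, the score-function identity relying on $\theta$-independent dynamics, and linearity to obtain $\mu=\mathbf{v}^{\top}\mathbf{G}_{\mathrm{unif}}$. One small correction to your aside: under your trajectory-then-uniform-chunk model, a variable length $T(\tau)$ does more than change the normalization, because it inserts a $\tau$-dependent weight $1/T(\tau)$ (yielding the gradient of $\mathbb{E}[y(\tau)/T(\tau)]$ rather than of $J$), which is why the paper keeps $T_c$ fixed by representing early termination with parameter-independent absorbing steps.
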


\begin{lemma}[Retrieval alignment]
\label{lemma:retrieval}
Consider the stage-wise robot task model specified in \Cref{app:lemma2}, with independent stages and fixed relative likelihoods among successful actions within each stage. If retrieval assigns at least as much weight to stages with higher local failure rates, with a strict increase for at least one pair of stages with different failure rates, then
\begin{equation}
    \kappa>0.
\end{equation}
Under this setting, successful experience from more failure-prone stages has greater first-order benefit to the expected-return objective. The model and proof are given in \Cref{app:lemma2}.
\end{lemma}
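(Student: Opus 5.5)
We need to prove Lemma 2: $\kappa=\operatorname{Cov}_{\mathcal{D}^{+}}(\omega,u)>0$ under the stage-wise model. Since the model lives in the appendix, we first fix a concrete version of it. The task consists of $S$ independent stages. Stage $k$ succeeds with probability $q_k(\theta)$ and has local failure rate $1-q_k$. Episode success is $y=\prod_k y_k$, so $J=\prod_k q_k$. Within stage $k$ the policy has a parameter block $\theta_k$ (or a logit direction) governing $\pi$ on that stage's observations. "Fixed relative likelihoods among successful actions" means the successful action set $\mathcal{S}_k$ has conditional distribution $\pi(\cdot\mid \mathcal{S}_k)$ independent of $\theta$. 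The score of a successful action then decomposes as $\nabla\log\pi(A\mid o)=\nabla\log q_k$ plus a term whose success-conditional mean is zero.

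\textbf{Step 1: compute $u$ per stage.} First compute $\mathbf{v}=\nabla J=J\sum_k \nabla\log q_k$. If the stage blocks are disjoint (independence), this gives $\mathbf{v}^\top\nabla\log q_k=J\|\nabla q_k\|^2/q_k^2$. The cleanest version is a one-dimensional logit per stage, $q_k=\sigma(\theta_k)$, so that $\nabla_{\theta_k}\log q_k=1-q_k$. The stage-$k$ conditional mean of $u$ over successful replay samples is then
\[
\bar u_k=\mathbb{E}[u\mid\text{stage }k]=\mathbf{v}^\top\nabla\log q_k = J(\theta_0)\,(1-q_k)^2,
\]
which is strictly increasing in the failure rate $1-q_k$. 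The zero-mean within-stage fluctuation vanishes under conditioning because the relative likelihoods are fixed. This step carries the claim that "successful experience from more failure-prone stages has greater first-order benefit", and it is consistent with Lemma~\ref{lemma:align} after summing over stages.

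\textbf{Step 2: reduce the covariance to stages.} The retrieval weight $\omega$ depends only on the stage, $\omega=\omega_k$. Moreover, successful trajectories contain every stage, so uniform successful replay puts stage masses $\pi_k$ that do not depend on the policy; equal chunks per stage gives $\pi_k=1/S$. By the law of total covariance,
\[
\kappa=\operatorname{Cov}(\omega,\bar u_{K})+\mathbb{E}[\operatorname{Cov}(\omega,u\mid K)],
\]
and the second term is zero because $\omega$ is constant within a stage. It remains to show $\operatorname{Cov}_{k\sim\pi}(\omega_k,\bar u_k)>0$.

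\textbf{Step 3: apply a Chebyshev-type rearrangement inequality.} Write
\[
\operatorname{Cov}=\tfrac12\sum_{k,l}\pi_k\pi_l(\omega_k-\omega_l)(\bar u_k-\bar u_l).
\]
The hypothesis says $\omega$ is weakly increasing in the failure rate, and Step 1 says $\bar u$ is strictly increasing in the failure rate. Every term is therefore nonnegative. The assumed pair with different failure rates and strictly larger weight on the more failure-prone stage contributes a strictly positive term, so $\kappa>0$.

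\textbf{Main obstacle.} The delicate part is Step 1: making the stage-wise model precise enough that $\bar u_k$ is a strictly monotone function of the failure rate alone. With general parameter-sharing across stages, cross terms $\nabla\log q_k^\top\nabla\log q_l$ appear and can break monotonicity. I would first try disjoint per-stage logit parametrizations, which is what "independent stages" should license. Failing that, I would assume equal gradient norms for the success probability, $\|\nabla q_k\|$ equal across $k$, so that $\bar u_k\propto 1/q_k^2$, which is again strictly decreasing in $q_k$.
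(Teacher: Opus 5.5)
Your proposal is correct and follows essentially the same route as the paper: one independent logit per stage, with parameter-independent relative likelihoods among successful chunks, gives $u_k=J(\theta_0)\phi_k^2$ (your $J(\theta_0)(1-q_k)^2$); successful replay samples stages uniformly; and the pairwise expansion of the covariance has every term nonnegative, with the assumed strict pair contributing a positive one. Your law-of-total-covariance step is harmless but unnecessary, because in this model the score of every successful stage-$k$ chunk is exactly $\phi_k\mathbf{e}_k$, so $u$ is already constant within each stage rather than merely having zero-mean fluctuations there.
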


\begin{theorem}[Local performance guarantee]
\label{thm:imp}
Under the conditions of \Cref{lemma:align,lemma:retrieval}, a sufficiently small positive step size gives
\begin{equation}
    J(\theta_{\mathrm{FERT}})>J(\theta_{\mathrm{unif}})>J(\theta_0).
    \label{eq:fert_combined_guarantee}
\end{equation}
The improvement from FERT admits the following lower bound:
\begin{equation}
    J(\theta_{\mathrm{FERT}})-J(\theta_0)\geq\eta\underbrace{\mu}_{\textrm{successful replay}}+\eta\underbrace{\kappa}_{\textrm{retrieval contribution}}-\underbrace{\frac{\lambda_J\eta^2}{2}\|\mathbf{G}_{\mathrm{FERT}}\|_2^2}_{\textrm{finite-step penalty}}.
    \label{eq:fert_improvement_decomposition}
\end{equation}
The first term captures the benefit of learning from successful experience, the second captures the additional benefit of experience retrieval, and the final term bounds the higher-order effects of a finite update. The proof and explicit step-size conditions are provided in \Cref{app:thm1}.
\end{theorem}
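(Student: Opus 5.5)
The plan is to apply the descent lemma (in ascent form) to $J$ along each of the two mean updates, and then to insert the identities provided by \Cref{lemma:align,lemma:retrieval}. Because $\nabla J$ is $\lambda_J$-Lipschitz on a neighbourhood containing both updates, any step $\theta_0+\eta\mathbf{G}$ in that neighbourhood satisfies
\[
J(\theta_0+\eta\mathbf{G})-J(\theta_0)\geq \eta\,\mathbf{v}^{\top}\mathbf{G}-\frac{\lambda_J\eta^2}{2}\|\mathbf{G}\|_2^2 .
\]
This follows from the integral form of Taylor's theorem together with Cauchy--Schwarz. A matching upper bound holds with $+\frac{\lambda_J\eta^2}{2}\|\mathbf{G}\|_2^2$.

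Next I evaluate the first-order terms. By linearity of expectation, $\mathbf{v}^{\top}\mathbf{G}_{\mathrm{unif}}=\mathbb{E}_{\mathcal{D}^{+}}[u]=\mu$. Since $\mathbb{E}_{\mathcal{D}^{+}}[\omega]=1$,
\[
\mathbf{v}^{\top}\mathbf{G}_{\mathrm{FERT}}=\mathbb{E}_{\mathcal{D}^{+}}[\omega u]=\operatorname{Cov}_{\mathcal{D}^{+}}(\omega,u)+\mathbb{E}[\omega]\,\mathbb{E}[u]=\kappa+\mu .
\]
Substituting this into the lower bound with $\mathbf{G}=\mathbf{G}_{\mathrm{FERT}}$ gives \Cref{eq:fert_improvement_decomposition} directly.

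For the strict chain in \Cref{eq:fert_combined_guarantee}, I use $\mu>0$ from \Cref{lemma:align} and $\kappa>0$ from \Cref{lemma:retrieval}.
\begin{itemize}
\item \emph{Second inequality.} The lower bound gives $J(\theta_{\mathrm{unif}})-J(\theta_0)\geq\eta\mu-\frac{\lambda_J\eta^2}{2}\|\mathbf{G}_{\mathrm{unif}}\|_2^2$. This is positive whenever $0<\eta<2\mu/(\lambda_J\|\mathbf{G}_{\mathrm{unif}}\|_2^2)$. By \Cref{lemma:align}, $\|\mathbf{G}_{\mathrm{unif}}\|_2^2=\mu/(T_cJ(\theta_0))$, so the condition simplifies to $\eta<2T_cJ(\theta_0)/\lambda_J$.
\item \emph{First inequality.} I combine the lower bound for $\theta_{\mathrm{FERT}}$ with the upper bound for $\theta_{\mathrm{unif}}$:
\[
J(\theta_{\mathrm{FERT}})-J(\theta_{\mathrm{unif}})\geq \eta\kappa-\frac{\lambda_J\eta^2}{2}\bigl(\|\mathbf{G}_{\mathrm{FERT}}\|_2^2+\|\mathbf{G}_{\mathrm{unif}}\|_2^2\bigr).
\]
The right-hand side is positive for $0<\eta<2\kappa/\bigl(\lambda_J(\|\mathbf{G}_{\mathrm{FERT}}\|_2^2+\|\mathbf{G}_{\mathrm{unif}}\|_2^2)\bigr)$.
\end{itemize}
Taking $\eta$ below both thresholds, and small enough that both updated points stay in the Lipschitz neighbourhood, yields the explicit step-size condition.

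The main obstacle is not the calculus but making sure the lemmas are applied in a consistent setting. \Cref{lemma:align} uses current-policy replay, so that $\mathbb{E}_{\mathcal{D}^{+}}$ is the success-conditioned on-policy distribution. \Cref{lemma:retrieval} needs $\kappa$ to be computed under that same distribution, using the stage-wise model. I would state explicitly that both lemmas refer to the same $\mathcal{D}^{+}$ and the same fixed $\omega$, since the theorem's hypotheses require it.

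Degenerate denominators also need handling. Because $\mu>0$, we have $\mathbf{G}_{\mathrm{unif}}\neq 0$, so both thresholds are finite and positive. Finally, the neighbourhood condition has to be satisfied by shrinking $\eta$.
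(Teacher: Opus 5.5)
Your proposal is correct and follows the paper's proof essentially step for step: the two-sided Lipschitz-gradient remainder bound, the identities $\mathbf{v}^{\top}\mathbf{G}_{\mathrm{unif}}=\mu$ and $\mathbf{v}^{\top}\mathbf{G}_{\mathrm{FERT}}=\mu+\kappa$, subtracting the uniform upper bound from the FERT lower bound, and the same pair of step-size thresholds. Your simplification of the first threshold to $2T_cJ(\theta_0)/\lambda_J$ via Lemma 1 is a correct small addition; note only that the remainder bound needs the whole segment from $\theta_0$ to $\theta_0+\eta\mathbf{G}$, not just its endpoint, inside the Lipschitz neighbourhood, which shrinking $\eta$ also secures.
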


Overall, \Cref{thm:imp} characterizes a local benefit of retrieval-weighted likelihood updates under the stated replay and alignment conditions. The retrieval-alignment assumption specifies when prioritization helps. The practical benefit of the retrieval mechanism is evaluated through the ablation in \Cref{sec:exp-abl}, using repeated updates with historical experience and the flow-matching implementation.

\subsection{Fault-tolerant asynchronous training system}
\label{sec:infra}

\begin{table}[!b]
    \centering
    \small
    \caption{Interaction and experience paths in FATS. Sizes describe model-input images and a complete episode record, respectively.}
    \label{tab:fats-flows}
    \begin{tabularx}{\linewidth}{@{}lXX@{}}
        \toprule
        & Interaction & Experience \\
        \midrule
        Unit & Observation and action chunk & Complete labeled episode \\
        Size & 0.45\,MB of images per model input & 1.58\,GB per Phone Packaging episode \\
        Timing & Each inference request & After labeling, about every 4\,min per robot \\
        Requirement & Reply before the robot's next interaction & Complete and labeled before use in training \\
        Path & Robot observations to cloud inference, actions back to the robot & Robot site to the experience store of a training worker \\
        \bottomrule
    \end{tabularx}
\end{table}

Learning from experience generated during deployment requires a distributed loop that connects robot execution, experience collection and policy improvement while accommodating interruptions to individual robots.
FATS provides this system layer: each robot executes action chunks computed by cloud inference,
    and its experience contributes to updates of the policy that inference serves.
In our deployment this loop spans 96 robots and six cloud servers,
    connected through a single 30\,Gbps edge-cloud link (\Cref{app:infra_deployment}).
Two properties of this setting determine how the system must be built.

The first property is that interaction and experience have different timing requirements (\Cref{tab:fats-flows}).
A robot needs the action chunk for its current observation before its next interaction,
    so delays in cloud inference or communication slow execution.
Training instead consumes complete episodes, whose records are much larger but need not arrive immediately.
Both share the edge-cloud link,
    so the system must move experience without making robot execution wait for it.

The second property is that physical robots have availability constraints beyond those of computing resources: resets, battery depletion and hardware faults can interrupt their contribution to learning.
In the 96-robot run, fault incidents affected 22 robots,
    and some robot was interrupted every 22\,min on average (\Cref{sec:exp-system}).
Resets pause robots far more often,
    so the set of robots producing experience changes continuously.
Most robot faults are transient, e.g., a network interruption, a low battery or a triggered joint limit.
Although robots and accelerators are both schedulable resources, robot recovery must preserve the association with a physical workstation and task setup.
Meanwhile, the policy and optimizer state, which are expensive to rebuild, reside in training workers.
The system therefore separates recovery of experience-producing workers from recovery of the learning state.

\begin{figure}[!tb]
    \centering
    \includegraphics[width=\linewidth]{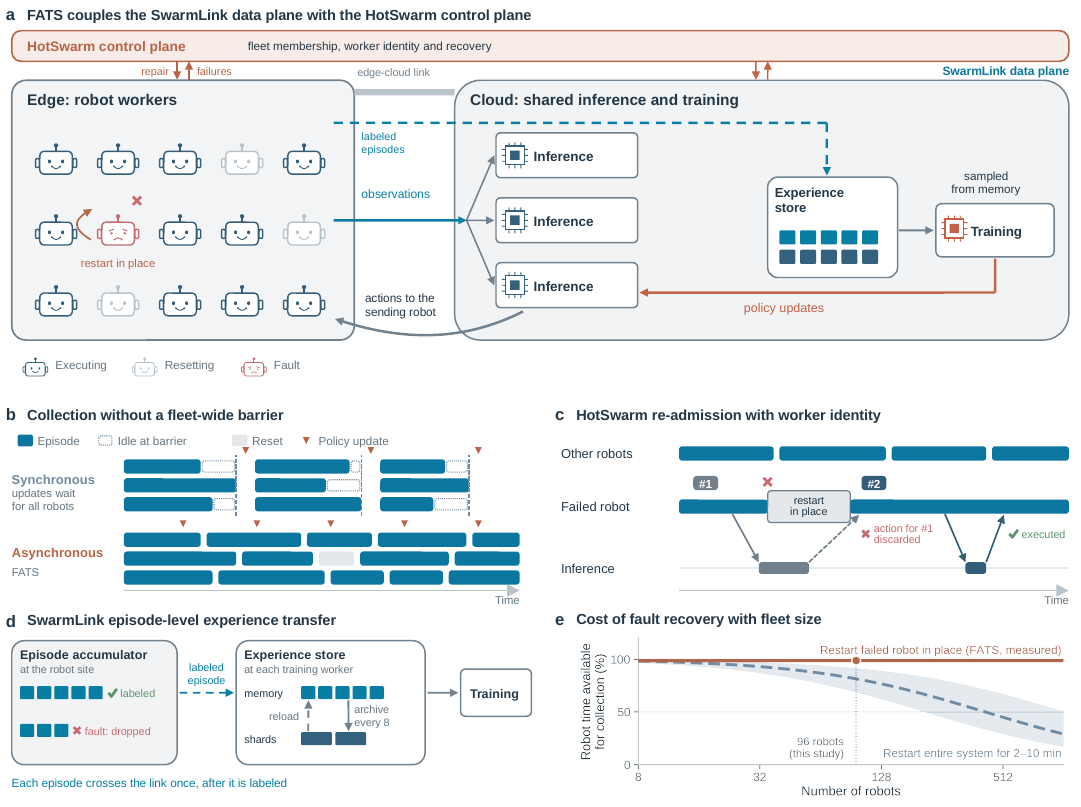}
    \caption{\textbf{Fault-tolerant asynchronous training system (FATS).}
        \textbf{a}, The SwarmLink data plane connects robot workers at the edge to shared cloud inference
        and moves each labeled episode into an experience store for training.
        The HotSwarm control plane manages membership and worker identity, and recovers failed workers.
        \textbf{b}, Synchronous collection makes each policy update wait for every robot, which leaves faster robots idle.
        In asynchronous collection, robots continue independently.
        \textbf{c}, HotSwarm restarts the worker of a failed robot in place while the other robots continue,
        and the action computed for the failed worker (\#1) is discarded.
        \textbf{d}, Each robot accumulates its episode locally and sends it as one record once it is labeled,
        so a fault discards only the partial episode at that robot.
        The experience store serves training from memory and archives shards that a restarted training worker reloads.
        \textbf{e}, Robot time available for collection after fault handling,
        measured for FATS in the 96-robot run and modeled for restarting the entire system after each fault event
        with restart times between 2 and 10\,min (\Cref{app:infra_model}).
        Durations in \textbf{b} and \textbf{c} are illustrative.}
    \label{fig:fats-concept}
\end{figure}

FATS addresses the two properties with a data plane and a control plane.
The data plane, \textit{SwarmLink}, connects robot execution to cloud inference and training.
It pools inference requests through distributed channels and moves each episode to training as one record once it is labeled.
The control plane, \textit{HotSwarm}, decides which stations make up the fleet,
    i.e., the fixed places for one robot or one accelerator,
    and which worker currently occupies each of them.
It detects failures, returns failed robots to their places and repairs the data plane around them.
\Cref{sec:infra-arch} describes how the two planes interact,
    \Cref{sec:infra-swarmlink,sec:infra-hotswarm} describe each plane,
    and \Cref{sec:infra-walkthrough} follows one robot fault through both.

\subsubsection{Architecture}
\label{sec:infra-arch}

FATS separates the learning loop into three kinds of workers.
Environment workers operate robots, i.e., they acquire observations and execute actions.
Inference workers compute actions from observations using the current policy.
Training workers update the policy from collected experience.
Every node describes its accelerators and attached robots in the same way,
    including each robot's communication endpoint and cameras,
    and a placement specification assigns workers to these resources.
In our deployment, environment workers run at the robot site,
    while inference and training workers run on cloud servers (\Cref{app:infra_placement}).

The two planes cannot be designed separately, because every failure involves both.
A crashed process leaves data in channels and connections that the data plane must clean up,
    and a broken connection is often the first sign of a failure that the control plane must handle.
The planes therefore meet at three interfaces (\Cref{fig:fats-concept}a).
\begin{itemize}
    \item \textit{Identity flows down.}
        HotSwarm assigns every worker an identity,
        and SwarmLink attaches it to the data that the worker produces and checks it when replies are delivered.
    \item \textit{Failures flow up.}
        A failed transfer or a lost channel replica in SwarmLink is reported to HotSwarm,
        in addition to HotSwarm's own liveness checks.
    \item \textit{Repair flows down.}
        HotSwarm rebuilds the connections and channel replicas that a failed worker leaves behind,
        including connections that cross the edge-cloud link.
\end{itemize}

\subsubsection{SwarmLink data plane}
\label{sec:infra-swarmlink}

SwarmLink preserves independent robot progress while sharing cloud GPUs and the edge-cloud link.
A robot waits for actions answering its own observation, while training consumes completed experience from the fleet.
Distributed channels connect these two paths without imposing a common collection schedule.

\paragraph{Distributed channels.}
A \textit{channel} is a named producer-consumer queue that lets workers exchange data without fixed peer connections.
Each channel has replicas on participating nodes,
    and each key selects a first-in-first-out queue hosted on its first producer's node.
Producers submit locally when they use keys homed on their own nodes,
    while remote consumers retrieve data directly from the home replica.
Asynchronous operations allow communication to overlap with other work.

Requests carry the robot's station, worker identity, episode and step identifiers.
Replies preserve these identifiers and add the policy version used for inference.
This associates actions with their observations and lets HotSwarm reject stale replies after a worker restart (\Cref{sec:infra-hotswarm}).
HotSwarm also repairs failed replicas.
Experience held only by a failed replica may be lost,
    reducing the available training data without invalidating a completed training step.

\paragraph{Shared inference.}
Cloud inference workers serve a common pool of requests rather than fixed robot groups.
A paused robot therefore leaves inference capacity available to others.
Workers batch ready observations and return each action chunk through its robot's reply channel.
Batch membership is temporary, so robots that share an inference call can subsequently progress at different rates.

Waiting for a full batch would recreate a dependence on slower or unavailable robots.
Each worker therefore dispatches when either its batch target is reached or the oldest request reaches the batching timeout.
The target provides GPU efficiency under load, while the timeout permits smaller batches when arrivals are sparse.
This limits waiting for additional inputs, rather than time spent waiting for a busy GPU.

\paragraph{Episode-level experience transfer.}
Distributed reinforcement learning systems usually stream experience to learners in fragments of steps
    and keep it in replay buffers that overwrite old data~\citep{horgan2018distributed,espeholt2018impala,espeholt2019seed}.
This suits simulation, where experience is cheap to regenerate.
In CEL, an episode becomes usable only after an operator labels it as a success or failure,
    because FERT uses the label to decide whether the episode contributes successful segments or a failure context (\Cref{sec:ret}).
SwarmLink therefore transfers experience as complete, labeled episodes (\Cref{fig:fats-concept}d).
Each environment worker accumulates the frames of its current episode at the robot site
    and, once the episode is labeled, sends it as one record tagged with the worker identity and episode identifier.
The send is asynchronous and uses a channel separate from observations, so it does not delay interaction.
Because a partial episode never leaves the robot site, a robot failure requires no cleanup in the cloud,
    and environment workers hold no state that is costly to lose.
Robots progress independently, so their episodes finish at different times and transfers spread out over the run.

\paragraph{Experience store.}
Each training worker appends arriving episodes to an in-memory store, from which training samples without reading the disk.
Every eight episodes, the worker also writes the new episodes as a shard in the LeRobot format~\citep{cadene2024lerobot} to persistent storage.
A training worker relaunched by HotSwarm reloads its shards in the background and skips any shard left incomplete,
    so a failure loses at most the episodes received since the last shard.

\subsubsection{HotSwarm control plane}
\label{sec:infra-hotswarm}

HotSwarm treats the fleet as a membership that changes during training rather than a fixed set of workers.
It returns a failed robot to its own place in the fleet and keeps every other worker running,
    and it guarantees that nothing left behind by the failed process is mistaken for the work of its replacement.

\paragraph{Stations and workers.}
HotSwarm keeps a registry of \textit{stations}.
A station is a fixed place in a worker group, bound to a placement, i.e., a node, its devices and, for an environment worker, one robot.
Each station is occupied by one \textit{worker}, i.e., one process, at a time,
    and every worker receives a unique identity when it is launched.
The distinction matters because the two have different lifetimes.
A station lasts for the whole training run, because the robot and node it is bound to stay in the fleet,
    whereas a worker lasts only until its process fails.
Whatever should survive a failure, such as the robot's place in the fleet and its reply key, belongs to the station,
    and whatever must not survive it, such as messages in flight and open connections, belongs to the worker.
Membership changes through three operations on stations,
    namely admitting a worker to a station, retiring it, and re-admitting a new worker to the same station.
Recovering a failed robot, taking a robot offline for maintenance and adding a robot whose node has just joined the cluster are therefore one mechanism with three triggers.
HotSwarm periodically refreshes its inventory of nodes, so a newly connected node becomes available for placement while training runs.
An environment worker spends most of its time in a loop that exchanges observations and actions,
    and a message sent through its regular task queue would wait until the loop ends.
Membership changes therefore reach running workers through a separate command channel,
    which a worker receives even while busy in its loop.

\paragraph{Tiered recovery.}
The three kinds of workers differ in how often they fail and in how costly their state is to rebuild.
Environment workers are attached to physical robots and fail most often,
    e.g., when a robot loses power or its network connection,
    but they hold almost nothing that is hard to rebuild.
Their only state is the episode in progress, which is discarded anyway when the robot is reset.
Inference and training workers both run on data-center servers, but their recovery requirements differ.
Inference workers hold a replaceable copy of the policy,
    whereas training workers hold the policy and optimizer state accumulated over hours of training.
A single recovery policy cannot serve both ends.
Restoring the whole job from a checkpoint after every failure, as data-center training systems commonly do,
    would stop the fleet every time a robot fails,
    whereas simply restarting every failed process would lose the optimizer state whenever a training worker fails.
HotSwarm instead matches recovery to each kind of worker.
Environment workers are restarted in place as often as needed, because a restart costs at most one episode.
Inference workers are restarted and load the current policy before resuming service.
Training workers save checkpoints periodically, resume from the latest one and reload their experience stores from shards (\Cref{sec:infra-swarmlink}),
    while the surviving inference workers continue serving their current policy
    and new episodes wait in the channels until training resumes.

HotSwarm detects failures through periodic liveness checks, failing calls and the failure reports of SwarmLink.
A process that stops responding may have crashed or may only be slow, and the two cannot be told apart from outside.
If HotSwarm waited, the station would remain in limbo,
    and a slow process that later resumed could send data alongside its replacement.
HotSwarm therefore terminates a suspected process instead of waiting to learn whether it will recover.
This makes every failure definite, at the cost of at most one episode when the process was only slow.

\paragraph{Re-admission.}
When the environment worker at a station fails, HotSwarm restores the station in five steps (\Cref{fig:fats-concept}c).
\begin{enumerate}
    \item It retires the failed worker.
        The registry keeps the retired record, marked as dead, instead of deleting it,
        so that late references to it can still be recognized as stale.
    \item It launches a new worker on the recorded placement, i.e., the same node and robot.
    \item It rebuilds the connections that the failed worker held with its peers.
        Peers communicate through connection groups that are set up once and then reused,
        and a failed member leaves its group unusable for the others.
        Each connection group carries an \textit{epoch}, i.e., a version number that advances when the group is torn down,
        and peers rejoin only within the new epoch,
        so a new member never binds to an endpoint, i.e., a network address, published by its predecessor.
        Every wait during this rejoin also checks for peer failure and epoch advance, so it cannot block indefinitely.
        Transfers queued against the old group are replayed on the new one in their original order.
    \item It replays the station's part of every call that was in progress when the failure occurred.
        Work is dispatched to a worker group as calls that every station executes, e.g., one rollout step on all robots.
        When one station fails during a call, only its part is executed again, by the new worker,
        and the results already returned by the other stations are kept.
        A single relaunch serves all such calls.
    \item The new worker resumes collection once its robot has been reset.
\end{enumerate}
Concurrent reports of the same failure trigger a single relaunch,
    and restarts and retries follow the limits of each tier, which are unlimited for environment workers.
Recovery restores software only.
Repairing hardware and resetting the scene remain operator tasks.

\begin{table}[!t]
    \centering
    \small
    \caption{Artifacts that can outlive a failed worker, the hazard each poses to its replacement, and the response of HotSwarm.}
    \label{tab:hotswarm-identity}
    \begin{tabularx}{\linewidth}{@{}lXl@{}}
        \toprule
        Artifact & Hazard if the identity is not checked & Response \\
        \midrule
        Membership record & A peer builds a connection to a dead process and waits indefinitely & Refuse and wait for a fresh record \\
        Control message in flight & A late failure report retires the replacement & Ignore \\
        Connection held by a peer & The replacement joins an outdated connection group & Rebuild under a new epoch \\
        Data in the pipeline & Training pairs actions with another worker's observations & Drop and receive again \\
        Queued transfer & The transfer runs on a destroyed connection & Replay on the new connection \\
        \bottomrule
    \end{tabularx}
\end{table}

\paragraph{Worker identity across both planes.}
Restoring a station means that the new worker inherits everything the failed worker left in the system (\Cref{tab:hotswarm-identity}).
HotSwarm therefore attaches the worker identity to every artifact that can outlive its process
    and checks it wherever the artifact is used,
    with a response suited to each kind of artifact.
The registry records which worker currently occupies each station,
    so each check is a local comparison between the identity on the artifact and the registry entry.
No coordination with other workers is needed,
    which matters because the workers that would have to be consulted may themselves be failing.

The data artifacts deserve particular attention.
An action computed for an observation of the failed worker can arrive after the new worker has started.
Executed by the new worker, it would move the robot according to a state that no longer holds,
    and recorded in its episode, it would pair an action with an observation it was not computed from.
Neither case raises an error, and training would silently learn from inconsistent pairs.
SwarmLink prevents both by delivering an action only to the worker that sent the observation,
    and the new worker discards any reply that carries its predecessor's identity.

\paragraph{Decoupled progress.}
Training retrieves experience from channels rather than from particular robots,
    and inference serves whichever robots are waiting.
A failed or paused robot therefore lowers the rate at which experience arrives but stops nothing else,
    and a slow robot delays only its own episodes.
In a synchronous design, each policy update would instead wait for the slowest robot (\Cref{fig:fats-concept}b).
Under the fixed per-robot fault-rate and downtime assumptions in \Cref{app:infra_model}, local recovery makes both expected lost robot time and total robot time grow in proportion to fleet size, so the expected fraction lost to faults stays constant.
A system that restarts entirely after each fault instead stops every robot at every fault.
In a fleet twice as large, faults occur twice as often and each stops twice as many robots,
    so the share of robot time lost roughly doubles (\Cref{fig:fats-concept}e).

\subsubsection{A robot fault across both planes}
\label{sec:infra-walkthrough}

The planes cooperate whenever a robot fails.
Suppose the control process of a robot crashes while one of its observations is waiting in the interaction channel and its episode is half complete.
HotSwarm detects the failure through its liveness check or through the failed delivery reported by SwarmLink,
    terminates any remains of the process and retires the worker.
SwarmLink continues to serve the other robots.
The action computed for the waiting observation is held under the robot's reply key.
The half-complete episode existed only in the accumulator of the failed worker,
    so it disappears with the worker and nothing in the cloud needs cleaning up.
Training continues on the episodes already in its experience store.
HotSwarm launches a new worker at the same station,
    rebuilds its connections under a new epoch and replays its interrupted calls.
The new worker discards the stale action, which carries its predecessor's identity,
    and its first observation starts a fresh episode once the robot has been reset.
In the 96-robot run, for example, robot 20 lost the link to the inertial sensor of its mobile base at 11:24
    and returned to collection at 11:37 after an 11-min restart in place,
    while the rest of the fleet continued (\Cref{fig:error,fig:episode-record}).

FATS is implemented on RLinf~\citep{yu2026rlinf}. Its worker and communication abstractions support both the expert-supervised initialization runs and the subsequent FERT runs; the learning algorithm determines how collected experience is used.
\Cref{app:infra_deployment} describes the hardware, the network and the placement used in our deployment.

\section{Experimental Results}
\label{sec:experiments}

We evaluate FERT and FATS as a concrete realization of CEL for learning from experience generated during deployment. We measure whether fleet-generated experience improves task success without new expert demonstrations or corrective actions, and how learning and its supporting system behave as more robots contribute experience. Task success on held-out robots provides our empirical measure of reliable execution. The controlled experiments address four questions:
\begin{itemize}
    \item Can fleet-generated experience improve task success beyond the initialized policy? (\Cref{sec:exp-main})
    \item How does policy performance change with fleet size when collection time per robot is held fixed? (\Cref{sec:exp-scale})
    \item How do failure-guided retrieval and the other design choices affect FERT? (\Cref{sec:exp-abl})
    \item How does FATS support fleet operation in the presence of local interruptions and communication delays? (\Cref{sec:exp-system})
\end{itemize}
\Cref{sec:exp-setup} describes the shared experimental setup. We then present policy improvement, scaling, ablation and system results.

\subsection{Experimental setup}
\label{sec:exp-setup}

\subsubsection{Tasks and robot setup}
We evaluate five real-world tabletop manipulation tasks spanning flexible and deformable objects, precise alignment, tool use and multi-stage execution. Task durations range from tens of seconds to several minutes. These tasks expose different sources of execution failure, from contact and alignment errors to errors that accumulate across successive stages. \Cref{fig:task-setup} provides an overview of the task setups. Step-by-Step explanation of each task is available in \Cref{app:task_success_criteria}.

\paragraph{Flower Arrangement.}
The robot sequentially inserts two flowers and one leaf into a vase. Flexible stems can oscillate during motion, requiring smooth transport and accurate insertion.

\paragraph{Clothes Folding.}
The robot folds a T-shirt laid flat on a table according to a predefined procedure. The task takes approximately 40 seconds and requires manipulation of deformable fabric and controlled contact with the table.

\paragraph{Phone Charging.}
At the start of each trial, the USB-C connector is already inserted into the phone. The robot inserts a charging plug into a power strip, disconnects the Type-C cable from the phone, and places the phone at a designated location. The task combines sequential manipulation with precise spatial alignment and contact control during plug insertion.

\paragraph{Stool Assembly.}
The robot places two screws at designated locations on a stool, picks up a screwdriver, and tightens the screws. This task combines small-object manipulation and tool use; unsuccessful alignment may require readjustment before execution can continue.

\paragraph{Phone Packaging.}
The robot places a manual, charging cable, charger, phone case and phone into their designated locations in a phone box, then closes the box. Execution takes \textbf{approximately 2-3 minutes}. The task requires a sequence of object placements, with errors in earlier stages potentially affecting later operations.

We use AgiBot G2 robots with a dual-arm configuration. Policy observations include one head-camera view, two wrist-camera views and proprioceptive states. The action space consists of delta joint positions for the two arms. Hardware deployment and network configuration are described in~\Cref{app:infra_deployment}.

\begin{figure}[htbp]
    \centering
    \includegraphics[width=1.0\linewidth]{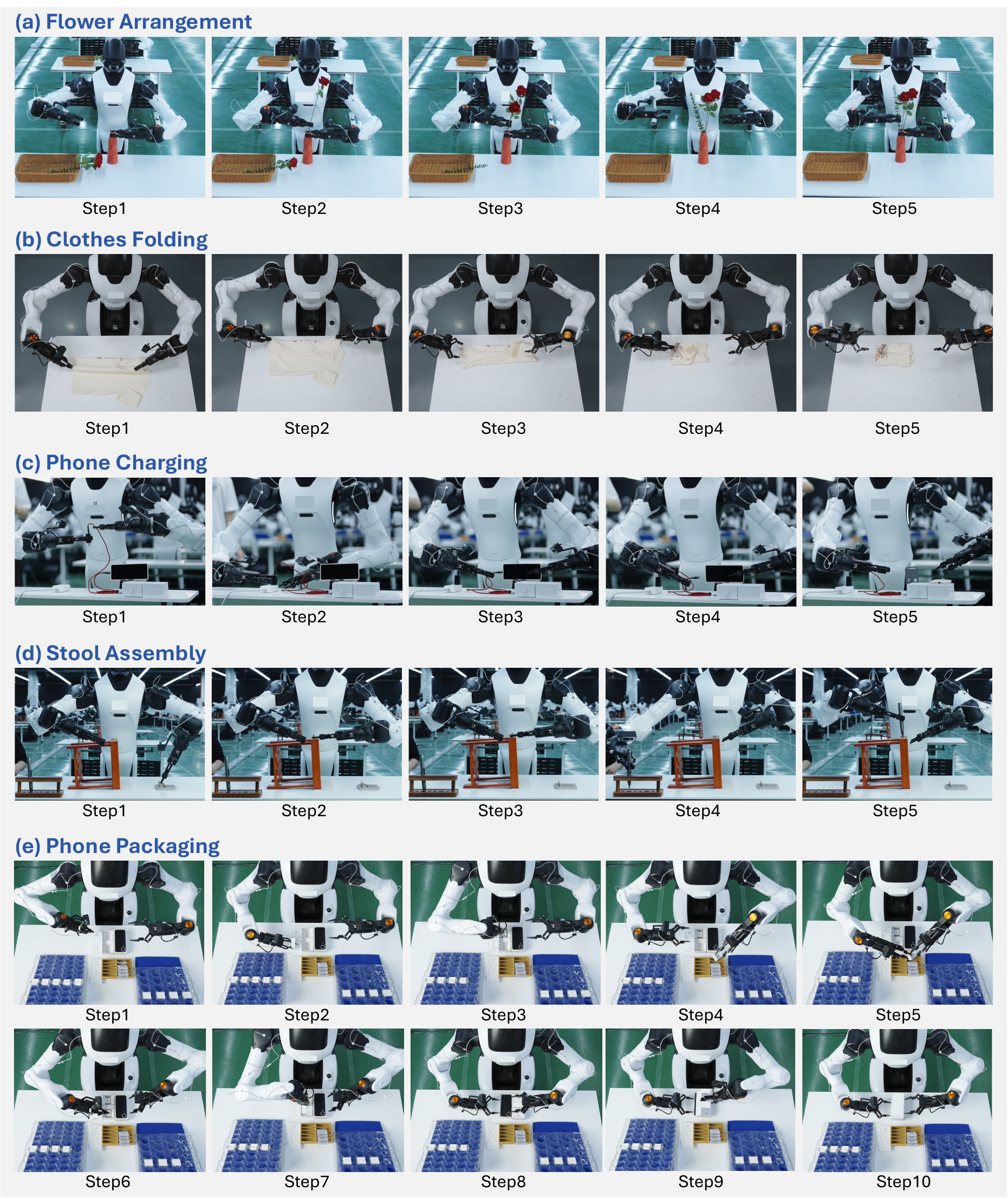}
    \caption{Overview of the five manipulation tasks performed with AgiBot G2 robots: Flower Arrangement, Clothes Folding, Phone Charging, Stool Assembly and Phone Packaging. A detailed step-by-step description is provided in \Cref{app:task_success_criteria}.}
    \label{fig:task-setup}
\end{figure}

\subsubsection{Training setup}

Our experiments use a shared expert-supervised policy initialization for FERT, RECAP and RLT. For each task, the VLA model $\pi_0$~\citep{black2024pi0} is first adapted through supervised fine-tuning (SFT), followed by human-gated dataset aggregation (HG-DAgger)~\citep{kelly2019hg} across robot fleets. We refer to this implementation as multi-robot HG-DAgger, abbreviated as (M)HG-DAgger in \Cref{fig:main-results}: robots collect human intervention data in parallel to train a shared task-specific policy, as detailed in~\Cref{app:policy_preparation}.

For each task, FERT, RECAP and RLT are initialized from the same selected (M)HG-DAgger checkpoint and trained in separate runs. SFT and (M)HG-DAgger serve as performance references for the policy-initialization stages. For Phone Charging, Stool Assembly and Phone Packaging, this checkpoint is obtained using mixed multi-robot HG-DAgger, which combines data collected from the preceding HG-DAgger runs at different fleet sizes for the same task. For Flower Arrangement and Clothes Folding, we use the checkpoint obtained from multi-robot HG-DAgger with the largest fleet.

During subsequent FERT training, operators provide binary task-outcome labels and reset environments, while robots autonomously generate the actions used for learning. No additional expert demonstrations or corrective actions are collected, and the demonstration and intervention datasets used for policy initialization are not transferred to the FERT experience buffer. Collection, optimization and policy distribution proceed asynchronously through FATS, allowing robots to contribute experience and execute updated policy versions during deployment. We evaluate the resulting policies on held-out robots to assess whether the collected experience improves task success.

\subsubsection{Baseline methods}

We compare FERT with two approaches to learning from robot experience: single-robot online reinforcement learning with RLT~\citep{xu2026rltokenbootstrappingonline} and offline reinforcement learning on fleet-collected data with RECAP~\citep{intelligence2025pi06vlalearnsexperience}. In our experiments, both baselines use human corrective actions during subsequent policy learning.

RLT serves as a single-robot online reinforcement learning baseline. It learns a residual actor-critic on top of a frozen VLA policy using sparse task rewards and human corrections collected during interaction. RLT provides a reference for improving a policy through online reinforcement learning on an individual robot, against which we evaluate FERT's learning from fleet experience.

RECAP serves as an offline reinforcement learning baseline trained on experience collected across multiple robots. Our implementation combines autonomous fleet rollouts with demonstrations and human corrections, fits a value function to estimate action advantages, and fine-tunes an advantage-conditioned policy. We use a single round of offline policy improvement on the collected dataset. RECAP therefore provides a comparison with an alternative method that also uses fleet experience but retains expert action supervision during subsequent policy learning.

Detailed baseline implementations, adaptations, and hyperparameters are provided in~\Cref{app:baseline_details}. We additionally report human teleoperation as a separate performance reference.

\subsubsection{Evaluation protocol}
We conduct evaluations on two AgiBot G2 robots excluded from training data collection. All learned-policy evaluation trials are executed autonomously, without human intervention or corrective actions. Even robots of the same model can exhibit instance-level variations in hardware characteristics, calibration, and execution. We therefore use held-out robot instances to test whether policy improvements learned from fleet experience transfer to robots that did not contribute experience during training. This tests transfer across robot instances of the same platform, a requirement for applying fleet-learned policies beyond the robots that supplied their training experience.

For each evaluated policy on each task, we conduct 60 trials per robot, \textbf{giving 120 trials in total}. For the human teleoperation reference, six teleoperators each complete 20 trials per task, also giving 120 trials in total. We report success rates with 95\% Wilson score confidence intervals~\citep{wilson1927probable} in the figures. Task-specific success and failure criteria are detailed in~\Cref{app:task_success_criteria}.

\begin{figure}[t]
    \centering
    \includegraphics[width=\linewidth]{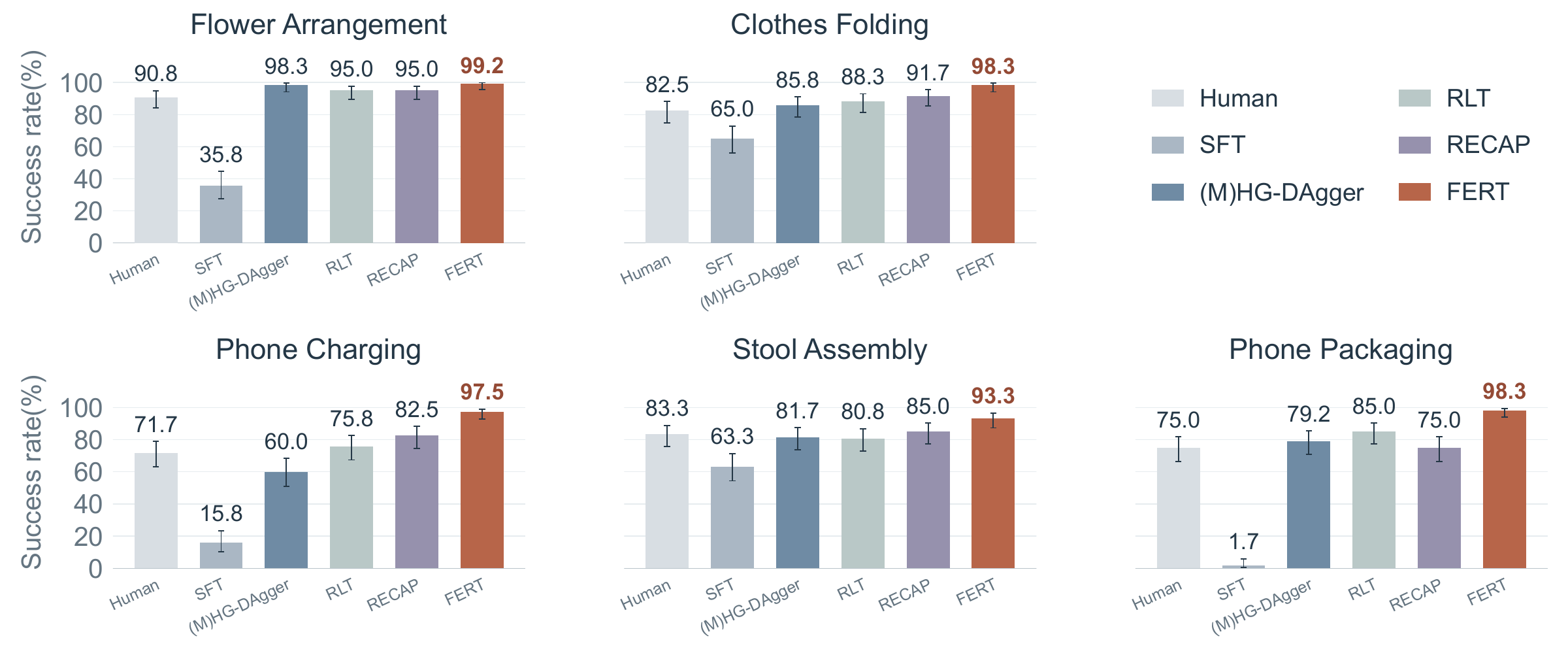}
    \caption{Main results across five tasks. Each subfigure compares the success rates of human teleoperation, SFT, the selected multi-robot HG-DAgger (labelled (M)HG-DAgger), RLT, RECAP, and FERT on one task. For FERT, we report the checkpoint trained with the largest robot fleet. Error bars indicate 95\% Wilson confidence intervals.}
    \label{fig:main-results}
\end{figure}

\subsection{Policy improvement from fleet experience}
\label{sec:exp-main}
\Cref{fig:main-results} summarizes performance across the five manipulation tasks. SFT and multi-robot HG-DAgger, labelled (M)HG-DAgger, provide the starting policies for the subsequent training stages. For FERT, we report results at the largest evaluated training fleet size for each task: 16 robots for Flower Arrangement, Clothes Folding and Phone Charging, 32 for Stool Assembly, and 96 for Phone Packaging. \Cref{tab:detail-succ} provides the success counts for each held-out test robot and the pooled success rates across the evaluated checkpoints and training fleet sizes.

Multi-robot HG-DAgger raises the mean success rate across the five tasks from 36.3\% to 81.0\%, providing stronger starting policies for subsequent learning. Without additional expert demonstrations or corrective actions, FERT further raises the mean success rate to 97.3\%, an improvement of 16.3 percentage points and a relative reduction of 86.0\% in the average failure rate (\Cref{tab:rates-failure-selected}). On Phone Charging, success increases from 60.0\% to 97.5\%, corresponding to a relative failure reduction of 93.8\%. On Phone Packaging, learning with 96 robots increases success from 79.2\% to 98.3\%, corresponding to a relative failure reduction of 92.0\%. These results show that \textbf{FERT improves the reliability of the initialized policies through fleet experience under the evaluated task and training configurations}.

\Cref{tab:detail-human} reports the task-specific success counts for all six teleoperators and their pooled success rates. FERT also exceeds the human teleoperation reference on all five tasks, with an average success-rate advantage of 16.7 percentage points. The largest gains occur on Phone Charging and Phone Packaging, reaching 25.8 percentage points and 23.3 percentage points, respectively. Under the same task-success criteria, the resulting autonomous policies therefore achieve higher observed task-success rates than the participating human teleoperators.

\begin{table}[htbp]
\centering
\begin{minipage}{0.9\linewidth}
\normalsize
\setlength{\tabcolsep}{5pt}
\renewcommand{\arraystretch}{1.25}
\setlength{\heavyrulewidth}{0.7pt}
\setlength{\lightrulewidth}{0.4pt}
\captionsetup{font=small,labelfont=bf,justification=raggedright,singlelinecheck=false,skip=8pt}
\caption{Failure rate table. (M)HG-DAgger reports the performance of the checkpoints used to initialize FERT. FERT reports results obtained with the maximum number of training robots for each task.}
\label{tab:rates-failure-selected}
\begin{tabular}{
  >{\raggedright\arraybackslash}m{\dimexpr0.30\linewidth-2\tabcolsep\relax}
  >{\centering\arraybackslash}m{\dimexpr0.24\linewidth-2\tabcolsep\relax}
  >{\centering\arraybackslash}m{\dimexpr0.20\linewidth-2\tabcolsep\relax}
  >{\centering\arraybackslash}m{\dimexpr0.26\linewidth-2\tabcolsep\relax}}
\toprule
\textbf{Task name} &
\begin{tabular}[c]{@{}c@{}}\textbf{(M)HG-DAgger}\\[-2pt]{\footnotesize (\%) $\downarrow$}\end{tabular} &
\begin{tabular}[c]{@{}c@{}}\textbf{FERT}\\[-2pt]{\footnotesize (\%) $\downarrow$}\end{tabular} &
\begin{tabular}[c]{@{}c@{}}\textbf{Relative failure}\\[-2pt]\textbf{reduction} {\footnotesize (\%) $\uparrow$}\end{tabular} \\
\midrule
Flower Arrangement & 1.67 & \textbf{0.83} & 50.0 \\
Clothes Folding & 14.17 & \textbf{1.67} & 88.2 \\
Phone Charging & 40.00 & \textbf{2.50} & 93.8 \\
Stool Assembly & 18.33 & \textbf{6.67} & 63.6 \\
Phone Packaging & 20.83 & \textbf{1.67} & 92.0 \\\midrule
\rowcolor{black!5}
\textbf{Task avg} & 19.00 & 2.67 & 86.0 \\
\bottomrule
\end{tabular}
\end{minipage}
\end{table}

Compared with the evaluated RLT and RECAP configurations, FERT achieves higher success rates on all five tasks, with average gains of 12.3 and 11.5 percentage points, respectively. Relative to RLT, the largest gain is on Phone Charging, where success increases from 75.8\% to 97.5\%. Relative to RECAP, the largest gain is on Phone Packaging, where success increases from 75.0\% to 98.3\%. These comparisons establish FERT’s performance advantage under the evaluated training configurations. The effect of increasing fleet size for HG-DAgger and FERT is examined separately in \Cref{sec:exp-scale}.

The methods also differ in how they use experience and supervision after initialization. RLT performs online residual reinforcement learning on a single robot, while our RECAP implementation performs one round of offline policy improvement on multi-robot data. Both use additional human corrective actions during subsequent policy learning. FERT instead learns from successful autonomous executions, using binary task outcomes and failure-guided retrieval to prioritize training experience without additional expert demonstrations or corrective actions. It also avoids learning a value function for advantage estimation. Together, these results show that, under the evaluated task and training configurations, FERT uses fleet-generated actions and task-outcome feedback to improve task success beyond the shared expert-supervised initialization without collecting further expert demonstrations or corrective actions.

\begin{figure}[htbp]
    \centering
    \includegraphics[width=\linewidth]{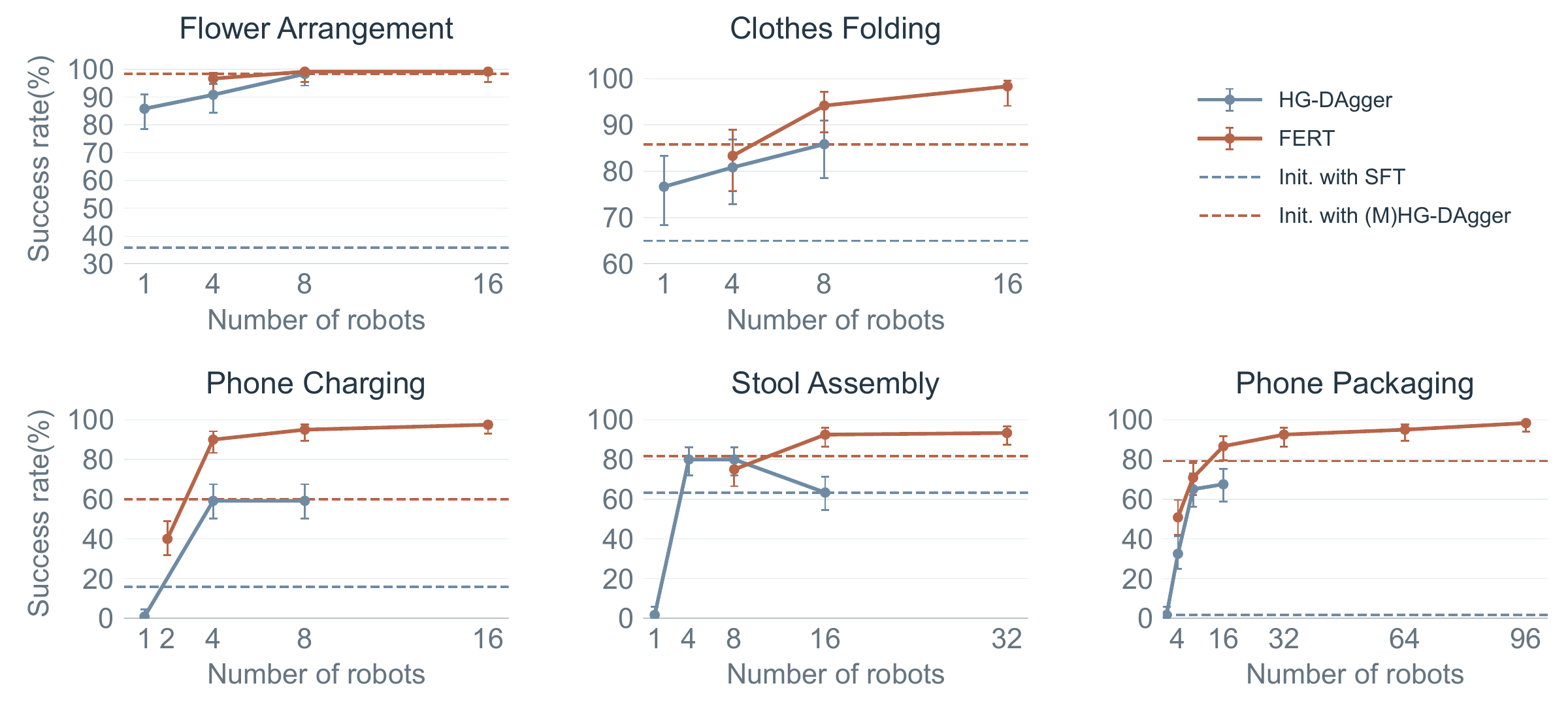}
    \caption{Scaling with fleet size across five tasks. Each panel shows success rate as a function of the number of training robots. The blue and pink curves represent HG-DAgger and FERT, respectively. HG-DAgger starts from SFT, whereas FERT starts from the selected (M)HG-DAgger checkpoint; horizontal dashed lines in different colours indicate the success rates of these initial policies. Error bars indicate 95\% Wilson confidence intervals.}
    \label{fig:scale-task}
\end{figure}

\subsection{Scaling with fleet size}
\label{sec:exp-scale}
We examine how policy performance changes for FERT and our multi-robot implementation of HG-DAgger as the number of robots contributing experience increases (\Cref{fig:scale-task}). Each panel shows success rate as a function of the number of training robots for one task. The two methods start from different policies: HG-DAgger begins from SFT, whereas FERT begins from the selected (M)HG-DAgger checkpoint. The success rates of the two initial policies are indicated by horizontal dashed lines in different colours. Within each task and method, collection time per robot is held fixed across fleet sizes, so larger fleets provide more experience for policy improvement.

For (M)HG-DAgger, single-robot training on the relatively challenging Phone Charging, Stool Assembly and Phone Packaging tasks can yield lower success rates than the SFT initialization. This may reflect overfitting to the environment of an individual robot and the behaviour of a single operator, limiting generalization to other robots. Performance improves as the fleet expands, but further increases in robot numbers lead to saturation on Phone Charging and Phone Packaging and a decline on Stool Assembly. This pattern motivates examining learning with less demanding action supervision as fleets grow. Limited human monitoring and intervention capacity is a possible explanation for the observed saturation, alongside changes in the collected data.

\textbf{For FERT, success rates generally increase with fleet size across all five tasks.} On Phone Packaging, expanding the training fleet from 16 to 96 robots raises success from 86.7\% to 98.3\%, a gain of 11.6 percentage points. At smaller fleet sizes, FERT can still perform below its initialization. For example, training with eight robots reduces success rates relative to the initial policies on both Stool Assembly and Phone Packaging. One possible explanation is that successful experience collected by smaller fleets is insufficiently rich to cover the different failure contexts encountered by the policy, limiting the effectiveness of failure-guided retrieval and subsequent policy updates. 
Together, these results show more consistent performance gains for FERT as fleet size and total collected experience increase across the evaluated tasks and scales. These findings support using experience generated during deployment to improve a shared policy as more robots contribute data. FERT collects this experience with binary task-outcome labels, without requiring additional expert demonstrations or corrective actions.

\begin{figure}[htbp]
    \centering
    \begin{subfigure}[t]{0.21\linewidth}
        \includegraphics[width=\linewidth]{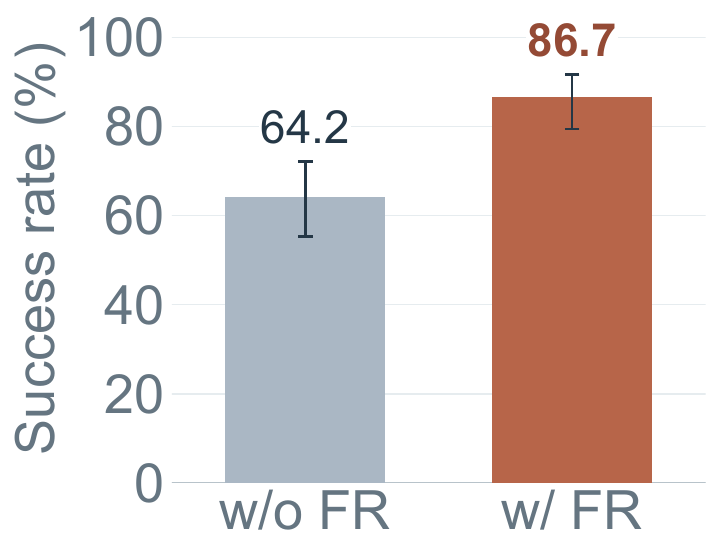}
        \caption{Failure guidance}
        \label{fig:exp-abl-fr}
    \end{subfigure}\hfill%
    \begin{subfigure}[t]{0.28\linewidth}
        \includegraphics[width=\linewidth]{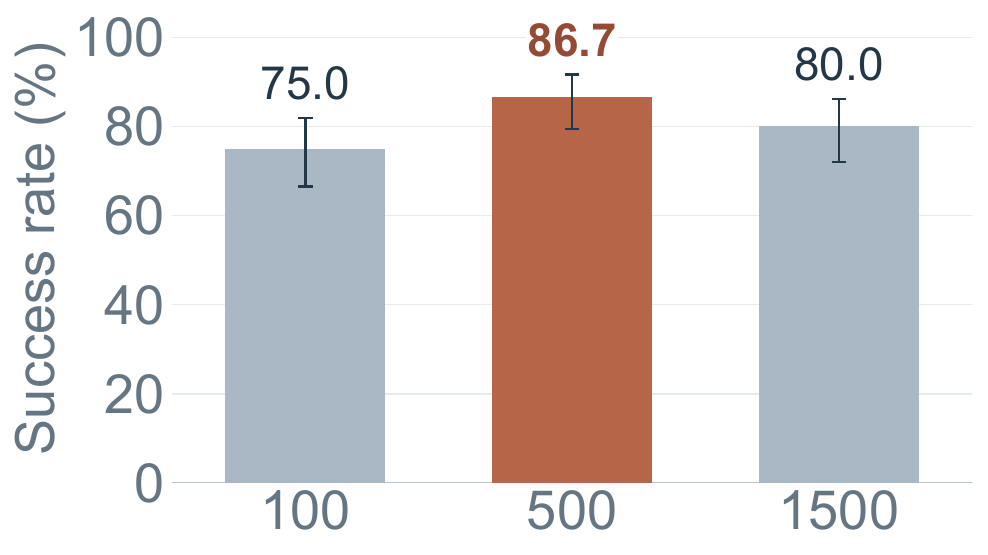}
        \caption{Weight sync interval}
        \label{fig:exp-abl-int}
    \end{subfigure}\hfill%
    \begin{subfigure}[t]{0.21\linewidth}
        \includegraphics[width=\linewidth]{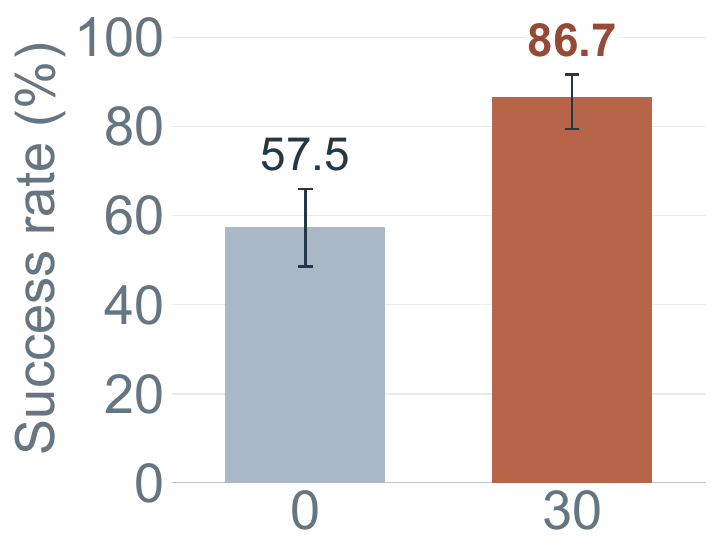}
        \caption{\centering Retrieved-segment extension (frames)}
        \label{fig:exp-abl-extend}
    \end{subfigure}\hfill%
    \begin{subfigure}[t]{0.28\linewidth}
        \includegraphics[width=\linewidth]{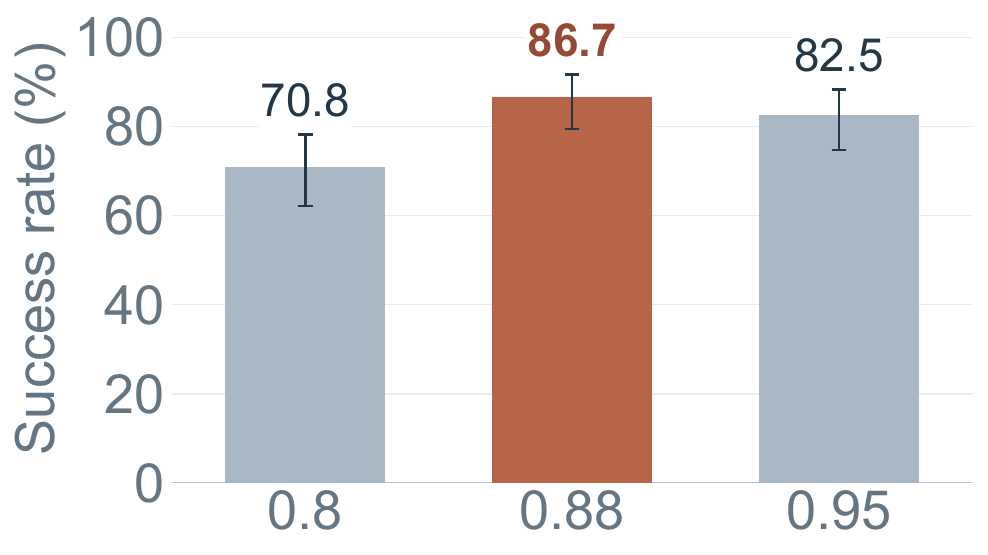}
        \caption{Priority retention factor $\rho$}
        \label{fig:exp-abl-decay}
    \end{subfigure}
    \caption{Ablation studies of FERT on Phone Packaging with 16 training robots. Pink bars indicate the full FERT configuration, while gray bars indicate the corresponding ablated variants. In (a), FR denotes failure-guided retrieval. In (b), the interval is measured in optimizer steps. In (c), each value specifies both $h_{\mathrm{pre}}$ and $h_{\mathrm{post}}$, with $L=90$ frames held fixed. In (d), $\rho$ is the priority retention factor; larger values retain previous priorities for longer. Error bars indicate 95\% Wilson confidence intervals.}
    \label{fig:ablation}
\end{figure}
\subsection{Ablation studies}
\label{sec:exp-abl}

We study FERT's design choices on Phone Packaging with 16 training robots (\Cref{fig:ablation}). The full configuration achieves a success rate of 86.7\%. We examine the contributions of failure-guided retrieval, policy synchronization frequency, retrieved-segment extension and priority retention. \Cref{tab:detail-abl} reports the success counts on each held-out test robot and the pooled success rate for every ablation configuration.

\paragraph{Contribution of failure-guided retrieval.}
We compare the full FERT configuration (w/ FR), which uses failure contexts to retrieve relevant successful segments and increase their sampling priorities, with a no-retrieval variant (w/o FR), which disables failure-guided retrieval and samples uniformly from the collective experience buffer. All other settings are held fixed. Removing failure-guided retrieval reduces success from 86.7\% to 64.2\%, a decrease of 22.5 percentage points. This result demonstrates the benefit of selecting successful experience based on the policy's failure contexts, directing learning towards behaviours relevant to its current failures. This provides empirical evidence for the experience-selection mechanism; \Cref{thm:imp} analyzes its local effect under explicit replay and retrieval-alignment assumptions.

\paragraph{Effect of policy synchronization frequency.}
Policy synchronization determines how quickly experience collection incorporates updated policies. Synchronizing every 500 optimizer steps achieves the highest success rate among the tested settings, 86.7\%. Increasing the interval to 1,500 steps reduces success to 80.0\%, while shortening it to 100 steps reduces success to 75.0\%. Longer intervals delay the use of updated policies, whereas very short intervals may allow less experience to accumulate under each policy version. Within the tested settings, these results favour a moderate synchronization interval for coupling policy refresh with experience collection.

\paragraph{Effect of retrieved-segment extension.}
FERT extends each retrieved interval to include neighbouring actions, providing continuous action context before and after the matched segment (\Cref{sec:ret}). With the maximum query length fixed at $L=90$ frames, setting $h_{\mathrm{pre}}=h_{\mathrm{post}}=30$ frames achieves 86.7\% success. Removing both extensions ($h_{\mathrm{pre}}=h_{\mathrm{post}}=0$) reduces success to 57.5\%, a decrease of 29.2 percentage points. This result highlights the importance of the actions leading into and following a matched segment for learning successful behaviour.

\paragraph{Sensitivity to priority retention.}
The retention factor $\rho$ controls the persistence of historical sampling priorities across retrieval updates (\Cref{sec:weighting}). The default setting $\rho=0.88$ achieves 86.7\% success, compared with 70.8\% at $\rho=0.8$ and 82.5\% at $\rho=0.95$. Within the tested settings, both faster and slower priority decay yield lower success rates than the default, favouring an intermediate retention factor. This pattern is consistent with a trade-off between retaining accumulated retrieval information and responding to recent failures.

\subsection{System support for fleet learning}
\label{sec:exp-system}

For learning to continue during deployment, FATS must keep the CEL loop running on a fleet whose robots fail, pause and progress at different rates, while connecting robot interaction to cloud inference and training (\Cref{sec:infra}).
We examine experience-collection continuity, fault recovery and communication in a 96-robot training run that lasted from 10:37 to 18:30. The robots paused from 12:00 to 13:00 during the operators’ lunch break. Operators provided episode-outcome labels, monitored robot safety and reset environments; they provided no demonstrations or corrective actions.

The evaluation addresses three questions.
\begin{enumerate}
    \item How do robots fail, and does HotSwarm confine each fault to the robots it affects?
    \item How does measured collection compare with trace-replay estimates for synchronous collection and for restarting the system after each fault?
    \item Can SwarmLink sustain cloud inference and experience collection across the fleet?
\end{enumerate}
Fault and collection statistics are computed from the per-robot episode and incident intervals provided in \Cref{app:infra_traces}.

\begin{figure}[t]
    \centering
    \includegraphics[width=\linewidth]{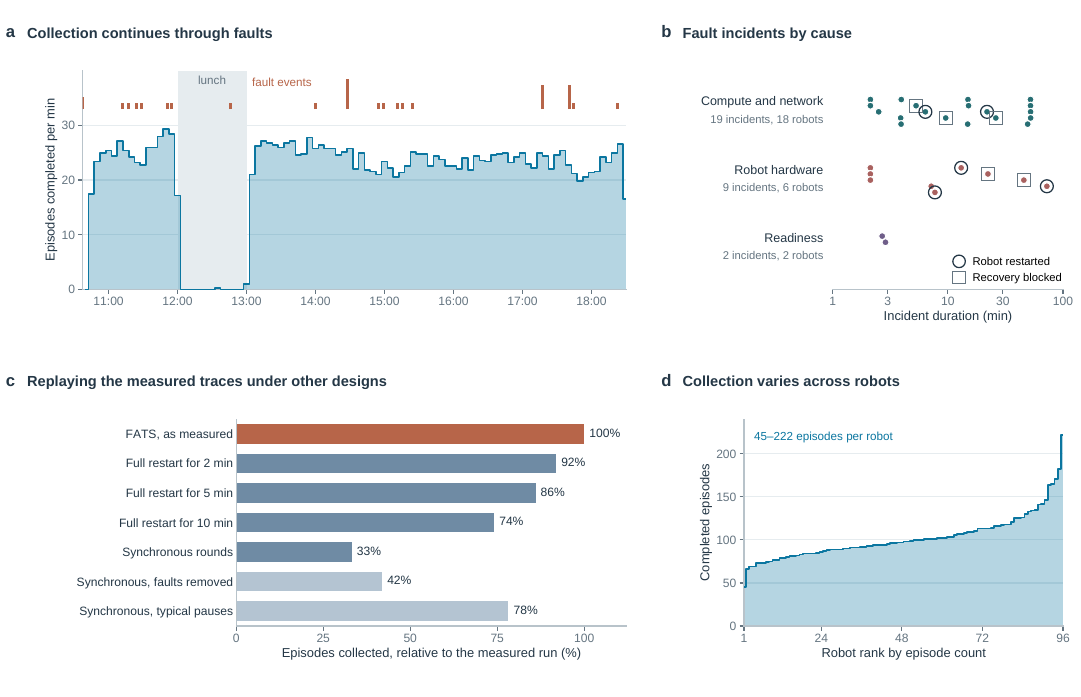}
    \caption{\textbf{FATS in the 96-robot training run.}
        \textbf{a}, Episodes completed per minute in 5-min bins.
        Marks above the curve show fault events, with height proportional to the number of robots affected,
        and the shaded interval is the lunch break.
        \textbf{b}, Fault incidents by initiating cause and duration.
        Circles mark incidents in which the robot was restarted,
        and squares mark incidents in which recovery was blocked.
        \textbf{c}, Episodes collected when each robot's measured episodes and pauses are replayed under other designs,
        relative to the run.
        A full restart stops all robots for 2, 5 or 10 minutes after each fault event,
        and synchronous rounds make each policy update wait for one episode from every robot.
        The two synchronous variants replace the pauses that overlap a fault, or all pauses, with the robot's median pause.
        \textbf{d}, Completed episodes per robot, sorted by count.
        Statistics are computed from the detailed traces in \Cref{app:infra_traces}.}
    \label{fig:fats-trace}
\end{figure}

\subsubsection{Faults and their containment}
\label{sec:exp-profile-overview}
\label{sec:err-detail}
The fleet collected 9,834 episodes during the run.
These executions supply the successful trajectories and failure contexts used by FERT, linking system availability to the supply of fleet experience.

\textbf{Faults arose from several sources and varied in both their scope and recovery time.} 
We recorded 30 fault incidents on 22 of the 96 robots (\Cref{fig:fats-trace}b).
By initiating cause, 19 incidents began with a compute or network fault, e.g., host memory exhaustion or a network interruption,
    9 with a hardware fault of the chassis, joints, battery or sensors,
    and 2 with a robot that was not ready, e.g., powered off or awaiting SDK startup.
Compute and network faults were the most widespread, affecting 18 robots with a median duration of 15\,min.
Hardware faults recurred on fewer robots, with 9 incidents on 6 robots,
    and lasted a median of 8\,min but up to 72\,min.
Some faults interrupted several robots at once.
Host memory exhaustion, for example, stopped robots 44 to 47 within 2\,min of each other.
Grouping the incidents that began within 2\,min of each other yields 19 fault events,
    i.e., one every 22\,min across the fleet.
Five incidents required restarting the robot,
    which took 3 to 11\,min, except for one restart that spanned the lunch break.
In five other incidents, recovery was blocked for 5 to 26\,min,
    and four of these ended together at 15:21.

Fleet-level collection continued through the recorded local fault events (\Cref{fig:fats-trace}a).
Outside the lunch break, 65 robots were executing at a time on average,
    and the fleet completed 24 episodes per minute.
For the 14 fault events that occurred at least 10\,min from the lunch break and from either end of the run,
    we compare the episodes completed in the 10\,min after the event with those in the 10\,min before it.
After a median event, the fleet completed 99\% as many episodes as before it, and never fewer than 88\%.
Collection resumed after 23 of the 30 incidents,
    with a median of 12\,s between the end of an incident and the robot's next episode.
Of the remaining seven, three were still open when the run ended,
    and robots 44 to 47 did not return after their host memory fault at 17:17.
In total, incidents removed 8.3 robot-hours, 1.3\% of the fleet's 660 robot-hours of operation.
Continued collection and subsequent re-admission are consistent with the local recovery design in \Cref{sec:infra-hotswarm}. HotSwarm relaunches the affected worker at its station and repairs the affected connections, while SwarmLink rejects stale data identified with the failed worker. These designs allow available workers to keep supplying experience during local recovery.

\subsubsection{Decoupled progress}
\label{sec:exp-decoupled}
\textbf{Robots progress at different rates even without faults.}
The median episode lasted 3.2\,min, but per-robot medians ranged from 0.6 to 4.2\,min,
    and robots completed between 45 and 222 episodes each (\Cref{fig:fats-trace}d).
Resets and faults add pauses of varying length between episodes.

To quantify the benefits of independent collection and recovery,
    we replay each robot's measured sequence of episodes and pauses under two alternative designs.
The morning and afternoon are replayed as separate sessions,
    because the lunch break stops all robots at the same time of day.

\paragraph{Synchronous rounds.}
Each policy update waits until every robot that is still collecting has finished its next episode,
    and a robot starts its next episode once the round has begun and its own pause has elapsed.
The replay favors this alternative by ignoring the time to update the policy between rounds. In the trace replay, synchronous rounds collect only 33\% of the episodes of the run (\Cref{fig:fats-trace}c).
A round lasts a median of 9.3\,min, more than twice a robot's typical cycle of 4.1\,min,
    because it ends only when the robot with the longest pause or episode finishes.
Faults explain a small part of this loss.
Replacing each pause that overlaps a fault with the robot's median pause raises synchronous collection to 42\%,
    and replacing every pause in this way raises it to 78\%.
The remaining 22\% is lost because episode durations and typical pauses differ across robots and from one episode to the next.
\textbf{These results show that independent collection avoids waiting caused by uneven execution times and pauses, even in the absence of faults.}

\paragraph{Full restart.}
Every fault event stops all robots for a specified restart duration of 2, 5 or 10\,min,
    and the episodes in progress are discarded and repeated.
The replay favors this alternative by ignoring the repair time of the failed robot, which a full restart may also have to wait for. The full-restart replay preserves 92\%, 86\% and 74\% of the episodes for restart durations of 2, 5 and 10\,min, respectively.
The analytical model in \Cref{app:infra_model} predicts corresponding collection-time fractions of 91.6\%, 81.3\% and 68.5\%.
The replay measures completed episodes, whereas the model estimates time available for collection; both indicate larger losses with longer restarts. \textbf{Independent recovery avoids extending these interruptions to robots unaffected by the fault.}

Under the analytical model's fixed per-robot fault-rate and downtime assumptions, whole-system restart loses a larger fraction of collection time as fleet size grows. Local recovery confines the recorded loss to affected robots, 1.3\% in this run, and the model predicts a fleet-size-independent expected fraction for that loss.

\subsubsection{Edge-cloud communication}
\label{sec:exp-communication}
We profile communication during the same run.
Robot interaction runs at the edge, while policy inference and optimization run in the cloud,
    and the two clusters are connected through a dedicated 30\,Gbps link (\Cref{app:infra_deployment}).
We measure the elapsed times of send and receive operations on three channels (\Cref{fig:exp-channel}).
On SwarmLink's interaction path, Env-to-Rollout carries observations, including images and robot states, to inference workers,
    and Rollout-to-Env returns action chunks for execution.
Env-to-Train records experience-transfer operations for training.
These channel-operation durations do not separate network transfer from waiting for data or computation.
They characterize communication demands rather than isolating the benefit of individual mechanisms such as batching.

Interaction messages are frequent and small.
Three $224\times224$ RGB images contain approximately 0.45\,MB of raw uint8 image data per observation,
    excluding proprioceptive state and serialization overhead.
Action chunks sent through Rollout-to-Env require approximately 1.88\,kB/s per robot.
Episodes are larger and less frequent.
A representative Phone Packaging episode contains approximately 3,500 recorded timesteps and occupies 1.58\,GB,
    and robots in the run completed an episode about every 4\,min.

\begin{figure}[t]
    \centering
    \includegraphics[width=\linewidth]{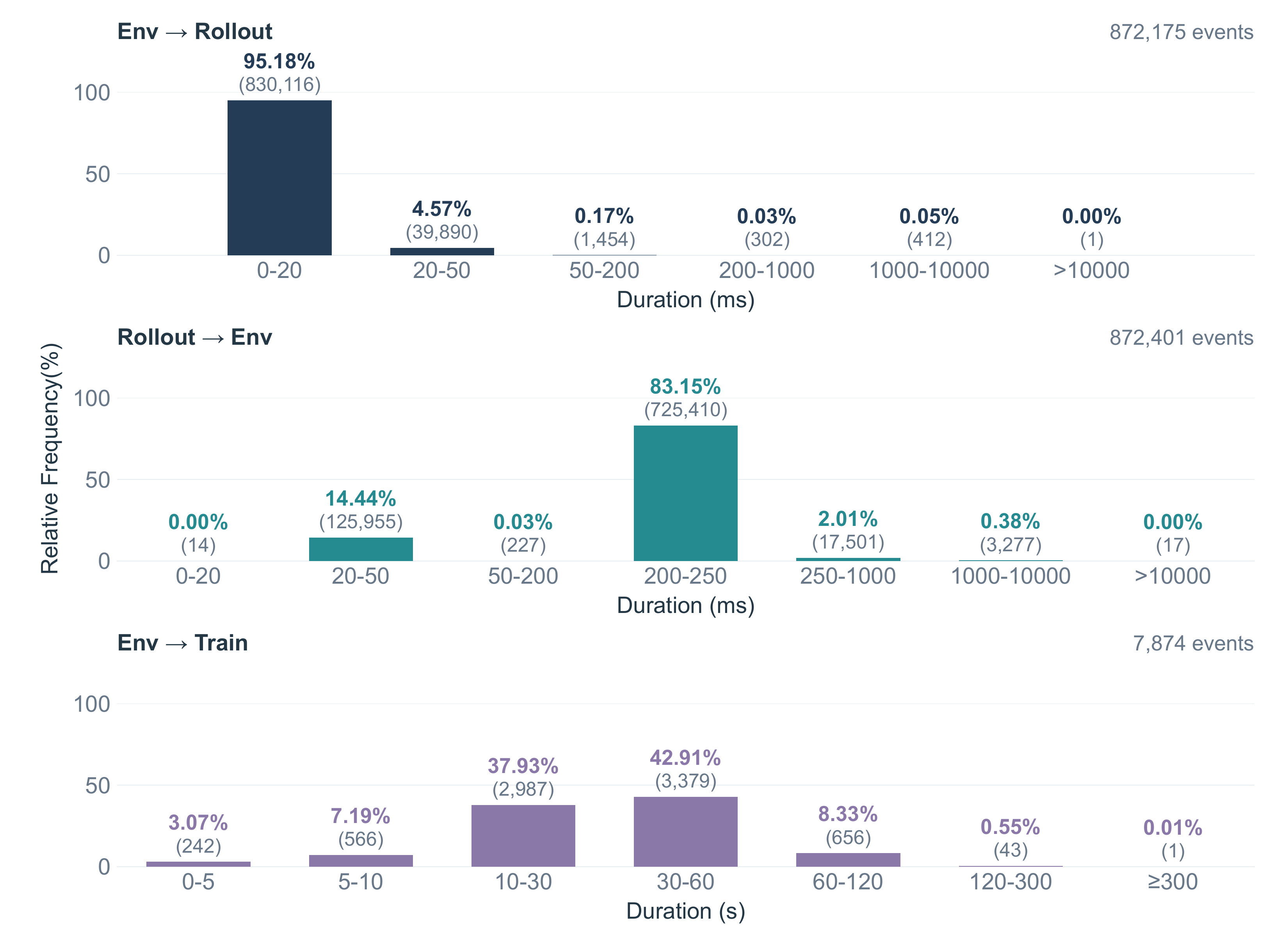}
    \caption{Measured communication-operation durations across observation, action and experience channels. Bars report the fraction and count of operations in each duration interval,
        and each panel shows the total number of events.}
    \label{fig:exp-channel}
\end{figure}

For Env-to-Rollout, 95.18\% of measured operations finish within 20\,ms and more than 99\% within 50\,ms.
For Rollout-to-Env, 14.44\% finish within 50\,ms and approximately 97.6\% within 250\,ms.
Most action-channel operations, 83.15\%, fall in the 200--250\,ms interval.
Although this channel offers room for optimization, operations exceeding 1\,s account for less than 1\% of measurements.

More than 99\% of Env-to-Train operations finish within two minutes.
This difference in timescales motivates serving inference requests independently of experience delivery (\Cref{sec:infra-swarmlink}).
Since a robot completes an episode about every 4\,min, whole-episode transfers keep pace with collection,
    and episodes do not accumulate at the robot site.
\FloatBarrier

\section{Related Works}
\label{sec:related_work}

\subsection{Robot Learning from Real-world Experience}

Real-world interaction provides experience for improving robot policies beyond their initial training. Interactive imitation learning addresses the distribution mismatch between expert demonstrations and policy execution by collecting expert supervision on states encountered by the learner. DAgger~\citep{ross2011reduction} formalizes this approach through iterative dataset aggregation. Its human-gated variant, HG-DAgger~\citep{kelly2019hg}, allows human experts to decide when to take control and collects corrective behaviour during these interventions, making the interaction protocol better suited to real-world human supervision. DART~\citep{laskey2017dart} addresses distribution mismatch by injecting noise during demonstration collection, while ThriftyDAgger~\citep{hoque2021thriftydagger} allocates a limited intervention budget using novelty and risk. Reinforcement learning (RL) methods also combine expert data with autonomous interaction. HIL-SERL~\citep{luo2024hilserl} uses demonstrations, interventions and autonomous experience for manipulation learning via RLPD~\citep{ball2023efficient} algorithm backbone. ConRFT~\citep{chen2025conrft} incorporates human interventions into reinforcement learning for vision-language-action policies. RECAP~\citep{intelligence2025pi06vlalearnsexperience} integrates demonstrations, autonomous rollouts and expert corrections for policy improvement. Related offline-to-online methods include AWAC~\citep{nair2020awac}, which combines prior data with online interaction using advantage-weighted actor updates. 

These approaches differ in how expert-provided behaviour and autonomous experience contribute to policy updates. Our work focuses on policy improvement from experience generated by deployed robots without collecting additional expert actions. Following expert-supervised initialization, CEL connects fleet experience collection, shared policy updates and subsequent execution. Binary task outcomes provide evaluative feedback, while the actions used for policy improvement come from the fleet itself.

\subsection{Fleet-scale Robot Learning}

Fleet-scale learning combines experience from multiple robots to improve shared policies. \citet{ghasemipour2025selfimproving} use steps-to-go predictions to construct rewards and success detectors, enabling autonomous practice across a robot fleet with limited human supervision. SOP~\citep{pan2026sop} connects fleet execution to online learning through an asynchronous system that collects execution experience and human intervention signals and distributes updated policies. LWD~\citep{wang2026lwd} connects deployment and policy improvement through fleet-scale offline-to-online reinforcement learning using autonomous rollouts and human interventions.

These studies establish that parallel robot execution can support shared policy improvement. CEL builds on this direction by connecting fleet-generated experience with shared policy updates during deployment, with FERT providing the concrete learning algorithm evaluated here. We investigate this learning process without additional expert action supervision, using training fleets of up to 96 robots. With collection time per robot held fixed, we evaluate how policy performance changes as the training fleet and the total experience it generates increase.

\subsection{Experience Retrieval and Prioritized Learning}
Retrieval and replay prioritization determine which experience receives greater emphasis during training. Behavior Retrieval~\citep{du2023behavior} uses task-specific expert data to query relevant behaviours from a larger offline dataset. Graph Search and Retrieval~\citep{yin2024offline} uses graph-based value estimates to retrieve actions for imitation. STRAP~\citep{memmel2025strap} uses target demonstrations to retrieve relevant sub-trajectories from an offline dataset. Prioritized experience replay~\citep{schaul2016prioritized} allocates updates according to temporal-difference errors, whereas self-imitation learning~\citep{oh2018selfimitation} reinforces past actions whose returns exceed a learned value estimate. Advantage-weighted regression (AWR)~\cite{peng2019advantage} fits a policy to replayed actions weighted by estimated advantages. AWAC~\citep{nair2020awac} similarly uses advantage-weighted maximum-likelihood actor updates. These methods differ in the information used to select experience and in how the selected data influence policy updates.

FERT uses contexts from failed autonomous executions as queries into successful fleet experience. Retrieved segments receive increased sampling priority during supervised policy updates, while failed actions are not used as imitation targets. The retrieval queries therefore come from the fleet's own execution rather than target demonstrations, and the priorities do not require a learned value estimate. Successful executions supply action targets, while failed attempts guide the allocation of updates among those targets. FERT provides a concrete mechanism for using fleet experience within the broader CEL paradigm.

\subsection{Distributed Systems for Robot Learning}

Distributed learning systems improve throughput by separating experience collection from policy optimization. Ape-X~\citep{horgan2018distributed} aggregates experience from parallel actors in a shared prioritized replay buffer, while IMPALA~\citep{espeholt2018impala} decouples acting and learning and uses off-policy correction to account for differences between the actors' policies and the policy being optimized. SEED RL~\cite{espeholt2019seed} further separates environment interaction from centralized policy inference to improve accelerator utilization. RLlib~\citep{liang2018rllib} provides reusable abstractions for composing and distributing RL computations.

In robotics, \citet{gu2017deep} demonstrated asynchronous off-policy learning across multiple robots, and \citet{yahya2017collective} developed distributed asynchronous guided policy search to train a shared visuomotor policy. QT-Opt~\citep{kalashnikov2018scalable} combined real-robot data collection with distributed replay and asynchronous optimization for large-scale grasping. More recently, SOP~\citep{pan2026sop} connected fleet execution to a centralized learner with asynchronous policy distribution for VLA policy learning. RLinf-USER~\citep{zang2026rlinfuser} provides infrastructure based on Ray~\citep{moritz2018ray} for heterogeneous robots and cloud--edge communication, combining asynchronous learning with persistent experience storage and crash recovery.

These studies establish asynchronous execution and shared learning infrastructure as foundations for robot learning at scale. FATS builds on this direction to sustain CEL across a fleet, allowing policy improvement to continue despite interruptions to individual robots. Within our approach, FERT determines how successful and failed executions contribute to policy updates, while FATS supports the distributed process through which fleet experience is collected and used. Together, they realize a learning loop whose subsequent action supervision comes from robot execution, with binary outcome feedback supplied by operators in our experiments.

\section{Conclusion}
We investigated learning from fleet experience as a basis for improving robot policy reliability beyond expert-supervised initialization. Collective Experience Learning (CEL) connects experience generated by deployed robots to shared policy improvement, shifting subsequent action supervision from newly collected expert actions to robot executions. FERT provides the learning mechanism through failure-guided retrieval of successful experience, while FATS supports asynchronous collection and learning despite local interruptions.
Across five real-world manipulation tasks, the resulting policies achieved success rates of 93.3–99.2\% on robots excluded from training data collection. With collection time per robot held fixed, larger fleets provided more experience and generally improved performance across the evaluated sizes. On phone packaging, learning with 96 robots raised success from 79.2\% to 98.3\%. During the same run, local fault events occurred on average every 22 minutes, yet fleet-level experience collection continued, with fault-related downtime accounting for only 1.3\% of total robot time.
The current study evaluates task-specific policies on a single robot platform and retains human-provided outcome labels and environment resets. Extending CEL to broader task distributions, heterogeneous robots and longer periods of deployment remains future work. Within the evaluated setting, these results support CEL as a scalable learning paradigm in which expanding deployment supplies experience for improving shared policies, without requiring a corresponding expansion of expert action supervision.

\newpage

\bibliographystyle{plainnat}
\bibliography{references}  %

\newpage
\appendix
\crefalias{section}{appendix}
\crefalias{subsection}{appendix}
\crefalias{subsubsection}{appendix}
\crefname{appendix}{Appendix}{Appendices}
\Crefname{appendix}{Appendix}{Appendices}
\section{Author List and Contributions}
\label{sec:contibution}

\subsection*{Author list}
Chao Yu\textsuperscript{1,*}, Shu'ang Yu\textsuperscript{1,*}, Hao Lin\textsuperscript{3,*}, Hongzhi Zang\textsuperscript{1,*}, Yuxiang Yan\textsuperscript{2,*}, Chenxi Jiang\textsuperscript{2,*}, Zhen Guo\textsuperscript{3}, Chuankang Li\textsuperscript{2}, Yihao Chen\textsuperscript{1}, Xin Xu\textsuperscript{1}, Xu Fu\textsuperscript{3}, Jiakai Zhou\textsuperscript{1}, Liangzhi Shi\textsuperscript{1}, Hongyi Ren\textsuperscript{1}, Yiwei Liu\textsuperscript{1}, Jing Wu\textsuperscript{2}, Yuanqing Wang\textsuperscript{3}, Chen Wu\textsuperscript{2}, Bo Dai\textsuperscript{3}, Chunyu Yang\textsuperscript{2}, Hongliang Huang\textsuperscript{2}, Shuoyu Yue\textsuperscript{2}, Weiqi Yin\textsuperscript{2}, Yinuo Chen\textsuperscript{1}, Ruihua Chen\textsuperscript{3}, Chunyang Zhu\textsuperscript{3}, Han Xiao\textsuperscript{2}, Yanghao Li\textsuperscript{3}, Ruipu Wu\textsuperscript{2}, Zhikai Cai\textsuperscript{2}, Xiu Chen\textsuperscript{2}, Xu Tan\textsuperscript{2}, Si Liu\textsuperscript{2}, Yuchen Yan\textsuperscript{3}, Jialin Zhang\textsuperscript{3}, Wen Zeng\textsuperscript{3}, Yiming Zhou\textsuperscript{3}, Zixiao Huang\textsuperscript{3}, Xinlei Chen\textsuperscript{1}, Guanghui Ren\textsuperscript{2}, Quanlu Zhang\textsuperscript{3}, Boxun Li\textsuperscript{3}, Maoqing Yao\textsuperscript{2,\dag}, Yu Wang\textsuperscript{1,\dag}

\medskip
\noindent\textsuperscript{1} Tsinghua University\\
\textsuperscript{2} AgiBot\\
\textsuperscript{3} Infinigence AI

\medskip
\noindent\textsuperscript{*} These authors contributed equally to this work.\\
\textsuperscript{\dag} Corresponding authors: Maoqing Yao and Yu Wang.

\subsection*{Author Contributions}
\paragraph{Project leadership.} Chao Yu served as the overall project leader. Guanghui Ren and Quanlu Zhang served as the project leaders at AgiBot and Infinigence AI, respectively.

\paragraph{Algorithm development.} Shu'ang Yu and Hongzhi Zang contributed to algorithm development.

\paragraph{Infrastructure.} At Infinigence AI, Hao Lin, Zhen Guo, Xu Fu, Yuanqing Wang, Bo Dai, Ruihua Chen, Chunyang Zhu, Yanghao Li, Yuchen Yan, Jialin Zhang, Wen Zeng, Yiming Zhou, and Zixiao Huang contributed to the design, implementation, optimization, and validation of FATS.

At AgiBot, Yuxiang Yan, Chenxi Jiang, Chuankang Li, Chunyu Yang, Han Xiao, Zhikai Cai, Xu Tan and Si Liu contributed to the early system design and optimization of FATS, the development of the AgiBot multi-robot management platform and its integration with FATS, and the development of infrastructure for real-robot experimentation and deployment.

\paragraph{Experiments.} Shu'ang Yu, Hongzhi Zang, Xin Xu, Jiakai Zhou, Liangzhi Shi, Hongyi Ren, Yiwei Liu, Hongliang Huang, Weiqi Yin, Yinuo Chen, and Ruipu Wu contributed to experimental execution and evaluation. Xiu Chen conducted experimental testing. Yihao Chen provided the experimental site.

\paragraph{Baseline implementation.} Chen Wu and Shuoyu Yue contributed to baseline implementation.

\paragraph{Manuscript writing.} Chao Yu, Shu'ang Yu, Hongzhi Zang, Hao Lin, Liangzhi Shi and Quanlu Zhang contributed to writing the manuscript. Xinlei Chen provided advice on the manuscript.

\paragraph{Project management.} Jing Wu managed overall project coordination and execution.

\paragraph{Website and video.} Yinuo Chen, Shu'ang Yu, and Chao Yu developed the project website. Liangzhi Shi and Chao Yu produced the initial version of the project video.

\paragraph{Advisory support.} Boxun Li, Maoqing Yao, and Yu Wang served as project advisors.

\subsection*{Acknowledgements}
We thank Anwei Lin, Xuan Hu, Cheng Ruan, Wei Shan, Yuehan Niu, Xiaofan Zhang, Hui Fang, Yi Xu, Hui Wang, Yao Chen, Shubin Wei, Jinlei Pan, Jia Zeng, Jianbo Mao, Rushuai Yang, Kailun Yuan, Chiming Liu, Zhichao Wu, Rui Zhang, Weihui Zhao, Xiaohan Yan, Houlin Li, Bin Xia, Shiyun Zhang, Qinman Wu, Geng Xi, Hanwen Shi, Hui Zhao, Yujie Chen, Tong Liu, Ying Qiu, Min Qin, Haotian Du, Maoping Li, Wei Shen, Zhaoxiang Zhang, Oghurjan Niyaz, Ziwen Song, Zhuotong Han, Xin Chen, Jie Yang, Wenping Shan, Binbin Qiu, Bowen Jiang, Xiongfeng Cai, Jingyuan Wang, Hui Min, Liyan Zhang, Ken Chen, Haoyu Cao, Hongliang Zhu, Cheng Jing, and Sheng Qu for their contributions and support to overall project site coordination, robot maintenance and calibration, robot data, algorithm design, infrastructure, robot system design, and robot software versions.

\newpage
\section{Theoretical Analysis of FERT}
\label{app:fert_theory}

This appendix provides the setting and proofs supporting \Cref{sec:understanding_fert}. It analyzes a local likelihood-based mean-gradient update as a model of successful-experience fitting and retrieval prioritization. We use the same notation throughout.
All policy gradients are evaluated at $\theta_0$, and task
conditioning is suppressed. The replay distribution and retrieval
priorities remain fixed during each update.

\subsection{Proof of Lemma 1: Successful-Replay Alignment}
\label{app:lemma1}
\begin{proof}
We consider episodes of length $T_c$ in action chunks. Early
termination can be represented by parameter-independent absorbing
steps. The observation $o$ includes any history required by the
policy, and each $A$ is the action chunk sampled at that step.

For the analysis, we model $\mathcal{D}^{+}$ by sampling an episode
from the current policy $\pi_{\theta_0}$, conditioning on
$y(\tau)=1$, and selecting one of its $T_c$ chunks uniformly.
This is a population approximation to the historical successful
buffer. Assume $J(\theta_0)>0$ and $\mathbf{v}\neq0$.

Let $p_\theta(\tau)$ denote the trajectory likelihood. Since the
initial-state distribution and environment dynamics are independent
of $\theta$, only the policy factors contribute to its logarithmic
derivative:
\begin{equation}
    \left.
    \nabla_\theta\log p_\theta(\tau)
    \right|_{\theta=\theta_0}
    =
    \sum_{t=0}^{T_c-1}
    \nabla_\theta\log\pi_{\theta_0}(A_t\mid o_t).
    \label{eq:fert_app_trajectory_score}
\end{equation}
Under the regularity assumptions in the main text, differentiating
the expected return therefore gives
\begin{equation}
    \begin{aligned}
        \mathbf{v}
        &=
        \left.
        \nabla_\theta
        \int y(\tau)p_\theta(\tau)\,d\tau
        \right|_{\theta=\theta_0}\\
        &=
        \mathbb{E}_{\tau\sim\pi_{\theta_0}}
        \left[
            y(\tau)
            \sum_{t=0}^{T_c-1}
            \nabla_\theta\log\pi_{\theta_0}(A_t\mid o_t)
        \right]\\
        &=
        J(\theta_0)
        \mathbb{E}_{\tau\sim\pi_{\theta_0}}
        \left[
            \sum_{t=0}^{T_c-1}
            \nabla_\theta\log\pi_{\theta_0}(A_t\mid o_t)
            \,\middle|\,
            y(\tau)=1
        \right].
    \end{aligned}
    \label{eq:fert_app_policy_gradient}
\end{equation}
The last equality uses the binary outcome $y(\tau)$.

Uniform chunk sampling from these successful episodes gives
\begin{equation}
    \begin{aligned}
        \mathbf{G}_{\mathrm{unif}}
        &=
        \frac{1}{T_c}
        \mathbb{E}_{\tau\sim\pi_{\theta_0}}
        \left[
            \sum_{t=0}^{T_c-1}
            \nabla_\theta\log\pi_{\theta_0}(A_t\mid o_t)
            \,\middle|\,
            y(\tau)=1
        \right]\\
        &=
        \frac{\mathbf{v}}{T_cJ(\theta_0)}.
    \end{aligned}
\end{equation}
Consequently,
\begin{equation}
    \begin{aligned}
        \mu
        &=
        \mathbb{E}_{\mathcal{D}^{+}}
        \left[
            \mathbf{v}^{\top}
            \nabla_\theta\log\pi_{\theta_0}(A\mid o)
        \right]\\
        &=
        \mathbf{v}^{\top}\mathbf{G}_{\mathrm{unif}}
        =
        \frac{\|\mathbf{v}\|_2^2}{T_cJ(\theta_0)}
        >0.
    \end{aligned}
    \label{eq:fert_app_success_alignment}
\end{equation}
This proves Lemma 1.
\end{proof}

\subsection{Proof of Lemma 2: Retrieval Alignment}
\label{app:lemma2}

\begin{proof}
We make the stage-wise setting precise using a robot task with
$T_c$ sequential stages, such as reaching, grasping, and insertion.
Each stage is represented by one action chunk in this model.
Stage $k$ has an independent trainable logit $\theta_k$ and a local failure rate $\phi_k=1-\operatorname{sigmoid}(\theta_k)$, conditional on reaching that stage. The task receives return one exactly when all stages are completed, so
\begin{equation}
    J(\theta)=\prod_{k=1}^{T_c}(1-\phi_k).
\end{equation}
All quantities below are evaluated at $\theta_0$.

Within each stage, the relative likelihoods of successful action
chunks are fixed. Specifically, for a successful chunk at stage
$k$, write
\begin{equation}
    \pi_\theta(A\mid o_k)=(1-\phi_k)q_k(A\mid o_k).
\end{equation}
where $q_k$ is a parameter-independent distribution over successful
chunks. This model isolates learning within each stage without
parameter interference between stages.

The logistic parameterization gives
\begin{equation}
    \frac{\partial\phi_k}{\partial\theta_k}
    =-\phi_k(1-\phi_k),
    \qquad
    \frac{\partial J}{\partial\theta_k}
    =J(\theta_0)\phi_k,
    \qquad
    \frac{\partial\log(1-\phi_k)}{\partial\theta_k}
    =\phi_k.
\end{equation}
Hence the score of a successful chunk from stage $k$ is
$\phi_k\mathbf{e}_k$, where $\mathbf{e}_k$ is the $k$th
coordinate vector. Its first-order contribution is therefore
\begin{equation}
    u_k
    =
    \mathbf{v}^{\top}
    \nabla_\theta\log\pi_{\theta_0}(A\mid o_k)
    =
    J(\theta_0)\phi_k^2.
    \label{eq:fert_app_stage_usefulness}
\end{equation}
Thus, under this model, successful experience from a stage with a
higher local failure rate has a larger first-order contribution
to expected return.

Every successful episode contains all stages, so uniform successful
replay samples stages uniformly. Let $\omega_k$ be the nonnegative
retrieval weight assigned to stage $k$, normalized such that
$T_c^{-1}\sum_k\omega_k=1$. We assume
\begin{equation}
    \phi_k>\phi_\ell
    \quad\Longrightarrow\quad
    \omega_k\geq\omega_\ell,
    \label{eq:fert_app_retrieval_order}
\end{equation}
with at least one pair satisfying
$\phi_k>\phi_\ell$ and $\omega_k>\omega_\ell$.

Using the normalization of the retrieval weights, the covariance becomes
\begin{equation}
\begin{aligned}
    \kappa
    &= \frac{1}{T_c}\sum_{k=1}^{T_c}(\omega_k-1)u_k \\
    &= \frac{J(\theta_0)}{T_c^2}
       \sum_{k<\ell}
       (\omega_k-\omega_\ell)(\phi_k^2-\phi_\ell^2).
\end{aligned}
\end{equation}
The second equality follows by expanding the pairwise differences. Every term in the final sum is nonnegative by \Cref{eq:fert_app_retrieval_order}, and at least one is strictly positive. Since $J(\theta_0)>0$, we obtain $\kappa>0$. This proves Lemma 2. 

\end{proof}

The ordering condition concerns local failure rates conditional on
reaching a stage. It specifies the retrieval quality required by
the result; it is not guaranteed by nonuniform priorities alone.

\subsection{Proof of Theorem 1: Lower bound on the FERT improvement}
\label{app:thm1}

\begin{proof}
We first bound the higher-order effects of a finite update.
For any displacement $\Delta$ whose entire update segment lies
in the smooth neighborhood, the fundamental theorem of calculus
gives
\begin{equation}
    \begin{aligned}
        &J(\theta_0+\Delta)-J(\theta_0)
        -\mathbf{v}^{\top}\Delta\\
        &\qquad=
        \int_0^1
        [\nabla J(\theta_0+t\Delta)-\mathbf{v}]^{\top}
        \Delta\,dt.
    \end{aligned}
\end{equation}
Since $\nabla J$ is $\lambda_J$-Lipschitz, the absolute value of
this remainder is bounded by
\begin{equation}
    \begin{aligned}
        &\left|
            J(\theta_0+\Delta)-J(\theta_0)
            -\mathbf{v}^{\top}\Delta
        \right|\\
        &\qquad\leq
        \int_0^1\lambda_J t\|\Delta\|_2^2\,dt
        =
        \frac{\lambda_J}{2}\|\Delta\|_2^2.
    \end{aligned}
    \label{eq:fert_app_remainder}
\end{equation}

\paragraph{Uniform replay improves expected return.}
By the definition of $\mu$,
\begin{equation}
    \mathbf{v}^{\top}\mathbf{G}_{\mathrm{unif}}
    =
    \mathbb{E}_{\mathcal{D}^{+}}[u]
    =
    \mu.
\end{equation}
Applying the lower bound in \Cref{eq:fert_app_remainder} with
$\Delta=\eta\mathbf{G}_{\mathrm{unif}}$ gives
\begin{equation}
    J(\theta_{\mathrm{unif}})-J(\theta_0)
    \geq
    \eta\mu
    -
    \frac{\lambda_J\eta^2}{2}
    \|\mathbf{G}_{\mathrm{unif}}\|_2^2.
    \label{eq:fert_app_uniform_bound}
\end{equation}
Lemma 1 establishes $\mu>0$. Therefore, this difference is
strictly positive whenever the update remains in the smooth
neighborhood and
\begin{equation}
    0<\eta<
    \frac{2\mu}
         {\lambda_J\|\mathbf{G}_{\mathrm{unif}}\|_2^2}.
    \label{eq:fert_app_uniform_stepsize}
\end{equation}

\paragraph{FERT improves more than uniform replay.}
Because $\mathbb{E}_{\mathcal{D}^{+}}[\omega]=1$,
\begin{equation}
    \begin{aligned}
        \mathbf{v}^{\top}
        (\mathbf{G}_{\mathrm{FERT}}-\mathbf{G}_{\mathrm{unif}})
        &=
        \mathbb{E}_{\mathcal{D}^{+}}[\omega u]
        -
        \mathbb{E}_{\mathcal{D}^{+}}[u]\\
        &=
        \operatorname{Cov}_{\mathcal{D}^{+}}(\omega,u)
        =
        \kappa.
    \end{aligned}
\end{equation}
Apply the lower bound in \Cref{eq:fert_app_remainder} to the FERT
update and the upper bound to the uniform update. Subtracting gives
\begin{equation}
    \begin{aligned}
        J(\theta_{\mathrm{FERT}})
        -
        J(\theta_{\mathrm{unif}})
        \geq\;&
        \eta\kappa\\
        &-
        \frac{\lambda_J\eta^2}{2}
        \left(
            \|\mathbf{G}_{\mathrm{FERT}}\|_2^2
            +
            \|\mathbf{G}_{\mathrm{unif}}\|_2^2
        \right).
    \end{aligned}
    \label{eq:fert_app_retrieval_bound}
\end{equation}
Lemma 2 establishes $\kappa>0$. Hence this difference is strictly
positive whenever both updates remain in the smooth neighborhood
and
\begin{equation}
    0<\eta<
    \frac{2\kappa}
         {\lambda_J
          \left(
              \|\mathbf{G}_{\mathrm{FERT}}\|_2^2
              +
              \|\mathbf{G}_{\mathrm{unif}}\|_2^2
          \right)}.
    \label{eq:fert_app_retrieval_stepsize}
\end{equation}

Combining the two comparisons, a sufficient step-size condition is
\begin{equation}
    0<\eta<
    \min\left\{
        \frac{2\mu}
             {\lambda_J\|\mathbf{G}_{\mathrm{unif}}\|_2^2},
        \frac{2\kappa}
             {\lambda_J
              \left(
                  \|\mathbf{G}_{\mathrm{FERT}}\|_2^2
                  +
                  \|\mathbf{G}_{\mathrm{unif}}\|_2^2
              \right)}
    \right\},
    \label{eq:fert_app_stepsize}
\end{equation}
together with the neighborhood requirement. Both upper bounds are
positive under Lemmas 1 and 2, so a sufficiently small positive
step size satisfies these conditions and yields
\begin{equation}
    J(\theta_{\mathrm{FERT}})
    >
    J(\theta_{\mathrm{unif}})
    >
    J(\theta_0).
\end{equation}

\paragraph{Decomposition of the FERT improvement.}
The normalization of $\omega$ also gives
\begin{equation}
    \mathbf{v}^{\top}\mathbf{G}_{\mathrm{FERT}}
    =
    \mathbb{E}_{\mathcal{D}^{+}}[\omega u]
    =
    \mu+\kappa.
\end{equation}
Applying the lower bound in \Cref{eq:fert_app_remainder} with
$\Delta=\eta\mathbf{G}_{\mathrm{FERT}}$ gives
\begin{equation}
    J(\theta_{\mathrm{FERT}})-J(\theta_0)
    \geq
    \eta\mu+\eta\kappa
    -
    \frac{\lambda_J\eta^2}{2}
    \|\mathbf{G}_{\mathrm{FERT}}\|_2^2.
\end{equation}
This proves the lower bound in \Cref{eq:fert_improvement_decomposition} and completes the proof of Theorem 1.
\end{proof}

The guarantee applies to one mean-gradient update under the
current-policy replay approximation and the stage-wise retrieval
model. Retrieval contributes the positive first-order term
$\eta\kappa$, while the step-size conditions control the
higher-order effects of the update. These conditions define the scope of the local result; repeated learning with historical replay and the flow-matching implementation is evaluated experimentally.

\section{Policy Initialization}
\label{app:policy_init}
\label{app:policy_preparation}

This appendix details the expert-supervised policy initialization used in our experiments. For each task, we adapt the $\pi_0$ policy through supervised fine-tuning (SFT) on human demonstrations, followed by human-gated dataset aggregation (HG-DAgger) with human interventions collected across the robot fleet. The selected task-specific checkpoint is then used as the shared initialization for FERT, RECAP and RLT. For FERT, this expert-supervised stage precedes the CEL learning loop; subsequent action targets come from autonomous fleet executions.

\subsection{Supervised fine-tuning on human demonstrations}
\label{sec:sft}

We first fine-tune the base policy on a fixed dataset of human demonstrations, $\mathcal{D}_{\mathrm{demo}}$. Each training sample contains an observation $o_t$, a task instruction $g$, and a demonstrated action chunk $A_t\in\mathbb{R}^{H\times d}$. Here, $H$ is the action horizon, $d$ is the action dimension, and $o_t$ contains multi-view images and proprioceptive state.

We use the conditional flow-matching objective of the base policy. For each action chunk, we sample $\varepsilon_{\mathrm{noise}}\sim\mathcal{N}(0,I)$ and a flow time $\alpha\in[0,1]$ from the base model's time-sampling distribution.
The interpolated action chunk is
\begin{equation}
    A_t^\alpha=(1-\alpha)A_t+\alpha\varepsilon_{\mathrm{noise}},
    \label{eq:prep_flow_interpolation}
\end{equation}
with target velocity $\varepsilon_{\mathrm{noise}}-A_t$. The training objective is
\begin{equation}
    \mathcal{L}_{\mathrm{SFT}}(\theta)
    =
    \mathbb{E}_{
        (o_t,g,A_t)\sim\mathcal{D}_{\mathrm{demo}},
        \,\alpha,\,\varepsilon_{\mathrm{noise}}
    }
    \left[
        \frac{1}{Hd}
        \left\|
            v_\theta(A_t^\alpha,\alpha\mid o_t,g)
            -(\varepsilon_{\mathrm{noise}}-A_t)
        \right\|_F^2
    \right],
    \label{eq:flow_matching}
\end{equation}
where $\theta$ denotes the policy parameters and $v_\theta$
is the predicted velocity. The noise
$\varepsilon_{\mathrm{noise}}\in\mathbb{R}^{H\times d}$ has independent standard
normal entries. The variable $\alpha$ denotes flow time, whereas
$t$ indexes physical execution steps.
We estimate this objective using mini-batches and update the policy by gradient descent.
The resulting checkpoint initializes HG-DAgger.

\subsection{Multi-robot HG-DAgger with human interventions}
\label{app:hgdagger}

We implement HG-DAgger~\citep{kelly2019hg} across multiple robots, aggregating human intervention data to update a shared task-specific policy. This multi-robot implementation is labelled (M)HG-DAgger in \Cref{fig:main-results}. Each robot executes the current policy while a human operator monitors execution and takes over when correction is needed. After the intervention, autonomous execution can resume. This procedure collects expert actions in states encountered by the policy.

Each successful trajectory is stored in its entirety, together with a per-frame indicator identifying whether each action was generated by the policy or supplied by a human operator. Only eligible expert action chunks collected during human takeovers, hereafter referred to as intervention chunks, are used as action targets for supervised policy updates.

\begin{figure}[t]
    \centering
    \includegraphics[width=0.9\linewidth]
        {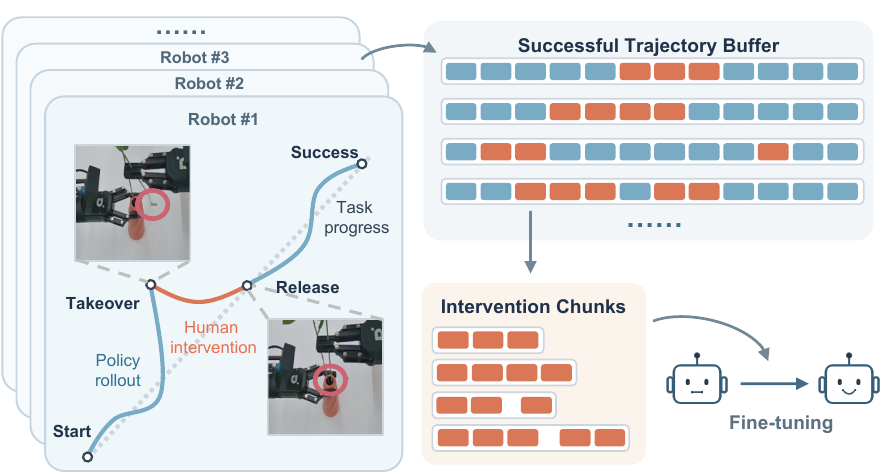}
    \caption{
        \textbf{Multi-robot HG-DAgger with human interventions.}
        Human operators take over execution when correction is needed. Successful trajectories are stored with intervention indicators, and eligible intervention chunks are used for supervised policy updates.
    }
    \label{fig:hgdagger}
\end{figure}

\subsubsection{Selecting Intervention Chunks}
\label{app:intervention_filter}

Let $m_t\in\{0,1\}$ indicate whether the recorded action at timestep $t$ was supplied through human intervention. A candidate chunk is eligible only if the action at every valid timestep within its horizon was supplied by a human operator:
\begin{equation}
    \chi_t
    =
    \prod_{t'=t}^{\min(t+H,T)-1}m_{t'},
    \label{eq:prep_intervention_mask}
\end{equation}
where $T$ is the episode length.
Only action chunks with starting timesteps satisfying $\chi_t=1$ are eligible for sampling. We typically use $H=30$, corresponding to a nominal one-second horizon at a control frequency of $30\,\mathrm{Hz}$.

We additionally discard the first $K_{\mathrm{int}}$ candidate starts in each contiguous intervention segment. This filter is intended to reduce the influence of transient actions around the switch from policy control to human control, including hesitation and abrupt teleoperation adjustments. Note that the intervention-filter parameter $K_{\mathrm{int}}$ is distinct from the number of retrieved trajectories used by FERT.

\subsubsection{Supervised Fine-tuning with Loss Filtering}
\label{app:dagger_loss_filter}

The policy is trained with the same conditional flow-matching objective used for SFT, with eligible intervention chunks as action targets.

For sample $i$ in a mini-batch $\mathcal{B}$, let $(o_i,g_i,A_i)$ denote its observation, instruction and action chunk.
After sampling $\alpha_i$ and $\varepsilon_{\mathrm{noise},i}$, we construct
\begin{equation}
    A_i^{\alpha_i}
    =
    (1-\alpha_i)A_i+\alpha_i\varepsilon_{\mathrm{noise},i}.
\end{equation}
The sample-wise loss is
\begin{equation}
    \ell_i
    =
    \frac{1}{Hd}
    \left\|
        v_\theta(A_i^{\alpha_i},\alpha_i\mid o_i,g_i)
        -(\varepsilon_{\mathrm{noise},i}-A_i)
    \right\|_F^2.
    \label{eq:dagger_sample_loss}
\end{equation}
Here, $i$ indexes training samples rather than trajectory timesteps.

Inspection of training batches revealed a high-loss tail containing approximately $5$-$10\%$ of samples. To limit the influence of these samples, we apply a batch-level loss filter. The threshold combines a loss quantile with an absolute floor:
\begin{equation}
    \ell_{\mathrm{cut},\mathcal{B}}
    =
    \max\left(
        \operatorname{quant}_q
        \left(\{\ell_i:i\in\mathcal{B}\}\right),
        \ell_{\min}
    \right),
    \label{eq:prep_loss_threshold}
\end{equation}
where $\operatorname{quant}_q$ denotes the empirical $q$-quantile, with $q\in(0,1)$, and $\ell_{\min}\geq0$ is a lower bound on the rejection threshold. The retained set is
\begin{equation}
    \mathcal{V}_{\mathcal{B}}
    =
    \left\{
        i\in\mathcal{B}:
        \ell_i\leq\ell_{\mathrm{cut},\mathcal{B}}
    \right\},
    \label{eq:prep_retained_batch}
\end{equation}
and the policy update minimizes
\begin{equation}
    \mathcal{L}_{\mathrm{DAgger}}(\theta;\mathcal{B})
    =
    \frac{1}{|\mathcal{V}_{\mathcal{B}}|}
    \sum_{i\in\mathcal{V}_{\mathcal{B}}}\ell_i.
    \label{eq:dagger_loss}
\end{equation}

Gradients are propagated only through retained samples. The filter acts on the sampled batch and does not remove frames or trajectories from the stored dataset. It deliberately biases optimization towards lower-loss samples within each batch; a large training loss is not, by itself, evidence that a recorded human action is incorrect.

\subsubsection{Online Rollout and Policy Synchronization}
\label{app:dagger_execution}

We run multi-robot HG-DAgger on FATS for asynchronous collection and optimization during initialization. FATS provides the shared infrastructure, while HG-DAgger determines how human interventions contribute to learning. Environment workers execute actions and record intervention indicators, rollout workers provide policy inference, and training workers optimize the policy using eligible intervention chunks.
Collection and optimization proceed concurrently.

Training begins once each training worker has accumulated sufficient valid data.
Data balancing regulates uneven accumulation across workers, and
updated policy parameters are distributed to rollout workers
every $K_{\mathrm{sync}}$ optimizer steps through non-blocking asynchronous
weight synchronization, where $K_{\mathrm{sync}}$ denotes the synchronization interval. During human control, the environment worker skips repeated policy inference to avoid unnecessary inference requests and facilitate teleoperation. 

\section{Training System Deployment}
\label{app:infra_deployment}

This appendix specifies the hardware and placement used to run the CEL collection-and-update loop through FATS.
It covers the hardware and network used in this study, the placement of workers on these resources,
    and the model used to compare recovery strategies in \Cref{fig:fats-concept}e.
The SwarmLink data plane and the HotSwarm control plane are described in \Cref{sec:infra-swarmlink,sec:infra-hotswarm}.
FATS is built on RLinf~\citep{yu2026rlinf},
    which manages distributed processes through Ray~\citep{moritz2018ray}.

\subsection{Robot and computing resources}
\label{app:infra_resources}

The deployment supports 100 robots,
    and the largest training fleet evaluated in this study comprises 96 robots.
Robot interaction takes place at the collection site,
    while cloud resources serve policy inference and optimization.
The cloud computing cluster consists of six servers, each equipped with eight NVIDIA A800 GPUs,
    for a total of 48 cloud GPUs.
Network-attached storage is available at the robot site,
    and a General Parallel File System (GPFS) provides shared storage in the cloud.

\begin{figure}[htbp]
    \centering
    \includegraphics[width=0.9\linewidth]{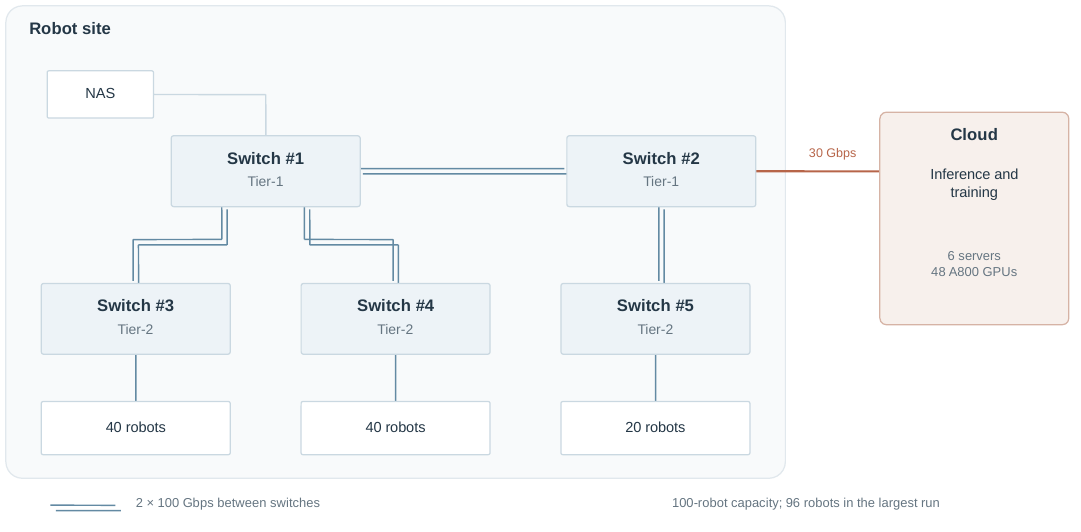}
    \caption{Robot-site network and cloud resources for FATS. The site supports 100 robots, with 96 used in the largest training run. Six cloud servers provide 48 NVIDIA A800 GPUs for inference and policy optimization. The robot site and cloud communicate through a dedicated 30\,Gbps link.}
    \label{fig:hardware-link}
\end{figure}

\subsection{Network topology}
\label{app:infra_network}

The robot site and the cloud are connected through a dedicated 30\,Gbps link.
The site network contains two tier-1 switches and three tier-2 switches,
    with dual 100\,Gbps fiber links between interconnected switches.
Cloud servers are located in the provider's data center,
    while the robots operate at a single collection site.
The dedicated cloud connection terminates at switch~2.
\Cref{fig:hardware-link} summarizes the hardware deployment and network topology.

\subsection{Worker roles and placement}
\label{app:infra_placement}

\paragraph{Resource description.}
Each node reports the hardware that workers can use.
Accelerators are discovered through their runtimes.
Robots are registered explicitly,
    together with their communication endpoints and camera identifiers,
    and their connectivity can optionally be checked before use.
Resources are organized into named groups,
    and a node may belong to several groups.

\paragraph{Placement.}
A placement specification assigns environment, inference and training workers to resource groups.
Each worker is bound to its accelerators or robot
    and to the software environment of the node it runs on.
The same specification supports inference co-located with robot control or placed on a separate cluster.

\paragraph{Deployment placement.}
Environment workers acquire observations and execute actions at the robot site.
Inference and training workers run in the cloud,
    so observation requests and action replies cross the dedicated link.
Episode accumulators run inside the environment workers at the robot site,
    and experience stores run inside the training workers in the cloud,
    so each completed episode crosses the link once.

\subsection{Recovery cost and fleet size}
\label{app:infra_model}

\paragraph{Model.}
We compare two recovery strategies through a model of robot time available for experience collection.
Let $N$ be the number of robots and $\lambda$ the rate of fault events per robot,
    so that the fleet experiences fault events at rate $N\lambda$.
If the whole system restarts after each event and a restart takes time $t_{\mathrm{restart}}$,
    every event stops all robots for time $t_{\mathrm{restart}}$.
The mean interval of collection between fleet-level fault events is $1/(N\lambda)$.
Adding a restart duration $t_{\mathrm{restart}}$ to each such interval gives the share of time available for collection:
\begin{equation}
    A_{\mathrm{sys}}=\frac{1}{1+N\lambda t_{\mathrm{restart}}}.
    \label{eq:recovery_sys}
\end{equation}
If only the affected robot restarts,
    each incident removes that robot from collection while the other robots continue.
The share of robot time available for collection is then
\begin{equation}
    A_{\mathrm{robot}}=1-\frac{\sum_{i} \Delta t_i}{N T_{\mathrm{run}}},
    \label{eq:recovery_robot}
\end{equation}
where $T_{\mathrm{run}}$ is the observation-window duration excluding the lunch break,
    and $\Delta t_i$ is the recorded robot downtime due to incident $i$ within that window.
Under fixed per-robot incident-rate and downtime assumptions, the expected number of incidents grows in proportion to $N$, so the expected value of $A_{\mathrm{robot}}$ does not depend on fleet size.

\paragraph{Parameters.}
We estimate the parameters from the 96-robot training run shown in \Cref{fig:episode-record,fig:error}.
The run lasted from its first episode at 10:37 to 18:30.
Excluding the operators' lunch break from 12:00 to 13:00 gives $T_{\mathrm{run}}=6.88$\,h.
During this period, 22 robots experienced 30 fault incidents.
Several incidents had a shared cause,
    e.g., host memory exhaustion that interrupted four robots at once.
We therefore group incidents that began within 2\,min of each other into one fault event,
    which gives 19 events, i.e., $\lambda=0.0288$ per robot-hour,
    or one event every 22\,min across the fleet.
The recorded incidents removed 8.3 robot-hours from the fleet's approximately 660 robot-hours of operation,
    giving a fault loss of 1.3\% and $A_{\mathrm{robot}}=98.7\%$.
\Cref{fig:error} draws short incidents with a minimum width of about 2\,min,
    so these losses are upper bounds.
The run did not include whole-system restarts,
    so $t_{\mathrm{restart}}$ is not measured and we evaluate $t_{\mathrm{restart}}=2$, 5 and 10\,min.

\paragraph{Results.}
For 96 robots, \Cref{eq:recovery_sys} gives $A_{\mathrm{sys}}=91.6\%$, 81.3\% and 68.5\% for $t_{\mathrm{restart}}=2$, 5 and 10\,min, respectively.
With the same per-robot event rate,
    a fleet of 1,024 robots would have $A_{\mathrm{sys}}=50.4\%$, 28.9\% and 16.9\%,
    whereas $A_{\mathrm{robot}}$ remains 98.7\%.
\Cref{fig:fats-concept}e shows these values as a function of fleet size.

\paragraph{Assumptions.}
The model assumes that per-robot fault rates and downtime distributions do not depend on fleet size, and that faults on different robots are independent apart from the grouping described above.
It favors whole-system restart in two respects.
It ignores the repair time of the failed robot, which a whole-system restart may also have to wait for,
    and it ignores the episodes in progress on other robots, which a whole-system restart discards.
The model considers only the recorded robot-local incidents,
    rather than shared failures of cloud inference or the edge-cloud link.
Environment resets affect both strategies and are excluded.

\section{Task Setup and Evaluation Criteria}
\label{app:task_success_criteria}
This appendix describes the observation preprocessing and action representation shared by SFT, (M)HG-DAgger, FERT, RECAP and RLT, followed by the task-specific evaluation criteria. These specifications define the common robot interface and task-success measurements used to evaluate the CEL realization and its baselines.

\subsection{Observation preprocessing and action representation}
\label{app:shared_observation_action}

\paragraph{Observation preprocessing.}
The policy input consists of an observation $o_t$ and a task instruction $g$. The observation $o_t$ comprises three RGB images and a 16-dimensional proprioceptive state. The images come from one head-mounted camera with resolution $400\times640$ and two wrist-mounted cameras with resolution $1056\times1280$ each (height $\times$ width). The state contains seven joint measurements per arm and two gripper measurements.

All methods use the same geometric image transformations. Each image is cropped to a task-relevant region according to \Cref{tab:task-prompts-crops}, anisotropically resized to a square whose side length equals the crop's shorter spatial dimension, and finally resized to $224\times224$, without padding. 

\paragraph{Action representation.}
The VLA predicts action chunks
\begin{equation}
    A_t=(a_t,\ldots,a_{t+H-1})\in\mathbb{R}^{H\times d},\qquad H=30,\quad d=16,
\end{equation}
where $d$ is the action dimension and each $a_{t+k}$ is a $d$-dimensional action containing seven joint commands per arm and two gripper commands. Joint-position deltas within a chunk are relative to the robot's joint configuration at the start of that chunk. Each action dimension is normalized using training-dataset statistics, which are also used to transform predictions back to the robot's action space at inference time. RLT executes a prefix of the VLA reference action chunk during its RL phase; its residual policy and execution horizon are described in \Cref{app:rlt_baseline}.

\subsection{Task success and failure criteria}

All learned-policy evaluation trials are executed autonomously, without human intervention or corrective actions. This section specifies the task completion requirements and failure conditions used to score these trials.

\begin{longtable}{@{}
  >{\raggedright\arraybackslash}p{\dimexpr.26\linewidth-\tabcolsep\relax}
  >{\raggedright\arraybackslash}p{\dimexpr.37\linewidth-\tabcolsep\relax}
  >{\raggedright\arraybackslash}p{\dimexpr.37\linewidth-2\tabcolsep\relax}@{}}
\caption{Task prompts and observation crop parameters. Wrist crops apply to both the left and right cameras.}\label{tab:task-prompts-crops}\\
\toprule

\endfirsthead
\multicolumn{3}{@{}l}{\footnotesize\itshape Table \thetable\ continued}\\[4pt]
\toprule

\endhead
\midrule
\multicolumn{3}{r@{}}{\footnotesize\itshape Continued on next page}\\
\endfoot
\bottomrule

\endlastfoot
\rowcolor{black!5}
\textbf{Task} & \multicolumn{2}{l@{}}{\textbf{Language prompt}}\\*
Flower arrangement & \multicolumn{2}{l@{}}{\lq\lq Put the flowers into the vase.\rq\rq} \\
Clothes folding & \multicolumn{2}{l@{}}{\lq\lq Fold the cloth.\rq\rq} \\
Phone charging & \multicolumn{2}{l@{}}{\lq\lq Charge the phone.\rq\rq} \\
Stool assembly & \multicolumn{2}{l@{}}{\lq\lq Assemble the stool with a screwdriver.\rq\rq} \\
Phone packaging & \multicolumn{2}{l@{}}{\lq\lq Pack the phone.\rq\rq} \\
\addlinespace[6pt]
\rowcolor{black!5}
\textbf{Task} & \textbf{Head-camera crop box} & \textbf{Two wrist-cameras crop box}\\*
Flower arrangement & $(120,180,330,460)$ & $(100,304,772,976)$ \\
Clothes folding & $(120,50,400,590)$ & $(200,400,856,1056)$ \\
Phone charging & $(140,60,400,580)$ & $(300,304,748,976)$ \\
Stool assembly & $(130,60,350,580)$ & $(100,304,772,976)$ \\
Phone packaging & No cropping & $(192,304,864,976)$ \\
\end{longtable}

\paragraph{Common evaluation rules.}
A trial is successful only if the robot completes the prescribed task instruction and satisfies all task-specific requirements below. Skipping a required step or executing the steps in the wrong order constitutes failure. For Flower Arrangement, Phone Charging and Phone Packaging, remaining idle or making no substantive progress for more than $8\,\mathrm{s}$ at any step constitutes failure; the corresponding threshold for Stool Assembly is $25\,\mathrm{s}$. These are limits on lack of progress within a step, rather than total task-duration limits.

An empty grasp is a grasp attempt that fails to engage the target object. Where an empty grasp is explicitly listed as a step-level failure below, a single occurrence causes immediate failure. For other grasping steps in Flower Arrangement, Phone Charging, Stool Assembly and Phone Packaging where retries are permitted, three consecutive empty grasps cause failure; a fourth attempt is not made.

\paragraph{Flower Arrangement.}
The following steps must be completed in order under the common evaluation rules:
\begin{enumerate}
    \item The left gripper grasps the vase horizontally at its mouth. An empty grasp, an incorrect grasp position, displacement of the vase by more than $2\,\mathrm{cm}$, or knocking it over constitutes failure.
    \item The right gripper picks up rose A and inserts it into the vase. An empty grasp, grasping multiple items, failing to insert the rose, or pulling it out while withdrawing the arm constitutes failure.
    \item The right gripper picks up rose B and inserts it into the vase, subject to the same grasping and insertion criteria as rose A.
    \item The right gripper picks up the leaf and inserts it into the vase, subject to the same grasping and insertion criteria as the roses.
    \item The left gripper releases the vase and returns to its resting configuration. Displacing the vase by more than $2\,\mathrm{cm}$ or knocking it over constitutes failure. Both roses and the leaf must remain inside the upright vase.
\end{enumerate}

\paragraph{Clothes Folding.}
The following steps must be completed in order under the common evaluation rules:
\begin{enumerate}
    \item Grasp the sleeve and hem corner on one side with both grippers and fold that side inward.
    \item Grasp one side of the garment with both grippers and drag it inward to the edge of the table.
    \item Grasp the sleeve and hem corner on the opposite side with both grippers and fold that side inward, mirroring Step 1.
    \item Grasp the bottom of the garment with the right gripper and fold it upward in half.
    \item Smooth out the garment with the right gripper.
\end{enumerate}

\textit{Garment allocation.} During training, each robot cycles through pink, green and blue garments, switching to the next colour after $5$--$10$ trials with the current colour. During evaluation, each held-out robot performs $20$ consecutive trials with each colour, yielding $60$ trials per robot across the three colours and $120$ trials across the two evaluation robots.

\paragraph{Phone Charging.}
The following steps must be completed in order under the common evaluation rules:
\begin{enumerate}
    \item The left arm picks up the charging cable from the table, and the right arm picks up the charging adapter.
    \item The left arm moves behind the power socket to support alignment, and the right arm inserts the adapter plug into the socket.
    \item The left arm moves behind the USB-C port, while the right gripper grasps the charging cable near the USB-C connector.
    \item The left arm grasps the phone, and the right arm unplugs the USB-C connector.
    \item The left arm places the phone on the phone box and pushes it into the required position.
\end{enumerate}

\paragraph{Stool Assembly.}
The following steps must be completed in order under the common evaluation rules. Poking the table at any step constitutes failure. Apart from empty-grasp attempts, which follow the common rule and the immediate-failure conditions below, attempting the same action more than three times constitutes failure.
\begin{enumerate}
    \item The right arm rests diagonally against the stool in the designated position near screw holes A and B. Incorrect positioning constitutes failure.
    \item The left arm grasps the upper part of screw B and seats it correctly in hole B. An empty grasp, an incorrect grasp position, dropping the screw, or incorrect seating constitutes failure.
    \item The left arm grasps the upper part of screw A and seats it correctly in hole A, subject to the same grasping and placement criteria as screw B.
    \item The right arm grasps the middle of the screwdriver. An empty grasp, an incorrect grasp position, or dropping the screwdriver constitutes failure.
    \item The right arm aligns the screwdriver with screw A \& B and tightens them separately. Dropping the screwdriver, misalignment, or failure to tighten the screw constitutes failure. Both screws must be correctly seated and tightened at completion.
\end{enumerate}

\paragraph{Phone Packaging.}
The following steps must be completed in order under the common evaluation rules. Every specified grasp, transfer and placement must be completed; empty grasps may be retried, but the third consecutive empty grasp ends the trial.
\begin{enumerate}
    \item Pick up the top phone case from the stack and place it to the right of the phone box.
    \item Pick up the first manual from its rack and place it on the phone case.
    \item Pick up the first charging cable from the rack on the right and place it in the designated cable compartment inside the phone box.
    \item Use the left arm to pick up the first charging adapter from the rack on the left.
    \item Transfer the adapter from the left gripper to the right gripper in mid-air.
    \item Use the right arm to place the adapter upright in its designated compartment, then tip it into its final position.
    \item Pick up the phone case together with the manual, place both inside the box, and arrange them flat.
    \item Brace the box with the right arm while the left arm pushes the phone from the box lid into the box and arranges it flat.
    \item Continue bracing the box with the right arm while the left arm moves the lid clear, grasps it, and places it over the box at an angle.
    \item Keep the box braced with the right arm while the left arm pushes the lid into alignment and presses it down to close the box. All specified contents must be correctly placed and the box fully closed; otherwise, the trial is unsuccessful.
\end{enumerate}

\section{FERT Implementation Details}
\label{app:fert_details}
We initialize FERT from the selected task-specific (M)HG-DAgger checkpoint described in \Cref{app:policy_init}. Successful autonomous trajectories are stored in an in-memory rolling buffer, while failed trajectories provide temporary retrieval queries. Retrieval embeddings are cached using a frozen visual encoder, and sampling priorities are updated between policy updates. We optimize the policy with AdamW and cosine learning-rate decay.

For a flow-based policy, evaluating action log-likelihoods in \Cref{eq:fert_population_loss} requires tracking density changes along the generative flow. We instead retain the conditional flow-matching objective in \Cref{eq:flow_matching} as a tractable surrogate for likelihood-based fitting. Denoising objectives admit variational interpretations in terms of log-likelihood lower bounds under appropriate noise-level weighting. This motivates the use of flow matching to fit the prioritized successful experience without explicitly evaluating action likelihoods.

A short derivation follows from expressing velocity prediction as noise prediction. Given $A_i^\alpha=(1-\alpha)A_i + \alpha\epsilon_{\mathrm{noise}}$, define
\begin{equation}
\hat{\epsilon}_{\theta} = A_i^\alpha + (1-\alpha)v_\theta(A_i^\alpha,\alpha\mid o_i,g_i).
\end{equation}
Then
\begin{equation}
\left\|\hat{\epsilon}_{\theta}-\epsilon_{\mathrm{noise}}\right\|_F^2 = (1-\alpha)^2
\left\|
v_\theta(A_i^\alpha,\alpha\mid o_i,g_i) - (\epsilon_{\mathrm{noise}}-A_i)
\right\|_F^2.
\end{equation}
Thus, the flow-matching loss can be expressed as a noise-prediction loss with noise-level weighting. For monotone weightings, the corresponding weighted denoising objective has a variational form:
\begin{equation}
\begin{aligned}
C_i-\mathcal{L}_{\mathrm{den}}^{\zeta}(\theta;A_i,o_i,g_i)
&=
\mathbb{E}_{\alpha\sim p_\zeta,\,\epsilon_{\mathrm{noise}}}
\left[
\operatorname{ELBO}_{\theta,\alpha}
(A_i^\alpha\mid o_i,g_i)
\right] \\
&\leq
\mathbb{E}_{\alpha\sim p_\zeta,\,\epsilon_{\mathrm{noise}}}
\left[
\log p_{\theta,\alpha}
(A_i^\alpha\mid o_i,g_i)
\right].
\end{aligned}
\end{equation}
Here, $\zeta$ denotes the noise-level weighting, $p_\zeta$ is its induced distribution over noise levels, $p_{\theta,\alpha}$ is the corresponding diffusion model's marginal density, and $C_i$ is independent of $\theta$. Minimizing the weighted denoising loss therefore maximizes an expected log-likelihood lower bound for noise-perturbed actions. FERT applies this likelihood-related fitting principle through the base policy's flow-matching loss on priority-sampled successful chunks. For a detailed derivation, see Appendix A.1 of \citep{mcallister2026flow}.

\Cref{tab:fert-settings} summarizes the key training parameters of FERT. Each FERT run involves approximately 7 to 8 hours of online training. We use the checkpoint obtained at convergence for the final evaluation.

\captionsetup[longtable]{font=small,labelfont=bf,justification=raggedright,singlelinecheck=false,skip=7pt}
\begin{longtable}{@{}>{\raggedright\arraybackslash}p{.58\linewidth}>{\raggedright\arraybackslash}p{.36\linewidth}@{}}
\caption{Key FERT settings. Batch sizes are global.}\label{tab:fert-settings}\\
\toprule
\textbf{Parameter} & \textbf{Setting} \\
\midrule
\endfirsthead
\toprule
\textbf{Parameter} & \textbf{Setting} \\
\midrule
\endhead
\bottomrule
\endfoot
Max learning rate & $2.5\times10^{-5}$ \\
Global batch size & 1024 \\
Learning-rate schedule & Cosine \\
Learning-rate warmup & 80 optimizer steps \\
Learning-rate schedule horizon & 10,000 optimizer steps \\
Gradient norm clipping & 4.0 \\
Rolling buffer window per training worker & 350,000 frames \\
Minimum training data per worker & 60,000 frames \\
Failure-query length $L$ & 90 frames \\
Failure endpoint offset $\delta$ & 10 frames \\
Best retrieval segment number $K$ & 10 \\
Retrieval extensions $h_{\mathrm{pre}},h_{\mathrm{post}}$ & 30 frames \\
Priority retention factor $\rho$ & 0.88 \\
Initial sampling priority $w_{\mathrm{init}}$ & 0.5 \\
Down-sampling ratio & 3 \\
Weight Sync. Interval & 500 \\
Training worker to robot ratio & 0.25
\end{longtable}

\section{Baseline Details}
\label{app:baseline_details}
The following sections specify the evaluated RECAP and RLT implementations. Both start from the same selected expert-supervised checkpoint as FERT, but differ in their use of experience, human corrections and policy updates. These details define the comparison configurations used in \Cref{sec:exp-main}.

\subsection{RECAP}
\label{app:recap_baseline}

RECAP~\cite{intelligence2025pi06vlalearnsexperience} serves as our multi-robot offline reinforcement-learning baseline. We retain its value-based advantage labeling and advantage-conditioned policy training, with the adaptations below. This baseline evaluates policy improvement from collected fleet data with continued use of human corrective actions. Observation preprocessing and action representation follow \Cref{app:shared_observation_action}.

\paragraph{Initialization and training data.}
We initialize the policy from the same selected (M)HG-DAgger checkpoint used by FERT and perform one round of offline policy improvement. The training set combines data accumulated during (M)HG-DAgger with newly collected rollouts and human takeover data; the newly collected data include successful and failed trajectories. \Cref{tab:training-data} reports the data composition and frame-level intervention fractions. The original method performs multiple rounds of policy improvement, whereas our implementation uses a single round. The initial policy has undergone supervised training on demonstrations and corrections, without prior advantage-conditioned training.

\paragraph{Task-specific value function.}
We replace the original multitask, language-conditioned value model learned from broad data with a separate vision-and-state value network for each task. It uses a frozen SigLIP encoder and a lightweight Transformer head, and is trained on the collected task data without multimodal web data. We retain distributional value prediction with two-hot targets and undiscounted Monte Carlo returns. Nonterminal frames receive a reward of $-1$, successful termination receives $0$, and failed termination receives a task-specific negative penalty. Returns are divided by a scale no smaller than the maximum episode length plus the magnitude of the terminal failure reward, keeping the targets within the value support $[-1,0]$.

\paragraph{Advantage conditioning.}
We estimate advantages over the action-chunk horizon and label frames above the dataset's 70th advantage percentile as positive. Human-intervention frames are always labeled positive, overriding both the threshold and indicator dropout. We replace the original natural-language advantage indicators with dedicated positive, negative, and null tokens. Non-intervention samples use indicator dropout with probability 0.3. The policy is fine-tuned with the conditional flow-matching objective and evaluated with the positive indicator at guidance scale 1. \Cref{tab:shared-settings} summarizes the key settings.

\begingroup
\small
\setlength{\tabcolsep}{5pt}
\renewcommand{\arraystretch}{1.16}
\setlength{\LTleft}{0pt plus 1fill}
\setlength{\LTright}{0pt plus 1fill}
\setlength{\LTcapwidth}{\linewidth}
\captionsetup[longtable]{font=small,labelfont=bf,justification=raggedright,singlelinecheck=false,skip=7pt}

\begin{longtable}{@{}
  >{\raggedright\arraybackslash}p{\dimexpr.23\linewidth-2\tabcolsep\relax}
  >{\raggedright\arraybackslash}p{\dimexpr.25\linewidth-2\tabcolsep\relax}
  *{3}{>{\raggedleft\arraybackslash}p{\dimexpr.115\linewidth-2\tabcolsep\relax}}
  >{\raggedleft\arraybackslash}p{.175\linewidth}@{}}
\caption{RECAP training data sources and composition. Intervention fractions are measured over frames.}\label{tab:training-data}\\
\toprule
\textbf{Task} & \textbf{Data source} & \multicolumn{3}{c}{\textbf{Trajectories}} & \textbf{Intervention} \\
\cmidrule(lr){3-5}
 & & Total & Successful & Failed & \textbf{fraction} \\
\midrule
\endfirsthead
\multicolumn{6}{@{}l}{\footnotesize\itshape Table \thetable\ continued}\\[4pt]
\toprule
\textbf{Task} & \textbf{Data source} & \multicolumn{3}{c}{\textbf{Trajectories}} & \textbf{Intervention} \\
\cmidrule(lr){3-5}
 & & Total & Successful & Failed & \textbf{fraction} \\
\midrule
\endhead
\midrule
\multicolumn{6}{r@{}}{\footnotesize\itshape Continued on next page}\\
\endfoot
\bottomrule

\endlastfoot
\textbf{Flower arrangement} & (M)HG-DAgger & 160 & 160 & 0 & 74.6\% \\*
 & Newly collected data & 48 & 13 & 35 & 2.4\% \\
\addlinespace[5pt]
\textbf{Clothes folding} & (M)HG-DAgger & 148 & 148 & 0 & 68.0\% \\*
 & Newly collected data & 118 & 39 & 79 & 18.0\% \\
\addlinespace[5pt]
\textbf{Phone charging} & (M)HG-DAgger & 716 & 716 & 0 & 39.9\% \\*
 & Newly collected data & 358 & 211 & 147 & 7.4\% \\
\addlinespace[5pt]
\textbf{Stool assembly} & (M)HG-DAgger & 328 & 328 & 0 & 55.8\% \\*
 & Newly collected data & 559 & 347 & 212 & 21.0\% \\
\addlinespace[5pt]
\textbf{Phone packaging} & (M)HG-DAgger & 978 & 978 & 0 & 58.4\% \\*
 & Newly collected data & 458 & 371 & 87 & 4.5\% \\
\end{longtable}

\begin{longtable}{@{}>{\raggedright\arraybackslash}p{.58\linewidth}>{\raggedright\arraybackslash}p{.36\linewidth}@{}}
\caption{Key RECAP settings. Batch sizes are global.}\label{tab:shared-settings}\\
\toprule
\textbf{Parameter} & \textbf{Setting} \\
\midrule
\endfirsthead
\toprule
\textbf{Parameter} & \textbf{Setting} \\
\midrule
\endhead
\bottomrule
\endfoot
Value peak learning rate & $10^{-4}$ \\
Value global batch size & 2,048 \\
Policy peak learning rate & $5\times10^{-5}$ \\
Policy global batch size & 256 \\
Number of value bins & 201 \\
Value support & $[-1,0]$ \\
Advantage lookahead & 30 frames \\
Advantage threshold & 70th percentile \\
Indicator dropout probability & 0.3 \\
\end{longtable}
\endgroup

\subsection{RLT}
\label{app:rlt_baseline}

RLT~\cite{xu2026rltokenbootstrappingonline} serves as our single-robot online reinforcement-learning baseline. It provides a comparison with online policy improvement on an individual robot using task rewards and human corrections. We retain its two-stage procedure: learning an RL-token representation from frozen VLA features, followed by online actor--critic training. Observation preprocessing and action representation follow \Cref{app:shared_observation_action}.

\paragraph{Initialization and representation learning.}
We freeze the same selected (M)HG-DAgger $\pi_0$ checkpoint used to initialize FERT. An encoder--decoder is trained on the SFT dataset to compress the VLA's image-prefix features into an RL token through feature reconstruction. After this stage, the decoder is discarded and the encoder remains frozen throughout online learning.

\paragraph{Residual action generation.}
The online policy uses the RL token, proprioceptive state, and VLA reference actions. Our actor predicts a residual around the VLA reference, whereas the original actor directly predicts an action mean conditioned on that reference. The residual output layer is initialized to zero, and a soft trust-region mapping bounds deviations from the reference actions in normalized action coordinates. Gripper deviations are disabled. \Cref{tab:rlt-common-settings} summarizes the key settings, with task-specific overrides in \Cref{tab:rlt-task-overrides}. For Phone Charging, residual corrections are enabled only for the right arm; the left arm and grippers follow the VLA reference.

\paragraph{Online data and human supervision.}
Experience is collected on one robot using a sparse reward of 1 at successful termination and 0 otherwise. During training-data collection, the operator switches between VLA execution and RLT execution; only RLT-phase transitions enter the replay buffer. Human corrections replace the executed actions and behavior-cloning targets at intervention frames, and intervention transitions are also stored in a demonstration buffer. Training batches draw equally from replay and demonstrations when both are available. Operators provide success/failure labels and temporarily intervene during difficult task segments for 5--10 trajectories, then return to outcome labeling, repeating this process as needed. \Cref{tab:rlt-task-specific-settings} reports task-specific episode counts and intervention fractions.

\paragraph{Policy optimization and evaluation.}
We use twin critics, delayed actor updates, and RLT's chunk-level backup, together with behavior-cloning regularization. The VLA and RL-token encoder remain frozen during online learning. At evaluation time, switching between VLA and RLT execution is automatic, without operator phase selection or corrective actions. The actor uses its mean without exploration noise, followed by the same trust-region mapping used during training.

\begingroup
\small
\setlength{\tabcolsep}{5pt}
\renewcommand{\arraystretch}{1.16}
\setlength{\LTleft}{0pt plus 1fill}
\setlength{\LTright}{0pt plus 1fill}
\setlength{\LTcapwidth}{\linewidth}
\captionsetup[longtable]{font=small,labelfont=bf,justification=raggedright,singlelinecheck=false,skip=7pt}
\begin{longtable}{@{}>{\raggedright\arraybackslash}p{.58\linewidth}>{\raggedright\arraybackslash}p{.36\linewidth}@{}}
\caption{Key RLT settings shared across tasks, except for the overrides in \Cref{tab:rlt-task-overrides}. Deviation budgets are in normalized action coordinates.}\label{tab:rlt-common-settings}\\
\toprule
\textbf{Parameter} & \textbf{Setting} \\
\midrule
\endfirsthead
\toprule
\textbf{Parameter} & \textbf{Setting} \\
\midrule
\endhead
\bottomrule
\endfoot
Actor learning rate & $3\times10^{-4}$ \\
Critic learning rate & $3\times10^{-4}$ \\
Global batch size & 256 \\
Discount factor & 0.99 \\
Update-to-data ratio & 5 \\
Critic-to-actor update ratio & $2:1$ \\
Behavior-cloning weight & 2 \\
Actions executed per VLA-phase chunk & 30 \\
Actions executed per RLT-phase chunk & 10 \\
Default arm deviation budget & 0.2 \\
Gripper deviation budget & 0 \\
Default exploration standard deviation & 0.01 \\
\end{longtable}

\begin{longtable}{@{}>{\raggedright\arraybackslash}p{\dimexpr.25\linewidth-\tabcolsep\relax}>{\raggedright\arraybackslash}p{\dimexpr.27\linewidth-2\tabcolsep\relax}>{\raggedright\arraybackslash}p{\dimexpr.48\linewidth-\tabcolsep\relax}@{}}
\caption{Task-specific RLT settings. Unlisted settings follow \Cref{tab:rlt-common-settings}. Deviation budgets are in normalized action coordinates.}\label{tab:rlt-task-overrides}\\
\toprule
\textbf{Task} & \textbf{Parameter} & \textbf{Setting} \\
\midrule
\endfirsthead
\toprule
\textbf{Task} & \textbf{Parameter} & \textbf{Setting} \\
\midrule
\endhead
\bottomrule
\endfoot
\textbf{Phone Charging} & Deviation budget & 0.2 for the 7 right-arm dimensions; 0 for the left arm and grippers \\
\addlinespace[5pt]
\textbf{Flower Arrangement} & Deviation budget & 0.5 for the 14 arm dimensions; 0 for the grippers \\*
 & Exploration standard deviation & 0.005 \\
\addlinespace[5pt]
\textbf{Clothes Folding} & Deviation budget & 0.5 for the 14 arm dimensions; 0 for the grippers \\*
 & Exploration standard deviation & 0.005 \\
\end{longtable}

\begin{longtable}{@{}
  >{\raggedright\arraybackslash}p{\dimexpr.4\linewidth-2\tabcolsep\relax}
  >{\raggedleft\arraybackslash}p{\dimexpr.3\linewidth-2\tabcolsep\relax}
  >{\raggedleft\arraybackslash}p{\dimexpr.3\linewidth-2\tabcolsep\relax}@{}}
\caption{Online RLT intervention fractions and complete-episode counts by task. All values are approximate.}\label{tab:rlt-task-specific-settings}\\
\toprule
\textbf{Task} & \textbf{Intervention fraction} & \textbf{Complete episodes} \\
\midrule
\endfirsthead
\multicolumn{3}{@{}l}{\footnotesize\itshape Table \thetable\ continued}\\[4pt]
\toprule
\textbf{Task} & \textbf{Intervention fraction} & \textbf{Complete episodes} \\
\midrule
\endhead
\midrule
\multicolumn{3}{r@{}}{\footnotesize\itshape Continued on next page}\\
\endfoot
\bottomrule
\endlastfoot
Flower Arrangement & 30\% & 30 \\
\addlinespace[5pt]
Clothes Folding & 20\% & 30 \\
\addlinespace[5pt]
Phone Packaging & 40\% & 40 \\
\addlinespace[5pt]
Stool Assembly & 80\% & 100 \\
\addlinespace[5pt]
Phone Charging & 40\% & 70 \\
\end{longtable}
\endgroup

\section{Extended Experimental Results}

This appendix reports the task-level measurements, system traces and retrieval examples supporting the evaluation of FERT and FATS. The policy results measure improvement on held-out robots, the traces characterize experience collection during a training run, and the retrieval examples illustrate how failed executions select successful experience.

\subsection{Detailed Success Rate}
\Cref{tab:detail-succ} reports the detailed evaluation results for SFT, (M)HG-DAgger, RLT, RECAP and FERT across the five tasks and evaluated training fleet sizes.
\begin{table}[!htbp]
\centering
\small
\setlength{\tabcolsep}{1.5pt}
\renewcommand{\arraystretch}{1.18}
\setlength{\heavyrulewidth}{0.7pt}
\setlength{\lightrulewidth}{0.4pt}
\setlength{\cmidrulewidth}{0.3pt}
\setlength{\arrayrulewidth}{0.3pt}
\makeatletter
\def\rlinfHeaderRuleTop{\rule[-\dp\@arstrutbox]{\arrayrulewidth}{\dimexpr(\ht\@arstrutbox+\dp\@arstrutbox)*4/5\relax}}
\def\rlinfHeaderRuleBottom{\rule[\dimexpr(\ht\@arstrutbox+\dp\@arstrutbox)/5-\dp\@arstrutbox\relax]{\arrayrulewidth}{\dimexpr(\ht\@arstrutbox+\dp\@arstrutbox)*4/5\relax}}
\def\rlinfCline#1{\rlinf@cline#1\@nil}
\def\rlinf@cline#1-#2\@nil{\omit\@multicnt#1\advance\@multispan\m@ne\ifnum\@multicnt=\@ne\@firstofone{&\omit}\fi\@multicnt#2\advance\@multicnt-#1\advance\@multispan\@ne\hskip0pt plus0.5fill\leaders\hrule\@height\arrayrulewidth\hskip0pt plus9fill\hskip0pt plus0.5fill\cr\noalign{\vskip-\arrayrulewidth}}
\makeatother
\captionsetup{font=small,labelfont=bf,justification=raggedright,singlelinecheck=false,skip=8pt}
\caption{Task success rates for SFT, (M)HG-DAgger, RLT, RECAP, and FERT. Results are reported for two held-out test robots, with 60 trials per robot for each evaluated checkpoint. Robot 1 and Robot 2 denote the held-out test robots; their rows report successful trials out of 60. Avg. (\%) pools successes over 120 trials. Bold values indicate the highest average success rate for each task.}
\label{tab:detail-succ}
\captionsetup[subtable]{font=small,labelfont=bf,textfont=bf,justification=raggedright,singlelinecheck=false,skip=5pt}
\begin{subtable}{\linewidth}
\centering
\caption{Flower Arrangement}
\label{tab:exp-flower-horizontal}
\begin{tabularx}{\linewidth}{>{\raggedright\arraybackslash}p{1.5cm}|>{\centering\arraybackslash}X|>{\centering\arraybackslash}X>{\centering\arraybackslash}X>{\centering\arraybackslash}X|>{\centering\arraybackslash}X>{\centering\arraybackslash}X>{\centering\arraybackslash}X>{\centering\arraybackslash}X>{\centering\arraybackslash}X}
\toprule
 & \multirow{2}{=}{\centering\adjustbox{max width=\linewidth}{\footnotesize\bfseries SFT}} & \multicolumn{3}{c|}{\textbf{(M)HG-DAgger} {\scriptsize\textcolor{black!60}{(num of robots)}}} & \multicolumn{1}{c!{\rlinfHeaderRuleTop}}{\multirow{2}{*}{\footnotesize\bfseries RLT}} & \multirow{2}{=}{\centering\adjustbox{max width=\linewidth}{\footnotesize\bfseries RECAP}} & \multicolumn{3}{!{\rlinfHeaderRuleTop}c}{\textbf{FERT} {\scriptsize\textcolor{black!60}{(num of robots)}}} \\
\rlinfCline{3-5}\rlinfCline{8-10}
 &  & 1 & 4 & 8 & \multicolumn{1}{c!{\rlinfHeaderRuleBottom}}{} &  & \multicolumn{1}{!{\rlinfHeaderRuleBottom}c}{4} & 8 & 16 \\
\midrule
Robot 1 & 23/60 & 47/60 & 53/60 & 60/60 & 56/60 & 58/60 & 59/60 & 60/60 & 60/60 \\
Robot 2 & 20/60 & 56/60 & 56/60 & 58/60 & 58/60 & 56/60 & 57/60 & 59/60 & 59/60 \\
\rowcolor{black!5}
Avg. (\%) & 35.83 & 85.83 & 90.83 & 98.33 & 95.00 & 95.00 & 96.67 & \textbf{99.17} & \textbf{99.17} \\
\bottomrule
\end{tabularx}
\end{subtable}
\par\vspace{10pt}
\begin{subtable}{\linewidth}
\centering
\caption{Clothes Folding}
\label{tab:exp-clothes-horizontal}
\begin{tabularx}{\linewidth}{>{\raggedright\arraybackslash}p{1.5cm}|>{\centering\arraybackslash}X|>{\centering\arraybackslash}X>{\centering\arraybackslash}X>{\centering\arraybackslash}X|>{\centering\arraybackslash}X>{\centering\arraybackslash}X>{\centering\arraybackslash}X>{\centering\arraybackslash}X>{\centering\arraybackslash}X}
\toprule
 & \multirow{2}{=}{\centering\adjustbox{max width=\linewidth}{\footnotesize\bfseries SFT}} & \multicolumn{3}{c|}{\textbf{(M)HG-DAgger} {\scriptsize\textcolor{black!60}{(num of robots)}}} & \multicolumn{1}{c!{\rlinfHeaderRuleTop}}{\multirow{2}{*}{\footnotesize\bfseries RLT}} & \multirow{2}{=}{\centering\adjustbox{max width=\linewidth}{\footnotesize\bfseries RECAP}} & \multicolumn{3}{!{\rlinfHeaderRuleTop}c}{\textbf{FERT} {\scriptsize\textcolor{black!60}{(num of robots)}}} \\
\rlinfCline{3-5}\rlinfCline{8-10}
 &  & 1 & 4 & 8 & \multicolumn{1}{c!{\rlinfHeaderRuleBottom}}{} &  & \multicolumn{1}{!{\rlinfHeaderRuleBottom}c}{4} & 8 & 16 \\
\midrule
Robot 1 & 44/60 & 48/60 & 50/60 & 51/60 & 50/60 & 54/60 & 49/60 & 56/60 & 58/60 \\
Robot 2 & 34/60 & 44/60 & 47/60 & 52/60 & 56/60 & 56/60 & 51/60 & 57/60 & 60/60 \\
\rowcolor{black!5}
Avg. (\%) & 65.00 & 76.67 & 80.83 & 85.83 & 88.33 & 91.67 & 83.33 & 94.17 & \textbf{98.33} \\
\bottomrule
\end{tabularx}
\end{subtable}
\par\vspace{10pt}
\begin{subtable}{\linewidth}
\centering
\caption{Phone Charging}
\label{tab:exp-charge-horizontal}
\begin{tabularx}{\linewidth}{>{\raggedright\arraybackslash}p{1.5cm}|>{\centering\arraybackslash}X|>{\centering\arraybackslash}X>{\centering\arraybackslash}X>{\centering\arraybackslash}X>{\centering\arraybackslash}X|>{\centering\arraybackslash}X>{\centering\arraybackslash}X>{\centering\arraybackslash}X>{\centering\arraybackslash}X>{\centering\arraybackslash}X>{\centering\arraybackslash}X}
\toprule
 & \multirow{2}{=}{\centering\adjustbox{max width=\linewidth}{\footnotesize\bfseries SFT}} & \multicolumn{4}{c|}{\textbf{(M)HG-DAgger} {\scriptsize\textcolor{black!60}{(num of robots)}}} & \multicolumn{1}{c!{\rlinfHeaderRuleTop}}{\multirow{2}{*}{\footnotesize\bfseries RLT}} & \multirow{2}{=}{\centering\adjustbox{max width=\linewidth}{\footnotesize\bfseries RECAP}} & \multicolumn{4}{!{\rlinfHeaderRuleTop}c}{\textbf{FERT} {\scriptsize\textcolor{black!60}{(num of robots)}}} \\
\rlinfCline{3-6}\rlinfCline{9-12}
 &  & 1 & 4 & 8 & {\scriptsize Mixed} & \multicolumn{1}{c!{\rlinfHeaderRuleBottom}}{} &  & \multicolumn{1}{!{\rlinfHeaderRuleBottom}c}{2} & 4 & 8 & 16 \\
\midrule
Robot 1 & 7/60 & 1/60 & 39/60 & 43/60 & 41/60 & 46/60 & 45/60 & 20/60 & 52/60 & 58/60 & 58/60 \\
Robot 2 & 12/60 & 0/60 & 32/60 & 28/60 & 31/60 & 45/60 & 54/60 & 28/60 & 56/60 & 56/60 & 59/60 \\
\rowcolor{black!5}
Avg. (\%) & 15.83 & 0.83 & 59.17 & 59.17 & 60.00 & 75.83 & 82.50 & 40.00 & 90.00 & 95.00 & \textbf{97.50} \\
\bottomrule
\end{tabularx}
\end{subtable}
\par\vspace{10pt}
\begin{subtable}{\linewidth}
\centering
\caption{Stool Assembly}
\label{tab:exp-stool-horizontal}
\begin{tabularx}{\linewidth}{>{\raggedright\arraybackslash}p{1.5cm}|>{\centering\arraybackslash}X|>{\centering\arraybackslash}X>{\centering\arraybackslash}X>{\centering\arraybackslash}X>{\centering\arraybackslash}X>{\centering\arraybackslash}X|>{\centering\arraybackslash}X>{\centering\arraybackslash}X>{\centering\arraybackslash}X>{\centering\arraybackslash}X>{\centering\arraybackslash}X}
\toprule
 & \multirow{2}{=}{\centering\adjustbox{max width=\linewidth}{\footnotesize\bfseries SFT}} & \multicolumn{5}{c|}{\textbf{(M)HG-DAgger} {\scriptsize\textcolor{black!60}{(num of robots)}}} & \multicolumn{1}{c!{\rlinfHeaderRuleTop}}{\multirow{2}{*}{\footnotesize\bfseries RLT}} & \multirow{2}{=}{\centering\adjustbox{max width=\linewidth}{\footnotesize\bfseries RECAP}} & \multicolumn{3}{!{\rlinfHeaderRuleTop}c}{\textbf{FERT} {\scriptsize\textcolor{black!60}{(num of robots)}}} \\
\rlinfCline{3-7}\rlinfCline{10-12}
 &  & 1 & 4 & 8 & 16 & {\scriptsize Mixed} & \multicolumn{1}{c!{\rlinfHeaderRuleBottom}}{} &  & \multicolumn{1}{!{\rlinfHeaderRuleBottom}c}{8} & 16 & 32 \\
\midrule
Robot 1 & 41/60 & 2/60 & 50/60 & 52/60 & 39/60 & 50/60 & 48/60 & 53/60 & 41/60 & 57/60 & 57/60 \\
Robot 2 & 35/60 & 0/60 & 46/60 & 44/60 & 37/60 & 48/60 & 49/60 & 49/60 & 49/60 & 54/60 & 55/60 \\
\rowcolor{black!5}
Avg. (\%) & 63.33 & 1.67 & 80.00 & 80.00 & 63.33 & 81.67 & 80.83 & 85.00 & 75.00 & 92.50 & \textbf{93.33} \\
\bottomrule
\end{tabularx}
\end{subtable}
\par\vspace{10pt}
\begin{subtable}{\linewidth}
\centering
\caption{Phone Packaging}
\label{tab:packaging-horizontal}
\begin{tabularx}{\linewidth}{>{\raggedright\arraybackslash}p{1.5cm}|>{\centering\arraybackslash}X|>{\centering\arraybackslash}X>{\centering\arraybackslash}X>{\centering\arraybackslash}X>{\centering\arraybackslash}X>{\centering\arraybackslash}X|>{\centering\arraybackslash}X>{\centering\arraybackslash}X>{\centering\arraybackslash}X>{\centering\arraybackslash}X>{\centering\arraybackslash}X>{\centering\arraybackslash}X>{\centering\arraybackslash}X>{\centering\arraybackslash}X}
\toprule
 & \multirow{2}{=}{\centering\adjustbox{max width=\linewidth}{\footnotesize\bfseries SFT}} & \multicolumn{5}{c|}{\textbf{(M)HG-DAgger} {\scriptsize\textcolor{black!60}{(num of robots)}}} & \multicolumn{1}{c!{\rlinfHeaderRuleTop}}{\multirow{2}{*}{\footnotesize\bfseries RLT}} & \multirow{2}{=}{\centering\adjustbox{max width=\linewidth}{\footnotesize\bfseries RECAP}} & \multicolumn{6}{!{\rlinfHeaderRuleTop}c}{\textbf{FERT} {\scriptsize\textcolor{black!60}{(num of robots)}}} \\
\rlinfCline{3-7}\rlinfCline{10-15}
 &  & 1 & 4 & 8 & 16 & {\scriptsize Mixed} & \multicolumn{1}{c!{\rlinfHeaderRuleBottom}}{} &  & \multicolumn{1}{!{\rlinfHeaderRuleBottom}c}{4} & 8 & 16 & 32 & 64 & 96 \\
\midrule
Robot 1 & 0/60 & 0/60 & 14/60 & 37/60 & 42/60 & 47/60 & 54/60 & 47/60 & 32/60 & 45/60 & 53/60 & 56/60 & 57/60 & 59/60 \\
Robot 2 & 2/60 & 2/60 & 25/60 & 41/60 & 39/60 & 48/60 & 48/60 & 43/60 & 29/60 & 40/60 & 51/60 & 55/60 & 57/60 & 59/60 \\
\rowcolor{black!5}
Avg. (\%) & 1.67 & 1.67 & 32.50 & 65.00 & 67.50 & 79.17 & 85.00 & 75.00 & 50.83 & 70.83 & 86.67 & 92.50 & 95.00 & \textbf{98.33} \\
\bottomrule
\end{tabularx}
\end{subtable}
\end{table}

\Cref{tab:detail-abl} reports the detailed results for the FERT ablation configurations on Phone Packaging with 16 training robots.

\begin{table}[!htbp]
\centering
\small
\setlength{\tabcolsep}{1.5pt}
\renewcommand{\arraystretch}{1.18}
\setlength{\heavyrulewidth}{0.7pt}
\setlength{\lightrulewidth}{0.4pt}
\setlength{\cmidrulewidth}{0.3pt}
\setlength{\arrayrulewidth}{0.3pt}
\makeatletter
\def\rlinfHeaderRuleTop{\rule[-\dp\@arstrutbox]{\arrayrulewidth}{\dimexpr(\ht\@arstrutbox+\dp\@arstrutbox)*4/5\relax}}
\def\rlinfHeaderRuleBottom{\rule[\dimexpr(\ht\@arstrutbox+\dp\@arstrutbox)/5-\dp\@arstrutbox\relax]{\arrayrulewidth}{\dimexpr(\ht\@arstrutbox+\dp\@arstrutbox)*4/5\relax}}
\def\rlinfCline#1{\rlinf@cline#1\@nil}
\def\rlinf@cline#1-#2\@nil{\omit\@multicnt#1\advance\@multispan\m@ne\ifnum\@multicnt=\@ne\@firstofone{&\omit}\fi\@multicnt#2\advance\@multicnt-#1\advance\@multispan\@ne\hskip0pt plus0.5fill\leaders\hrule\@height\arrayrulewidth\hskip0pt plus9fill\hskip0pt plus0.5fill\cr\noalign{\vskip-\arrayrulewidth}}
\makeatother
\captionsetup{font=small,labelfont=bf,justification=raggedright,singlelinecheck=false,skip=8pt}
\caption{Success rates for the FERT ablation study on Phone Packaging with 16 training robots. FERT denotes the full configuration; w/o FR disables failure-guided retrieval; No extension sets $h_{\mathrm{pre}}=h_{\mathrm{post}}=0$. Retention $\rho$ is the priority retention factor, and Sync interval is the policy synchronization interval in optimizer steps. Each configuration is evaluated over 60 trials on each of two held-out test robots. The robot rows report successful trials out of 60, and Avg. (\%) pools successes over 120 trials. Bold indicates the highest average success rate.}
\label{tab:detail-abl}
\begin{tabularx}{\linewidth}{>{\raggedright\arraybackslash}p{1.5cm}>{\centering\arraybackslash}X>{\centering\arraybackslash}X>{\centering\arraybackslash}X>{\centering\arraybackslash}X>{\centering\arraybackslash}X>{\centering\arraybackslash}X>{\centering\arraybackslash}X}
\toprule
 & \multirow{2}{*}{\shortstack{FERT\\(16 robots)}} & \multirow{2}{*}{w/o FR} & \multirow{2}{*}{\shortstack{No\\extension}} & \multicolumn{2}{c}{Retention $\rho$} & \multicolumn{2}{c}{Sync interval} \\
\cmidrule(lr){5-6}\cmidrule(lr){7-8}
 & & & & 0.8 & 0.95 & 100 & 1500 \\
\midrule
Robot 1 & 53/60 & 33/60 & 34/60 & 44/60 & 50/60 & 46/60 & 48/60 \\
Robot 2 & 51/60 & 44/60 & 35/60 & 41/60 & 49/60 & 44/60 & 48/60 \\
\rowcolor{black!5}
Avg. (\%) & \textbf{86.67} & 64.17 & 57.50 & 70.83 & 82.50 & 75.00 & 80.00 \\
\bottomrule
\end{tabularx}
\end{table}

\Cref{tab:detail-human} and \Cref{fig:exp-detail-human} report human teleoperation performance. Performance varies substantially across operators on some tasks. For example, on Phone Packaging, individual success rates range from 25\% (5/20) to 95\% (19/20), with an overall average of 75\%.

\begin{table}[!htbp]
\centering
\small
\setlength{\tabcolsep}{1.5pt}
\renewcommand{\arraystretch}{1.18}
\setlength{\heavyrulewidth}{0.7pt}
\setlength{\lightrulewidth}{0.4pt}
\setlength{\cmidrulewidth}{0.3pt}
\setlength{\arrayrulewidth}{0.3pt}
\makeatletter
\def\rlinfHeaderRuleTop{\rule[-\dp\@arstrutbox]{\arrayrulewidth}{\dimexpr(\ht\@arstrutbox+\dp\@arstrutbox)*4/5\relax}}
\def\rlinfHeaderRuleBottom{\rule[\dimexpr(\ht\@arstrutbox+\dp\@arstrutbox)/5-\dp\@arstrutbox\relax]{\arrayrulewidth}{\dimexpr(\ht\@arstrutbox+\dp\@arstrutbox)*4/5\relax}}
\def\rlinfCline#1{\rlinf@cline#1\@nil}
\def\rlinf@cline#1-#2\@nil{\omit\@multicnt#1\advance\@multispan\m@ne\ifnum\@multicnt=\@ne\@firstofone{&\omit}\fi\@multicnt#2\advance\@multicnt-#1\advance\@multispan\@ne\hskip0pt plus0.5fill\leaders\hrule\@height\arrayrulewidth\hskip0pt plus9fill\hskip0pt plus0.5fill\cr\noalign{\vskip-\arrayrulewidth}}
\makeatother
\captionsetup{font=small,labelfont=bf,justification=raggedright,singlelinecheck=false,skip=8pt}
\caption{Human teleoperation success rates across five tasks. Each of the six operators completes 20 trials per task, yielding 120 trials per task in total. ID denotes the operator; each entry reports successful trials out of 20. Avg. (\%) pools successes over all 120 trials per task.}
\label{tab:detail-human}
\begin{tabularx}{\linewidth}{>{\raggedright\arraybackslash}p{1.5cm}>{\centering\arraybackslash}X>{\centering\arraybackslash}X>{\centering\arraybackslash}X>{\centering\arraybackslash}X>{\centering\arraybackslash}X}
\toprule
\textbf{ID} & \textbf{\shortstack{Flower\\Arrangement}} & \textbf{\shortstack{Clothes\\Folding}} & \textbf{\shortstack{Phone\\Charging}} & \textbf{\shortstack{Stool\\Assembly}} & \textbf{\shortstack{Phone\\Packaging}} \\
\midrule
1 & 20/20 & 20/20 & 12/20 & 20/20 & 16/20 \\
2 & 19/20 & 16/20 & 18/20 & 20/20 & 13/20 \\
3 & 20/20 & 19/20 & 12/20 & 17/20 & 19/20 \\
4 & 12/20 & 15/20 & 12/20 & 15/20 & 19/20 \\
5 & 18/20 & 14/20 & 19/20 & 14/20 & 18/20 \\
6 & 20/20 & 15/20 & 13/20 & 14/20 & 5/20 \\
\rowcolor{black!5}
Avg. (\%) & 90.83 & 82.50 & 71.67 & 83.33 & 75.00 \\
\bottomrule
\end{tabularx}
\end{table}

\begin{figure}[h]
    \centering
    \includegraphics[width=\linewidth]{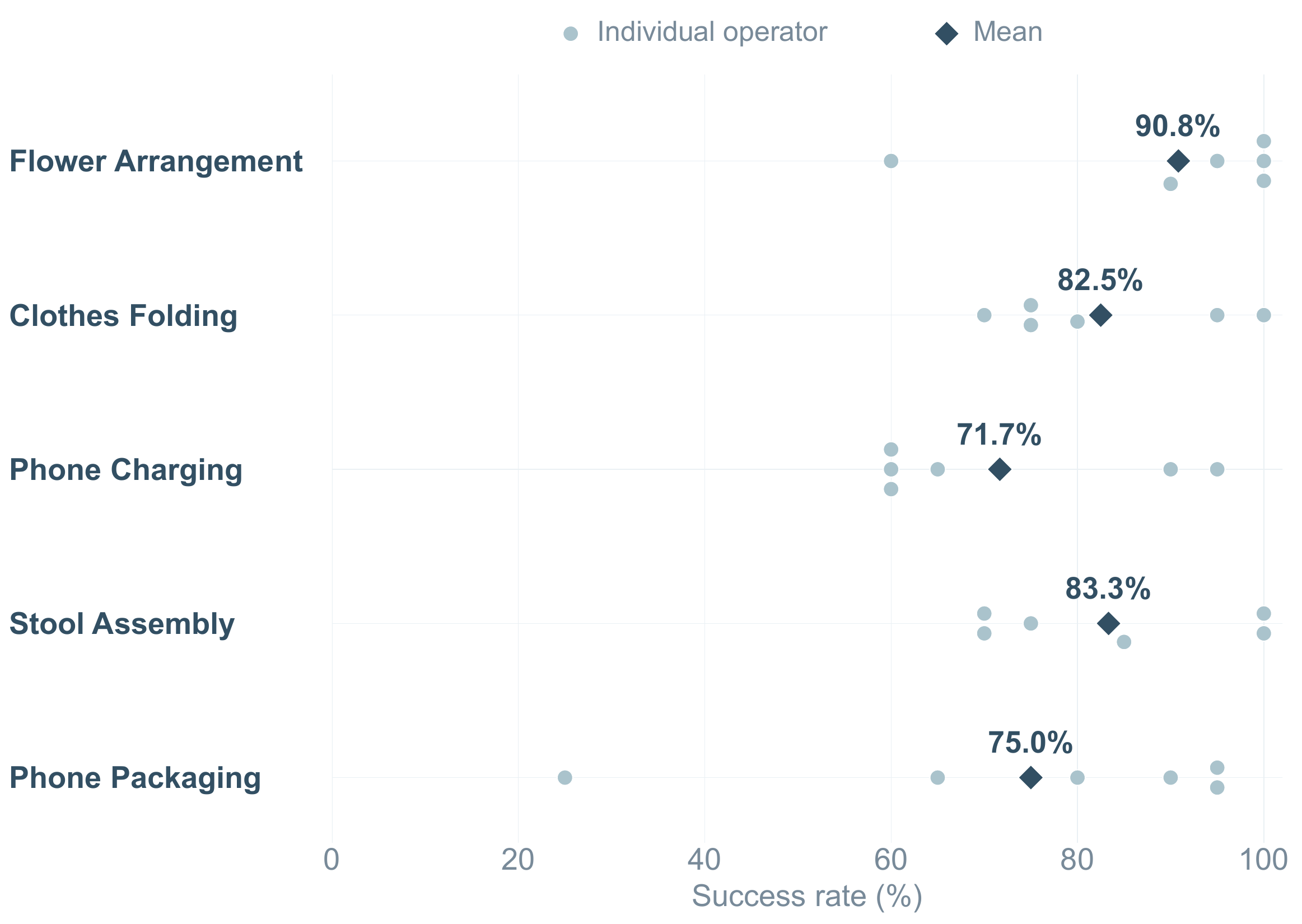}
    \caption{Human teleoperation success rates across tasks and operators. Each operator is evaluated over 20 trials per task.}
    \label{fig:exp-detail-human}
\end{figure}

\subsection{Detailed System Traces}
\label{app:infra_traces}

This section provides the detailed traces underlying the system evaluation in \Cref{sec:exp-system}. They document robot participation, local interruptions and recovery of experience collection, and supply the episode and incident intervals used for the trace-replay comparisons.
\Cref{fig:episode-record} shows the run's 9,834 episodes for individual robots, distinguishing successful trials, failed trials and system errors. Blank intervals include environment resets and operator breaks. Outcomes vary across robots: Robots 3 and 4 succeed in most of their episodes, whereas Robots 6 and 7 fail in most of theirs.

\Cref{fig:exp-active} summarizes the number of robots executing actions over time. The light curve shows the raw count and the dark curve applies Gaussian smoothing with a standard deviation of 60 seconds. Outside the lunch break, 65 robots were executing at a time on average.

\Cref{fig:error} gives the timing, affected robot identities and types of the 30 fault incidents on 22 robots. These detailed intervals support the fault-event grouping and recovery statistics reported in the main text.

\begin{figure}
    \centering
    \includegraphics[width=0.95\linewidth]{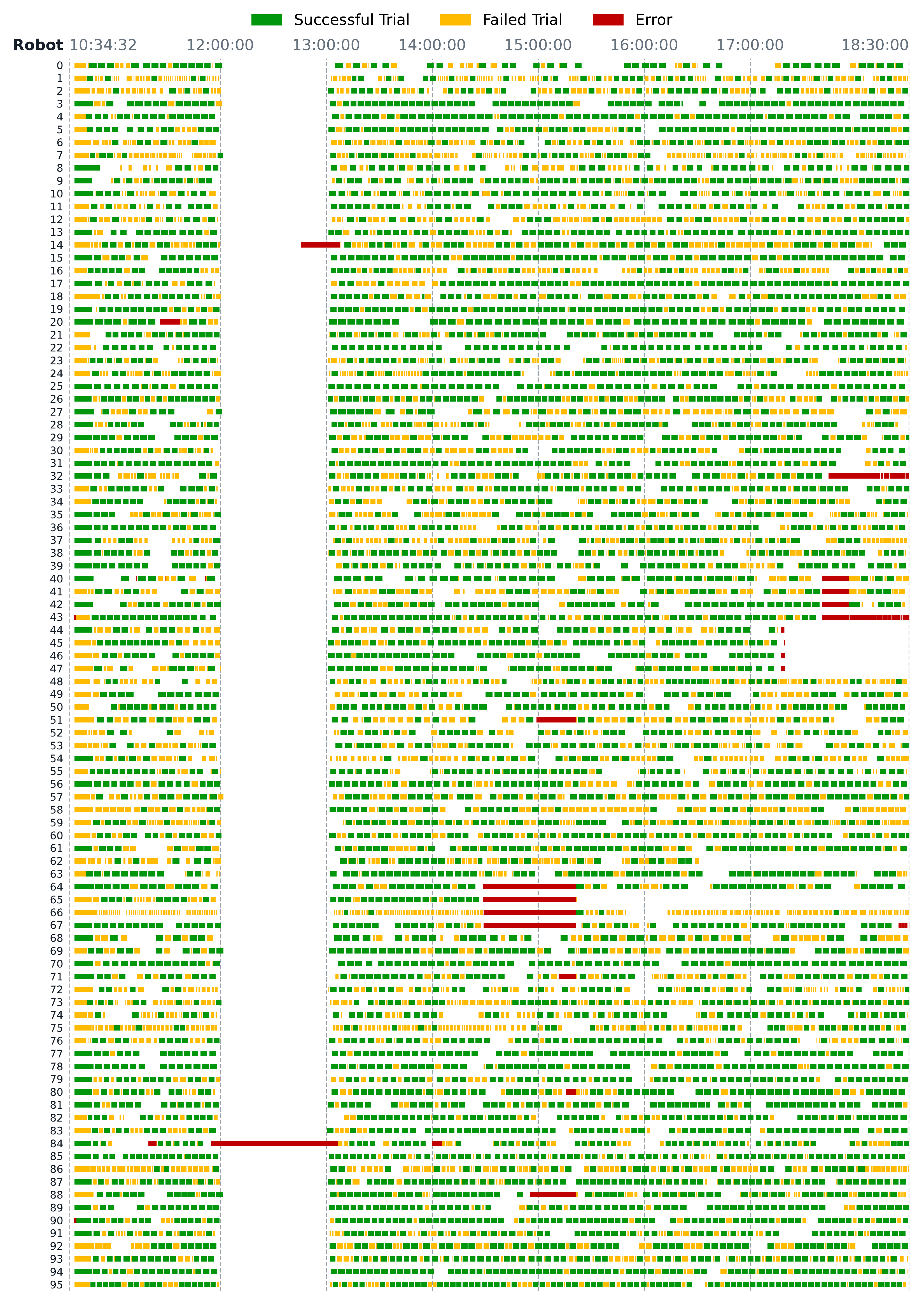}
    \caption{Detailed per-robot trial timelines during a 96-robot training run. Successful trials, failed trials and system errors are marked separately; blank intervals include environment resets and operator breaks.}
    \label{fig:episode-record}
\end{figure}

\begin{figure}
    \centering
    \includegraphics[width=\linewidth]{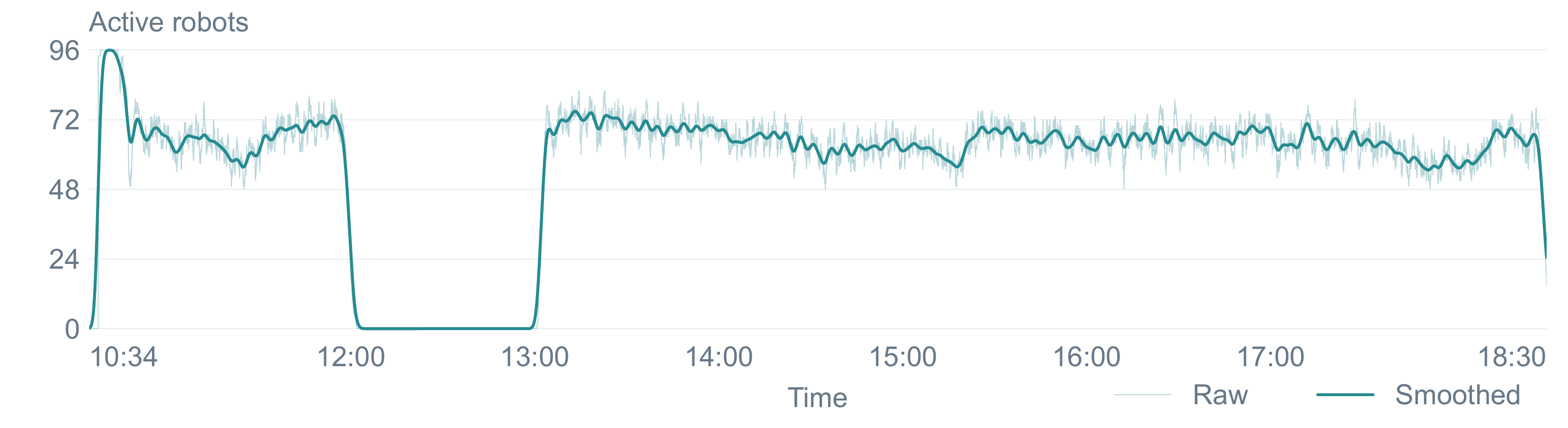}
    \caption{Active robot count during a 96-robot training run. A robot is active while executing actions; inactive intervals include environment resets, operator breaks and robot errors. Light curves show raw measurements, and dark curves show Gaussian-smoothed measurements ($\sigma = 60$~s).}
    \label{fig:exp-active}
\end{figure}

\begin{figure}
    \centering
    \includegraphics[width=\linewidth]{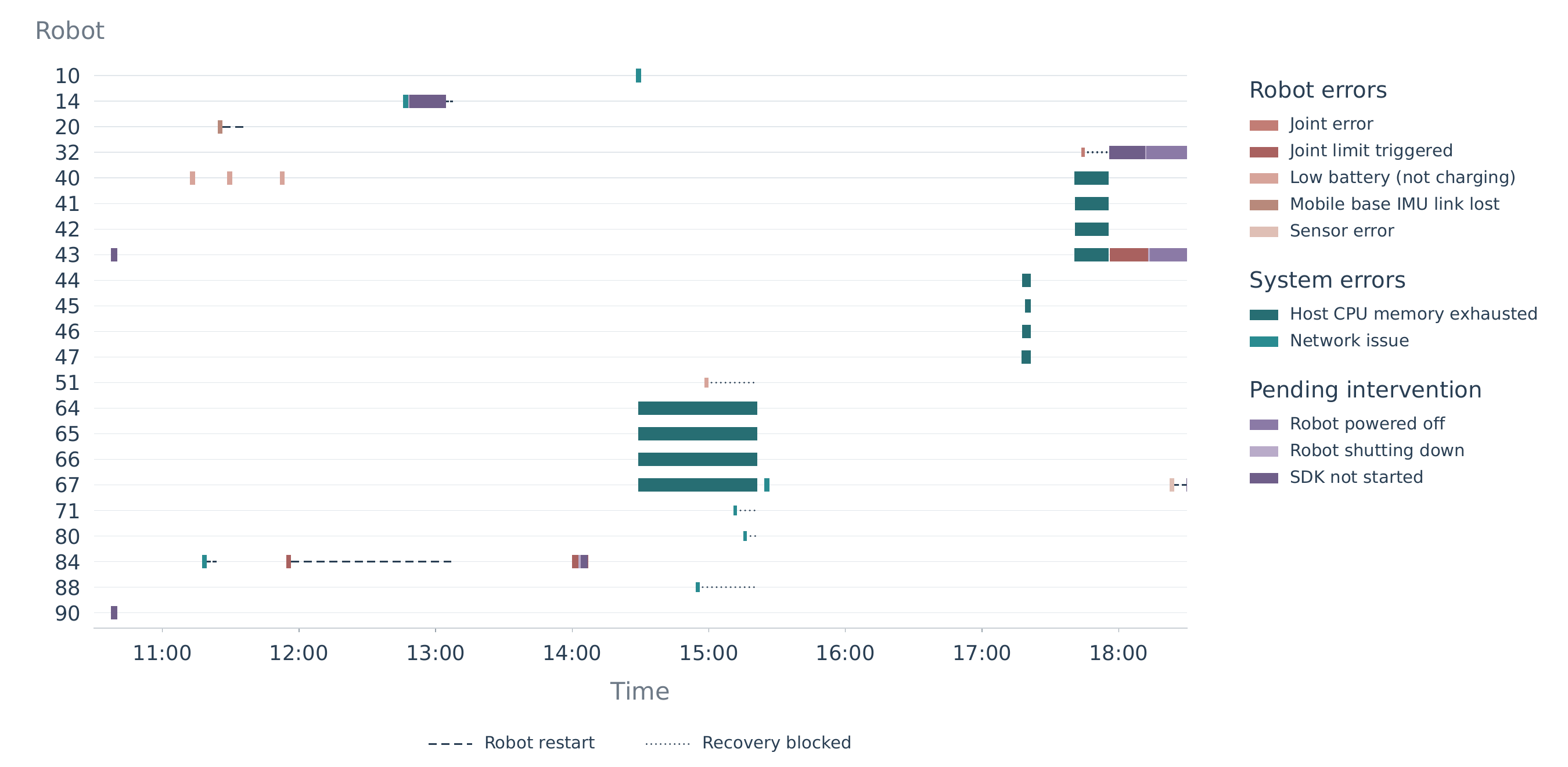}
    \caption{Error Types and Occurrence Times During a 96-Robot Training Run. The x-axis indicates time, and the y-axis lists the IDs of robots that encountered at least one error. Colors distinguish different error types.}
    \label{fig:error}
\end{figure}

\subsection{Visualization of failure-guided retrieval}
For each task, we provide a visual example of how FERT uses a failed execution to select related successful fleet experience. In each figure (\Cref{fig:FR-vis-flower}-\Cref{fig:FR-vis-phonepack}), failure contexts are shown against a red background, followed by retrieved successful segments against green backgrounds.

\begin{figure}
    \includegraphics[width=0.86\linewidth]{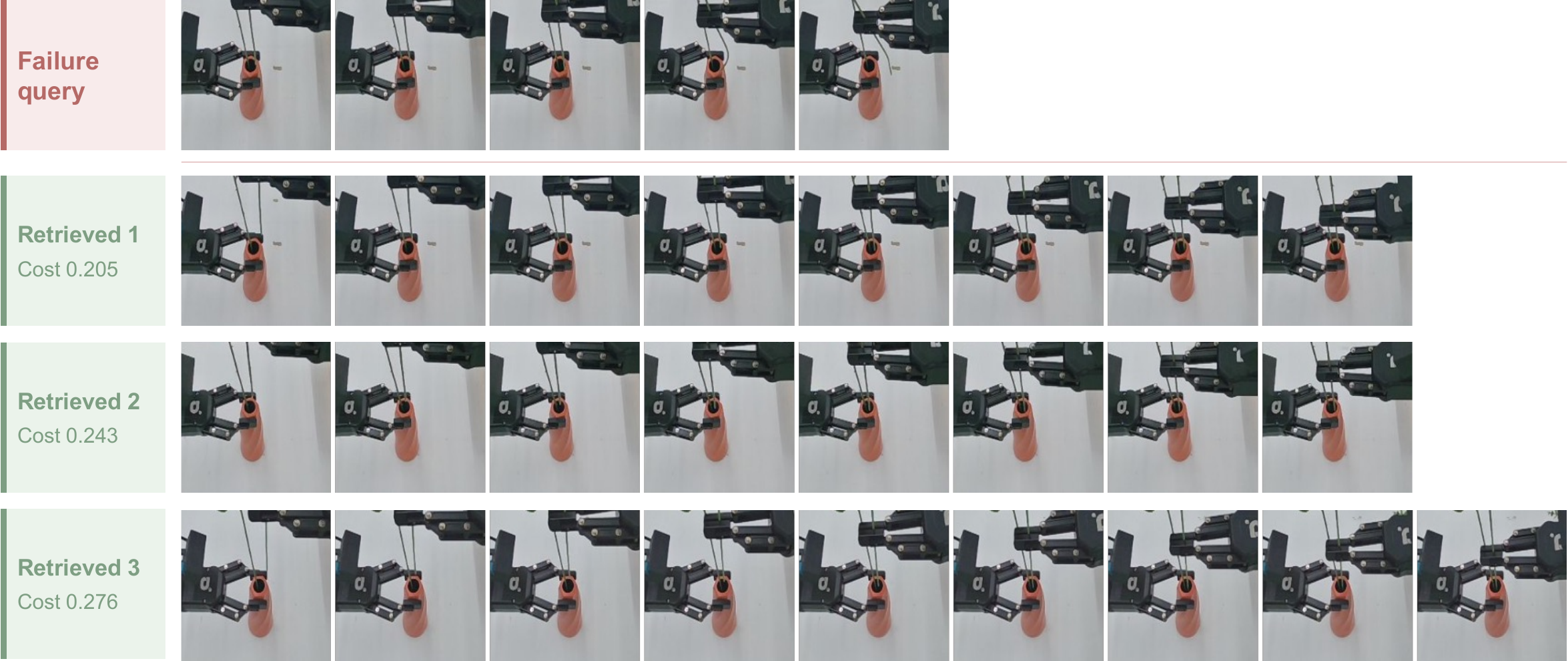}
    \caption{Visualize the retrieval effect in Flower Arrangement. The first row shows a clip of the robot failing to insert the second flower, while the next three rows show three matched successful clips. FERT effectively matched the clip of successfully inserting the second flower.}
    \label{fig:FR-vis-flower}
\end{figure}
\begin{figure}
    \includegraphics[width=0.68\linewidth]{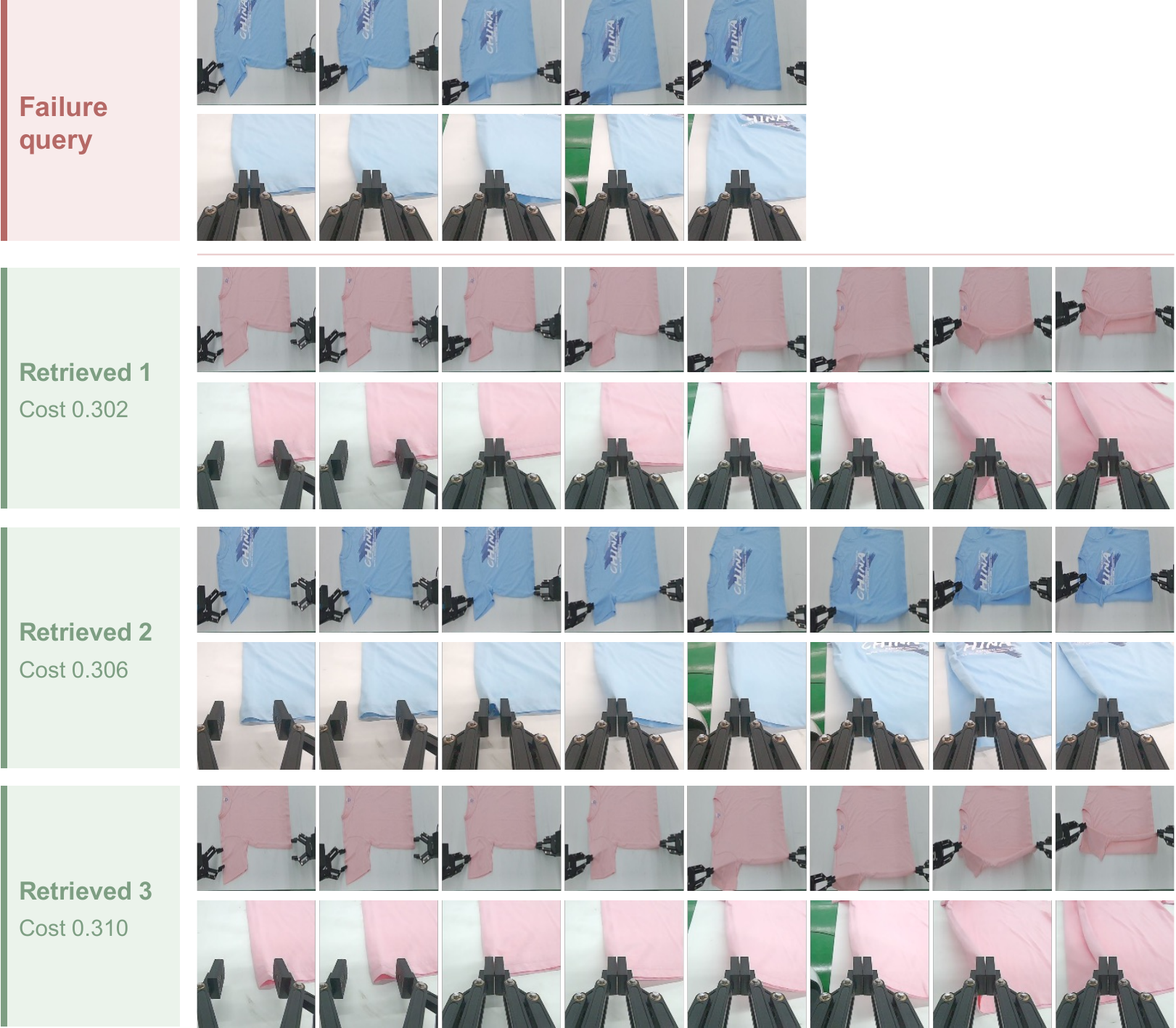}
    \caption{Visualize the retrieval effect in Clothes Folding. The top row shows a failure segment in which the robot attempts to fold a T-shirt in half but fails to grasp its right corner with the right gripper. The following rows below show the corresponding retrieved segments. With an appropriate feature representation, FERT retrieves the same critical step across clothes of different colors.}
    \label{fig:FR-vis-clothes}
\end{figure}
\begin{figure}
    \includegraphics[width=1\linewidth]{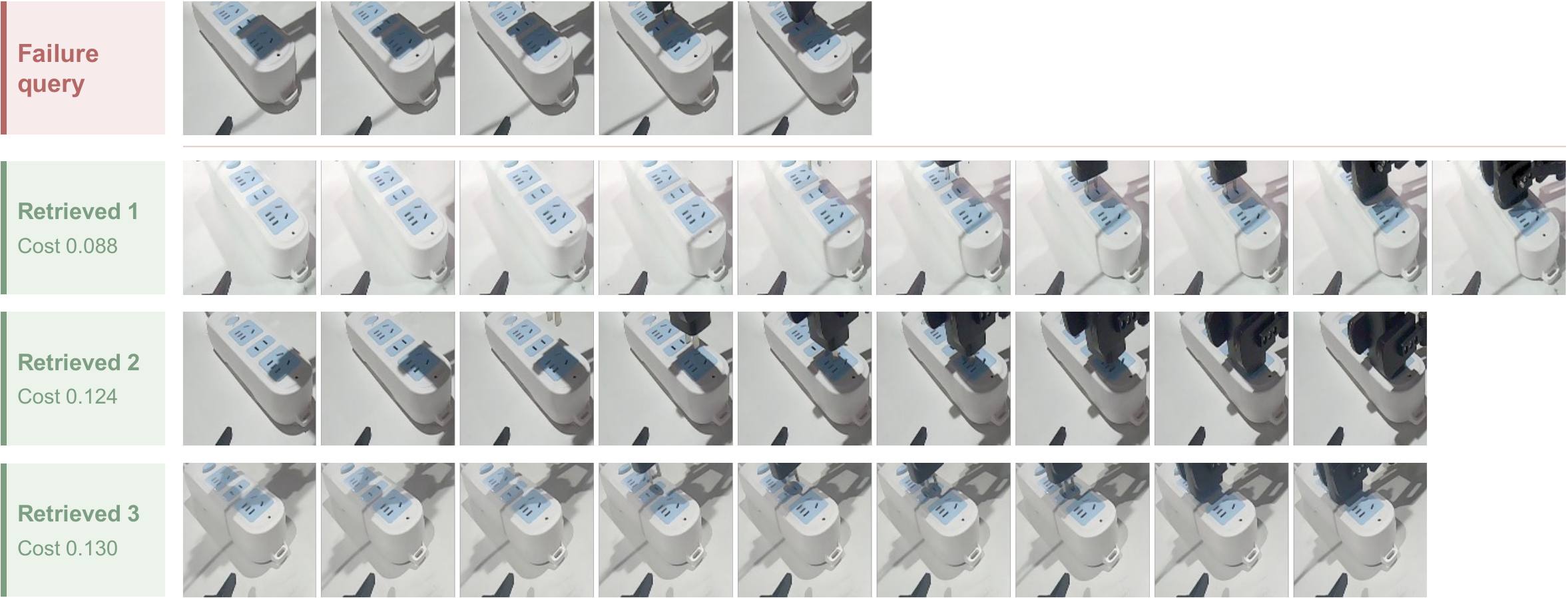}
    \caption{Visualize the retrieval effect in Phone Charging. It illustrates the most easily mistaken step in the phone charging: inserting the charger plug into the socket, which requires high precision.}
    \label{fig:FR-vis-charger}
\end{figure}
\begin{figure}
    \includegraphics[width=0.94\linewidth]{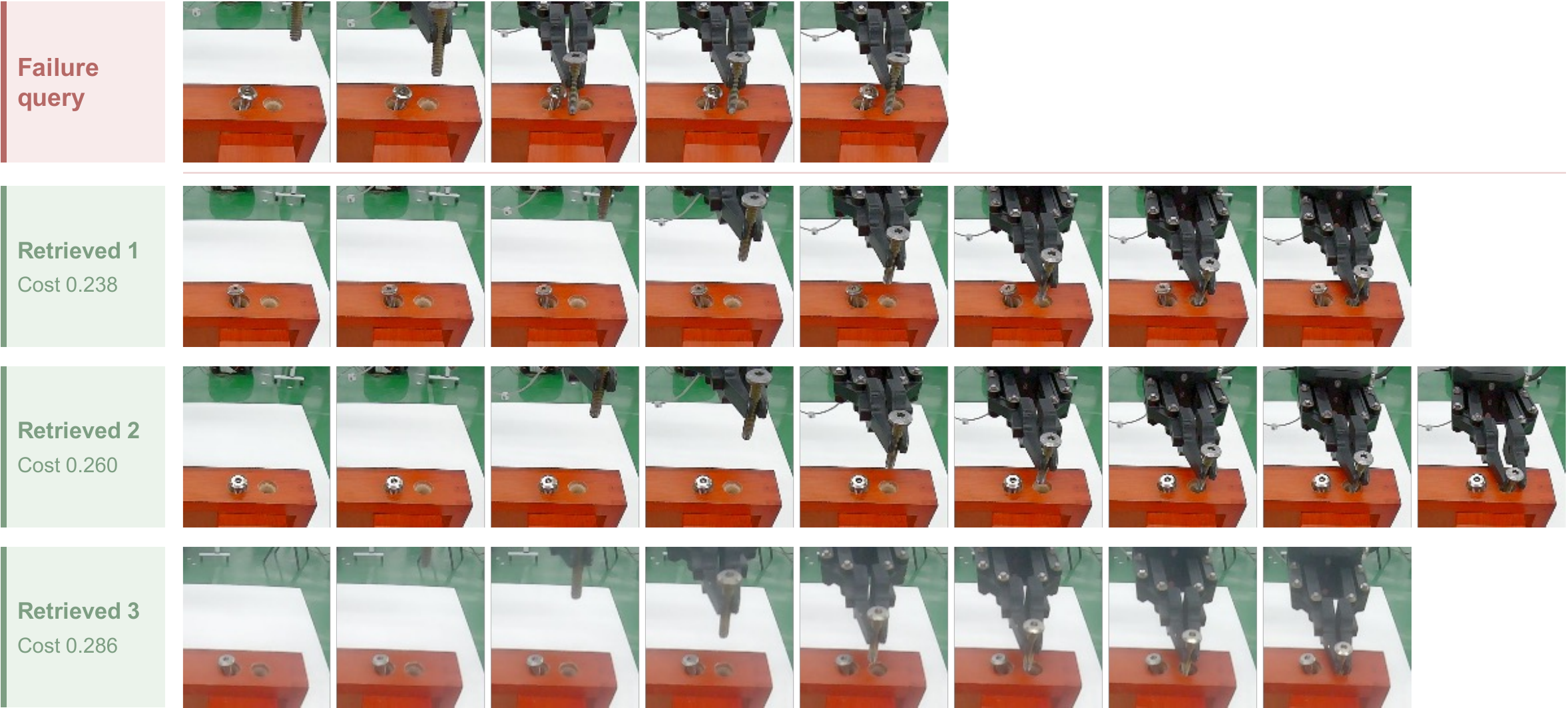}
    \caption{Visualize the retrieval effect in Stool Assembly. The figure shows a failed attempt to place a screw into a screw hole alongside the corresponding retrieved successful segments.}
    \label{fig:FR-vis-tooluse}
\end{figure}
\begin{figure}
    \includegraphics[width=0.75\linewidth]{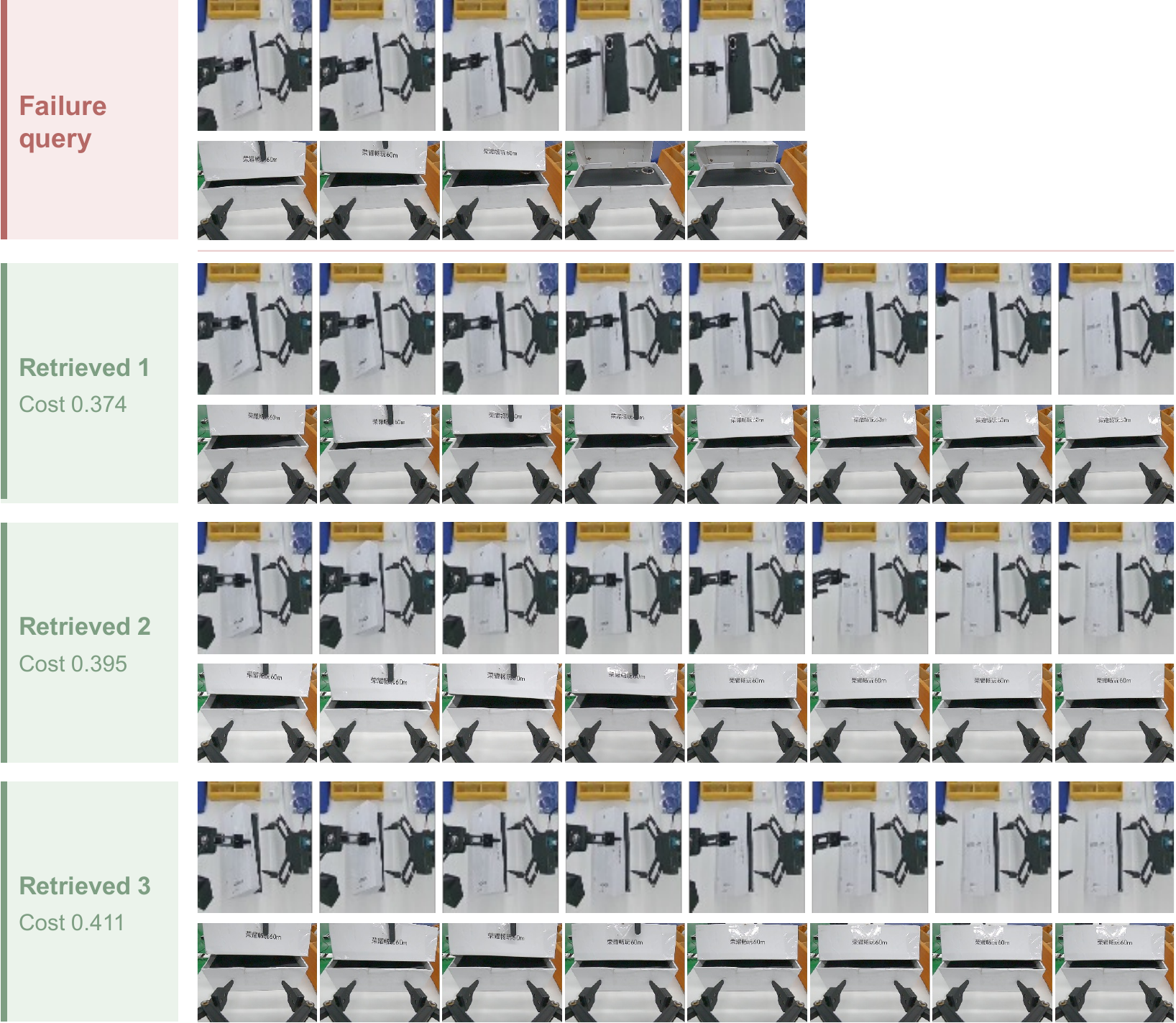}
    \caption{Visualize the retrieval effect in Phone Packaging. The figure illustrates the lid-placement stage: placing the lid over the box at an angle. This step requires high precision despite limited visibility. The top two rows show a failed attempt in which the left arm repeatedly adjusts the lid’s angle, but the lid still topples over. The three examples below show successful attempts.}
    \label{fig:FR-vis-phonepack}
\end{figure}

\end{document}